\newtheorem{theorem}{Theorem}
\newcommand{\BibTeX}{B\kern-.05em{\sc i\kern-.025em b}\kern-.08em\TeX}
\newcommand{\Mycomb}[2]{^{#1}C_{#2}}
\begin{document}


\begin{frontmatter}


\paperid{123} 


\title{Cooperative SGD with Dynamic Mixing Matrices}


\author[A,B]{\fnms{Soumya}~\snm{Sarkar}\orcid{0009-0001-8410-3124}}
\author[A]{\fnms{Shweta}~\snm{Jain}\orcid{0000-0002-2666-9058} \thanks{Corresponding Author. Email: shweta.jain@iitrpr.ac.in}}

\address[A]{Indian Institute of Technology Ropar}
\address[B]{Indian Institute of Science Bangalore}


\begin{abstract}
One of the most common methods to train machine learning algorithms today is the stochastic gradient descent (SGD). In a distributed setting, SGD-based algorithms have been shown to converge theoretically under specific circumstances. A substantial number of works in the distributed SGD setting assume a fixed topology for the edge devices. These papers also assume that the contribution of nodes to the global model is uniform. However, experiments have shown that such assumptions are suboptimal and a non uniform aggregation strategy coupled with a dynamically shifting topology and client selection can significantly improve the performance of such models. This paper details a unified framework that covers several Local-Update SGD-based distributed algorithms with dynamic topologies and provides improved or matching theoretical guarantees on convergence compared to existing work.
\end{abstract}

\end{frontmatter}


\section{Introduction}\label{sec:intro}
Stochastic Gradient Descent (SGD) is one of the most widely used training algorithms for machine learning models across various domains, primarily due to its effectiveness and simplicity. Beyond its practical advantages, SGD also serves as a fundamental framework for establishing theoretical convergence guarantees, reinforcing the reliability and stability of algorithms that incorporate it. These guarantees ensure that, given appropriate conditions, SGD will eventually converge to a stable solution. This makes it a valuable tool for researchers and practitioners, enabling them to apply SGD-based algorithms to real-world problems with greater confidence while understanding their limitations and behavior under different conditions.

State-of-the-art commercial models like GPT-4 are scaling their parameter counts by an order of magnitude each year, necessitating large-scale distributed clusters and specialized training paradigms. To efficiently process vast amounts of data, two primary approaches have been explored in the literature. The first approach, minibatch SGD, involves each client computing stochastic gradients on a large minibatch of its local data, aggregating gradients from several smaller local minibatches before applying a single SGD update. The second approach, known as Local SGD or Federated SGD \citep{Mcmahan2017, Stich2019}, has recently gained popularity as an effective distributed training method. Compared to minibatch SGD, it offers several advantages, including improved communication efficiency and robustness.

In Local SGD, each client independently runs SGD to optimize its local objective. During each communication round, a global server aggregates the model parameters obtained from all participating clients. Local SGD has gained popularity due to its ability to continuously improve models on local datasets while enhancing communication efficiency. Given these advantages, ongoing research focuses on establishing theoretical guarantees for Local SGD in distributed settings \citep{Gower2019,
Lin2018, Li2019}. These guarantees primarily aim to prove the convergence properties of various models proposed in the literature. For instance, \citet{Yu2019, Khaled2020} provide convergence guarantees for FedAvg, where, at each communication round, the server selects all clients, and the global model is obtained through simple averaging of their updates. Beyond simple averaging, more sophisticated aggregation techniques have been explored. For example, Elastic Averaging SGD \citep{Zhang2015} employs the Alternating Direction Method of Multipliers to improve parameter synchronization.

There have been attempts to create a generalized framework that covers several different variants of local-SGD, such as in \citet{Wang2021, Pu2021, Koloskova2020}. For example, \citet{Wang2021} captures different averaging techniques along with different communication strategies between the client in a distributed setting. However, \citet{Wang2021} takes two important assumptions in their convergence guarantees. First, they assume that if a client $i$ contributes by a factor of $w_{ij}$ to $j^{th}$ client model, then client $j$ also contributes $w_{ij}$ to $i^{th}$ client model. The contribution factors are represented by a mixing matrix ($W$), which is symmetric in \citet{Wang2021}. Their theoretical guarantees also work for only a static mixing matrix which does not change over a period of time. This paper relaxes these two assumptions and extend the theoretical guarantees to the setting of asymmetric and dynamic mixing matrices. We next show the need for such relaxations. \\

\begin{figure}[t]
    \centering
    \begin{subfigure}[b]{.22\textwidth}
        \includegraphics[width=\textwidth]{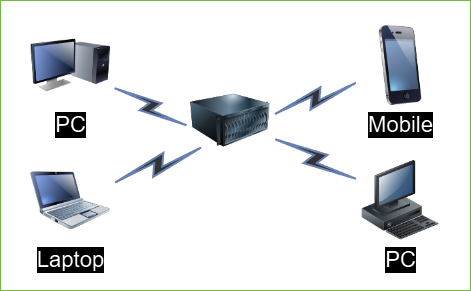}
        \subcaption{4-client FL Topology}\label{fig:W_ex_topo}
    \end{subfigure}\hfill
    \begin{subfigure}[b]{.22\textwidth}
        $\begin{bmatrix}
            \frac{\mathcal{D}_{1}}{\mathcal{D}} & \frac{\mathcal{D}_{1}}{\mathcal{D}} & \frac{\mathcal{D}_{1}}{\mathcal{D}} & \frac{\mathcal{D}_{1}}{\mathcal{D}}\\[0.2cm]
            \frac{\mathcal{D}_{2}}{\mathcal{D}} & \frac{\mathcal{D}_{2}}{\mathcal{D}} & \frac{\mathcal{D}_{2}}{\mathcal{D}} & \frac{\mathcal{D}_{2}}{\mathcal{D}}\\[0.2cm]
            \frac{\mathcal{D}_{3}}{\mathcal{D}} & \frac{\mathcal{D}_{3}}{\mathcal{D}} & \frac{\mathcal{D}_{3}}{\mathcal{D}} & \frac{\mathcal{D}_{3}}{\mathcal{D}}\\[0.2cm]
            \frac{\mathcal{D}_{4}}{\mathcal{D}} & \frac{\mathcal{D}_{4}}{\mathcal{D}} & \frac{\mathcal{D}_{4}}{\mathcal{D}} & \frac{\mathcal{D}_{4}}{\mathcal{D}}
        \end{bmatrix}$
        \subcaption{Corresponding W}
        \label{fig:W_ex_mat}
    \end{subfigure}
    \vspace{1em}
    \caption{An example along with it's corresponding mixing matrix.} 
    \vspace{2em}
\end{figure}


\noindent \textbf{Asymmetric Mixing Matrix}:\label{assym_MM}
The mixing matrix $W$ refers to a topology when aggregating the models. The element $w_{ij} \in W$ refers to the contribution of $i^{th}$ client on the model of client $j$. Usually, the models are aggregated using uniform averaging, where each client contributes equally, leading to a symmetric mixing matrix. However, consider FedAvg \citep{Mcmahan2017} aggregating rule:
\begin{align}\label{fedavg_mixing}
     F(x) = \sum_{i=1}^{m} \frac{|\mathcal{D}_{i}|}{|\mathcal{D}|}\mathbb{E}_{d\sim\mathcal{D}_{i}} \left[ f_{i}(x;d) \right]
\end{align}
Here, $\mathcal{D}_i$ and $\mathcal{D}$ represent the local dataset at client $i$ and global dataset respectively and $F_i(x) = \mathbb{E}_{d\sim\mathcal{D}_{i}}\left[ f_{i}(x;d) \right]$ denotes the local loss at client $i$. A simple example of the corresponding mixing matrix is shown in Figure (\ref{fig:W_ex_mat}) for the distributed topology shown in Figure (\ref{fig:W_ex_topo}). Here, $w_{ij} = \frac{|\mathcal{D}_i|}{|\mathcal{D}|} \ne w_{ji}$ when $|\mathcal{D}_i| \neq |\mathcal{D}_j|$.

There are many algorithms that utilize non-uniform aggregation strategies. These include \citet{Deng2021}, which considers training data quality; \citet{Hong2022} reduces information loss by manipulating the weight of each client; \citet{Bai2022} introduces parameter importance estimation; and FedDisco \citep{Tang2021} that rectify FedAvg aggregation strategy by non-uniform aggregation.\\ 

\noindent \textbf{Dynamic Mixing Matrices}:
Client selection is an effective way to reduce computational burden and communication overhead \citep{Li2022}, increase fairness \citep{Sultana2022}, increase model accuracy \citep{Lai2021}, as well as improve robustness \citep{Nguyen2020}. Building the mixing matrix when incorporating client selection usually involves assigning a contribution of zero to clients who are not selected. For example, in the scenario described in Figure \ref{fig:W_ex_topo}, if client $2$ and $4$ are selected in some round using a client selection algorithm like DivFL \citep{Balakrishnan2022} (which uses uniform model aggregation), we get the  mixing matrix $W$ with $w_{2,2}=w_{2,4}=w_{4,2}=w_{4,4} = 1/2$ and $w_{ij}=0$ for all other values of $i$ and $j$. 

In real-world scenarios, it may be imperative to do client selection after every round. This could be because the selected device may not be available for the entirety of the federated learning period \citep{Ruan2021}. Furthermore, doing client selection at every round gives improved performance over a static approach, especially when using algorithms that utilise client's gradients. Even with randomized client selection, where we select a fixed number of clients randomly, we get marked improvement, as demonstrated in Figure \ref{fig:csa_v_cso_MM}. In such a scenario, the mixing matrix keeps changing after each round, thereby showing that dynamic matrices are necessary.\\

\begin{figure}[t]
    \centering
    \begin{subfigure}[t]{.22\textwidth}
        \includegraphics[width=\textwidth]{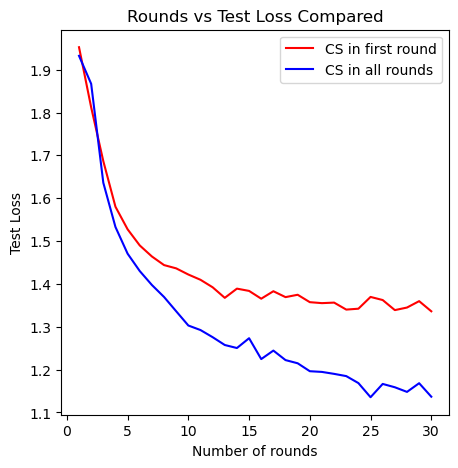}
        \subcaption{IID Scenario}\label{fig:csa_v_cso_MM_iid}
    \end{subfigure}\hfill
    \begin{subfigure}[t]{.22\textwidth}
        \includegraphics[width=\textwidth]{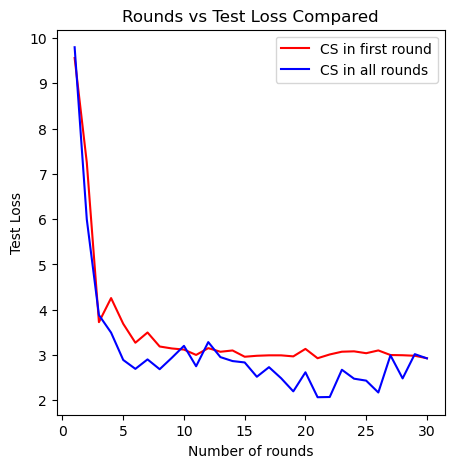}
        \subcaption{Non-IID Scenario}\label{fig:csa_v_cso_MM_niid}
    \end{subfigure}
    \vspace{1em}
    \caption{Comparison between two scenarios for 75 clients using FedAvg with CIFAR10 where 10 clients are randomly selected - one after every round (blue), and one only once at the start of federated learning training period (red). We see improvement in both the IID scenario (\ref{fig:csa_v_cso_MM_iid}) and the non-IID scenario (3-class distribution) (\ref{fig:csa_v_cso_MM_niid}), indicating using client selection at every round gives improved model.} \label{fig:csa_v_cso_MM}
    \vspace{2em}
\end{figure}

\noindent \textbf{Contributions}:
\begin{itemize}
    \item We provide a general framework for distributed SGD systems with non-convex loss functions, assyemtric and dynamic mixing matrices and client selection.
    \item We provide convergence guarantee of distributed systems in the most general framework with a time-varying topology.
    \item We compare with previous works and show them to be special cases of our framework. We also show that the convergence rates from previous works is matched or bettered by our work.
\end{itemize}


\section{Related Work}
There has been much work done when it comes to convergence analysis of distributed SGD systems. These are primarily based on two policies - asynchronous SGD or synchronous SGD. In asynchronous SGD policies such as the ones adopted by \citet{Dean2012, Chilimbi2014} each client independently communicates their calculated gradients to the central aggregating server. \citet{Lian2018} provides convergence guarantees for asynchronous SGD for non-convex functions but relies on specific topologies. \citet{Tosi2023}, on the other hand, gives a more general framework but only for the IID case. Even though asynchronous SGD is quite fast, it suffers from the stale gradient problem \citep{Tan2024} and does not scale very well \citep{Chen2017}. Therefore synchronous SGD \citep{Chen2017} is usually preferred over asynchronous SGD, and our analysis is, therefore, based on it.

Furthermore, there are many communication primitives that are used to synchronize model updation among clients. ALLREDUCE-SGD used in synchronous implementations \citep{Goyal2018} is a popular exact averaging technique for model weights to synchronize. PUSHSUM \citep{Kempe2003} has seen a recent rise in popularity as it offers some advantages in communication-constrained settings, although it tends to inject additional noise in the average gradient estimate \citep{Assran2019}. One framework which provides a novel analysis of SGD with PUSHSUM is by \citet{Assran2019}. They utilise dynamic column-stochastic mixing matrices, but use only some particular sequence of mixing matrices. Our work generalises to any SGD-based scenario (centralized and decentralized) as well as any sequence of mixing matrices. We also match their results as explained in section \ref{sec:Analysis}. Gossip algorithms such as \citet{Hu2019} also utilise the PUSHSUM primitive. In this work, we limit our analysis to the ALLREDUCE primitive
.

\citet{Pu2021} in their work utilize another primitive known as PUSH-PULL which uses asymmetric mixing matrices for achieving average consensus. However, their analysis is limited to convex loss functions. Furthermore, not only do they not consider dynamic mixing matrices, but the matrices themselves are constructed in a particular manner, reducing the generalizability of their framework.

In a federated learning context, \citet{Li2019} provide some convergence guarantees for vanilla FedAvg, but they assume convexity of the loss function.  Since then, there have been various methods that have tried to improve on FedAvg such as \citet{Machler2021,Wang2020,Li2020}, with each work providing convergence guarantees for their methods. \citet{Wang2022} generalizes centralized FedAvg convergence guarantees while incorporating client selection as well. On the other hand, there have been several works which have tried to argue that decentralized periodic SGD (D-PSGD) gives better results than their centralized counterparts. \citet{Lian2017}, for example, provide a framework for D-PSGD , showing it's superiority over centralized algorithms. They, however, assume static symmetric doubly-stochastic mixing matrices with a uniform averaging rule which, as seen in Section \ref{assym_MM}, is not a realistic or optimal choice. \citet{Koloskova2020} expands on this with a framework for decentralized algorithms which provides convergence guarantees for dynamic topologies by using some additional assumptions. However, they too assume symmetric and doubly stochastic mixing matrices.

Our work is based on \citet{Wang2021}, who provide a generalized framework for both centralized and decentralized distributed SGD algorithms. Our work relaxes their assumptions on the nature of the mixing matrix by making it both assymetric (only column/row stochastic) and dynamic, while also incorporating client selection.


\section{Notations}
We express the $m$-element column vector $\left[ 1, 1, 1, ... \right]^{T}$ as \textbf{1} and define the square matrix \boldmath $J = {11^{T}}/{1^{T}1}$ \unboldmath. \boldmath $J$ \unboldmath and \boldmath $I$ \unboldmath are of size $(m+v)\times (m+v)$ unless specified otherwise. We also utilise three norms - the operator norm denoted by \boldmath $||\cdot||_{op}$\unboldmath, the frobenius norm denoted by \boldmath $||\cdot||_{F}$ \unboldmath as well as the regular $l_{2}$ norm denoted by \boldmath $||\cdot||$ \unboldmath. The operator and frobenius norms are matrix norms defined as:
\begin{align*}
    ||A||_{F} = & \sqrt{Tr(A^{T}A)} = \sqrt{Tr(AA^{T})} = \sqrt{\sum_{i}^{m}\sum_{j}^{n}|a_{ij}|^{2}} \\
    ||A||_{op} = & \sup_{x\neq 0}\frac{||Ax||}{||x||} =  \sqrt{\lambda_{max}(A^{T}A)} = \sigma_{max}(A)
\end{align*}
Here $Tr()$ refers to the trace of a matrix; $\lambda_{max}$ and $\sigma_{max}$ refers to the largest eigenvalue and largest singular value of a matrix respectively.


\section{Problem Formulation}
Consider $m$ clients with some topology, of which $c$ fraction of clients are selected at each time. Let the selected set at iteration $k$ be denoted as $C_{k}$ and the mixing matrix is denoted by $W$ with $w_{ij}$ denoting the contribution of $i^{th}$ node to the aggregate for the model at node $j$. The value $w_{ij}$ will be $0$ when there is no connection between workers $i$ and $j$. The model parameters are denoted by $x \in \mathbb{R}^{d}$. We want to minimize the global empirical risk as follows:
\begin{align}
    \min_{x \in \mathbb{R}^{d}} F(x) = \min_{x \in \mathbb{R}^{d}} \frac{1}{m} \sum_{i=1}^{m} \mathbb{E}_{d\sim\mathcal{D}_{i}} \left[ f_{i}(x;d) \right]
\end{align}
where $f_{i}(\cdot)$, the loss function at each client, could be non-convex in nature. Here $\mathcal{D}_{i}$ represents the dataset distribution for client $i$ and let $F_i(x) = \mathbb{E}_{d\sim\mathcal{D}_{i}}\left[ f_{i}(x;d) \right]$ denotes the local loss at client $i$.


In a fully synchronous local SGD, after initializing with a randomized global model, we update the global model using the updates from all clients after each mini batch. The following update rule is used at the central server:
\begin{align}
    x_{k+1} = x_k  - \eta \left[ \frac{1}{m} \sum_{j=1}^{m} g_{i}(x_{k};\xi_{k}^{(i)}) \right]
\end{align}
where $\xi_{k}^{(i)}$ represents the mini-batch of $i^{th}$ client at the $k^{th}$ iteration, and $g_{i}(x,\xi) = \frac{1}{|\xi|}\sum_{s \in \xi}\nabla f_{i}(x;s)$ represent the average gradient of client $i$ with batch $\xi$ and model parameter $x$. For simplicity we will refer to this as $g_{i}(x)$ henceforth. For the above case, the contribution of one client to another is the same for all cases with $w_{ij}=\frac{1}{m}$.


If the models are aggregated after every $\tau$ iterations/mini batches (the communication period), then the update rule can be modified as:
\begin{align}
    x_{k+1}^{(i)} = 
    \begin{cases}
        \frac{1}{m} \sum_{j=1}^{m} \left[ x_{k}^{(j)} -  \eta g_{i}(x_{k}^{(j)}) \right] & {k\text{ mod }\tau=0}\\
         x_{k}^{(i)} -  \eta g_{i}(x_{k}^{(i)}) &\text{otherwise}
    \end{cases}
\end{align}
This is described in existing literature \citep{Wang2021} as Periodic Simple-Averaging SGD (PSASGD). Here too, $w_{ij}=\frac{1}{m}$ when aggregating.

If the models are updated in a decentralized manner after each mini-batch, the update rule is provided by:
\begin{align}
    x_{k+1}^{(i)} = \sum_{j=1}^{m} w_{ji} \left[ x_{k}^{(j)} - \eta g_{i}(x_{k}^{(i)}) \right]
\end{align}
This strategy is referred to as Decentralized SGD (D-PSGD) \citep{Wang2021} .

We also have Elastic Averaging SGD (EASGD) \citep{Zhang2015}. Here we utilize an auxillary variable $z_{k}$ which acts as an anchor during updation of local models. The update rule is given by:
\begin{align}
    &x_{k+1}^{(i)} = 
    \begin{cases}
         x_{k}^{(i)} -  \eta g_{i}(x_{k}^{(i)}) -  \alpha(x_{k}^{(i)} - z_{k}) \quad & {k\text{ mod }\tau=0}\\
         x_{k}^{(i)} -  \eta g_{i}(x_{k}^{(i)}) \quad &\text{otherwise}
    \end{cases}\\
    &z_{k+1} = 
    \begin{cases}
         (1-m\alpha)z_{k} + m\alpha \bar{x_{k}} \qquad \qquad & {k\text{ mod }\tau=0}\\
         z_{k} \qquad \qquad \qquad &\text{otherwise}
    \end{cases}
\end{align}
where $\bar{x_{k}} = \frac{1}{m}\sum_{i=1}^{m}x_{k}^{(i)}$.


\section{Proposed Framework: Cooperative SGD with Dynamic Mixing Matrices}
We take inspiration from the work by \citet{Wang2021} and extend their local SGD framework for distributed systems (named as Cooperative SGD) to the scenario of dynamic mixing matrices which need not be symmetric, incorporating client selection. This allows us to provide convergence guarantees even when we have disparity between the different clients and bias our model aggregation strategy to tackle the lack of uniformity. For example, relaxation of this assumption in a FedAvg setting implies that the dataset sizes can be different across different clients. We are able to provide a unified convergence analysis as is presented in Sections \ref{convergence_proof} and \ref{convergence_proof_niid}. Although we do not discuss about the elastic averaging SGD (EASGD) method explicitly, there exists a provision for use of auxillary variables in our work and would follow a similar analysis as \citet{Wang2021}.

We generalize the update rule for any method. At iteration $k$ each of the clients have their own models $x_{k}^{(1)}, x_{k}^{(2)}, ... x_{k}^{(m)} \in \mathbb{R}^{d}$. They may also use some auxillary variables $z_{k}^{(1)}, z_{k}^{(2)}, ... z_{k}^{(v)}$. The $w_{ij}^{(k)}$ form the mixing matrix $W_{k}$. For model averaging, a more general time-varying matrix $\mathcal{S}_{k}$ of size $(m+v)\times(m+v)$ is  defined as:
\begin{align*}
    \mathcal{S}_{k} =
    \begin{cases}
        W_{k} & k\text{ mod }\tau=0\\
        I & \text{Otherwise}
    \end{cases}
\end{align*}
We also define two matrices $X_{k}, G_{k} \in \mathbb{R}^{d\times(m+v)}$ as below:
\begin{align*}
    X_{k} = \left[ x_{k}^{(1)},...x_{k}^{(m)}, z_{k}^{(1)},..z_{k}^{(v)} \right], 
    G_{k} = \left[ g_{1}(x_{k}^{(1)}),...g_{m}(x_{k}^{(m)}), 0,..0 \right]
\end{align*}

For the above two matrices, we assume the models of clients not selected at iteration $k$ are zero i.e, $x_{k}^{(j)} = 0 \text{ and } g_{j}(x_{k}^{(j)}) = 0 \text{ } \forall j \notin C_{k}$. What this implies is that we assume that there is no computation done for the models not selected in a particular round. This is an efficient approach, as such computation would be an overhead which is better avoided. Even if computation is needed for client selection methods, not transmitting these values renders them as zero-valued.

A thing to note here is that when client selection is considered, it is necessary to ensure that both $X_{k}$ and $G_{k}$ have zero values for the unselected models, as well as to ensure the contribution of these models in the mixing matrix $W$ is zero. The former is necessary to ensure that we eventually get a smaller convergence error bound \footnote{It can be derived along similar lines to our current proof.}. The latter is compulsory as not doing so might break column stochasticity.

Generalizing the update rules of all the methods discussed before (as well as others), we write the update rule as:
\begin{align}\label{eq:update_rule}
    X_{k+1} = \left( X_{k} - \eta G_{k} \right)S_{k}^{T}
\end{align}
Here we consider the transpose of $S_{k}$ in contrast with the version proposed in \citet{Wang2021} so that we can define $W_{k}$ as a column-stochastic matrix, but use it in a row-stochastic manner. FedAvg \citep{Mcmahan2017} uses a column stochastic matrix, for example, as can be seen in Figure \ref{fig:W_ex_mat}. If $W_{k}$ is defined as row-stochastic, then we can directly use $S_{k}$. We also define the following for all mixing matrices $W_{k}$:
\begin{align*}
    W_{s}^{T}W_{s+1}^{T}W_{s+2}^{T}...W_{k}^{T} = \Phi_{s,k}^{T}
\end{align*}

\subsection{Update Rule for the Averaged Model}
After multiplying $1_{m+v}/(m+v)$  on both sides in \eqref{eq:update_rule}, $S_{k}^{T}$ disappears, and we get the update rule as:
\begin{align*}
    X_{k+1} \frac{1_{m+v}}{m+v}= X_{k}\frac{1_{m+v}}{m+v} - \eta G_{k}\frac{1_{m+v}}{m+v} 
\end{align*}
Taking the following quantities to represent average model and average learning rate respectively,
\begin{align*}
    u_{k} = X_{k}\frac{1_{m+v}}{m+v},\text{ } \eta_{eff} = \frac{cm}{m+v}\eta 
\end{align*}
We get the updated learning rule as:
\begin{align}\label{eq:update_rule_average}
    u_{k+1} = u_{k} - \eta_{eff}\left[ \frac{1}{cm}\sum_{i \in C_{k}} g_{i}(x_{k}^{(i)}) \right]
\end{align}
We try to find the averaged squared gradient norm as an indicator of convergence, as described by \citet{Bottou2018}. An $\epsilon$-suboptimal solution is attained after $K$ iterations if
\begin{align}
    \mathbb{E} \left[ \frac{1}{K}\sum_{k=1}^{K}||\nabla F(u_{k}) ||^{2} \right] \leq \epsilon
\end{align}
This guarantees convergence of the algorithm to some good minima. We next provide convergence guarantees of our framework.


\section{Unified Convergence Analysis for IID Scenario}\label{convergence_proof}
We define IID scenario as when the data distribution across all clients are nearly identical in nature, and all data samples are independent of each other. We consider the following assumptions that are standard across the literature \citep{Wang2021, Balakrishnan2022, Li2020, Arjevani2023}. 

\subsection{Assumptions}\label{asmp:iid}
\begin{enumerate}
    \item \textbf{Smoothness}: $||\nabla F_{i}(x) - \nabla F_{i}(y)|| \leq L||x-y||$
    \item \textbf{Lower Bounded}: $F(x) \geq F_{inf}$
    \item \textbf{Unbiased Gradients}: $\mathbb{E}_{\xi|x}[g_{i}(x)] = \nabla F(x)$
    \item \textbf{Bounded Variance}: $\mathbb{E}_{\xi|x}||g_{i}(x)-\nabla F(x)||^{2} \leq \sigma^{2}$\newline
    where $\sigma$ is a non-negative constant and in inverse proportion to mini-batch size
    \item \textbf{Mixing Matrix}:
         $W_{k}^{T}1_{m+v} = 1_{m+v}$ (column stochasticity)
    \item For all rounds, fraction of clients selected $c$ is fixed.
\end{enumerate}
Furthermore, we relax assumption 5 from \citet{Wang2021}, which used doubly stochastic (both row and column stochastic) and symmetric mixing matrices and consider only row-stochastic matrix. 
Having a fixed number of clients every round is a simplifying assumption which reduces compute power by removing the need to recalculate the number of clients needed each round, and is also standard in existing literature \citep{Reisizadeh2020, Luo2023}.

\subsection{Convergence Bounds for IID Case} 

    \begin{theorem}[]\label{thm:iid-conv}
        For algorithm $\mathcal{A}(\tau, W_{k}, v)$ that satisfies the update rule in equation \ref{eq:update_rule}, let the total number of iterations be $K$ and out of $m$ clients a fraction $c$ is selected. Under assumptions 1-6, if learning rate satisfies
            $\eta_{eff}L \leq 1$
        and all local models are initialized at the same point $u_{1}$, then the average squared gradient norm after K iterations is $\epsilon_{IID}$-suboptimal, where
        \begin{align*}
             \epsilon_{IID} = 
             4\left[
             \begin{aligned}
                 &\frac{2[ F(u_{1}) - F_{inf}]}{\eta_{eff}K} + \frac{\eta_{eff}L\sigma^{2}}{cm}\nonumber+ \frac{\delta L^{2}}{cm}\left|\left|X_{1}\right|\right|^{2}_{F} \\&+  \eta^{2}\sigma^{2}L^{2}\delta (K-1)
             \end{aligned}
             \right]
        \end{align*}
        where $X_{1} = \left[ x_{1}^{(1)},...x_{1}^{(m)}, z_{1}^{(1)},..z_{1}^{(v)} \right]$ represents the $m$ initial models $x_{i}^{(p)}$ and $v$ auxillary variables $z_{i}^{(p)}$, $\delta \in [0, c(m+v-1)]$, and we have
        \begin{align*}
            0 \leq P \leq \min\left(\frac{1}{6},\frac{1}{6L^{2}+3}, \frac{c}{6L^{2}}\right)
        \end{align*}
        where $P = \eta^{2}\delta\tau
            \left[2\tau S_{series} + (\tau-1)(1 +K/\tau)\right]$
         and $S_{series} = (K/\tau - 1) \left[ 2  +\frac{K}{2\tau}\right]$.
    \end{theorem}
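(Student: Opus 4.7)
The plan is to follow the descent-lemma plus consensus-error template used in Wang et al.\ (2021), but reworked so every step only uses column stochasticity and the product $\Phi_{s,k}^T = W_s^T W_{s+1}^T \cdots W_k^T$ rather than double stochasticity or a fixed $W$. First I would apply $L$-smoothness of $F$ to $u_{k+1} - u_k$ and substitute the averaged update rule \eqref{eq:update_rule_average}; taking conditional expectation with the unbiased-gradient and bounded-variance assumptions converts the linear term into $-\eta_{eff}\langle \nabla F(u_k), \tfrac{1}{cm}\sum_{i\in C_k}\nabla F_i(x_k^{(i)})\rangle$ and the quadratic term into a standard $\eta_{eff}^2 L \sigma^2/(cm)$ noise piece plus a squared-expected-gradient term. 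Using the identity $2\langle a,b\rangle = \|a\|^2 + \|b\|^2 - \|a-b\|^2$ on the linear term will isolate $\|\nabla F(u_k)\|^2$ together with a "network discrepancy" residual $\tfrac{1}{cm}\sum_{i\in C_k}\|\nabla F_i(u_k) - \nabla F_i(x_k^{(i)})\|^2$, which by smoothness is bounded by $L^2 \cdot \tfrac{1}{cm}\sum_{i\in C_k}\|u_k - x_k^{(i)}\|^2$.

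The central object to control is therefore the consensus error $Q_k := \mathbb{E}\,\|X_k(I - J)\|_F^2$. From \eqref{eq:update_rule} one gets the matrix identity
\begin{align*}
X_k(I-J) = X_1 \Phi_{1,k-1}^T(I-J) - \eta \sum_{s=1}^{k-1} G_s \Phi_{s,k-1}^T (I - J),
\end{align*}
so after squaring, taking expectations, and invoking independence of stochastic gradients across iterations together with bounded variance, $Q_k$ splits into an initialization term proportional to $\|X_1\|_F^2 \cdot \|\Phi_{1,k-1}^T(I-J)\|_{op}^2$ and a gradient-noise term $\eta^2 \sigma^2 \sum_s \|\Phi_{s,k-1}^T(I-J)\|_{op}^2$. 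Here the constant $\delta$ should be defined as an upper bound on $\sum_{s}\|\Phi_{s,k}^T(I-J)\|_{op}^2/(\text{appropriate normalizer})$, replacing the spectral-gap factor $\zeta$ of the static symmetric case; column stochasticity guarantees $\Phi_{s,k}^T 1 = 1$, so $(I-J)$ annihilates the mean component and each operator norm is finite, with $\delta \in [0, c(m+v-1)]$ arising from worst-case column-stochastic products combined with the client-selection factor $c$.

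Next I would feed the bound on $Q_k$ back into the descent inequality and sum over $k = 1, \dots, K$. Telescoping $F(u_{k+1}) - F(u_k)$ against $F(u_1) - F_{\inf}$ yields
\begin{align*}
\frac{1}{K}\sum_{k=1}^{K}\mathbb{E}\|\nabla F(u_k)\|^2 \leq \frac{2[F(u_1)-F_{\inf}]}{\eta_{eff} K} + \frac{\eta_{eff} L \sigma^2}{cm} + \frac{L^2}{cmK}\sum_{k=1}^{K} Q_k,
\end{align*}
after which the consensus-error bound contributes the $\delta L^2 \|X_1\|_F^2/(cm)$ and $\eta^2 \sigma^2 L^2 \delta (K-1)$ terms. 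The stated constraint $\eta_{eff} L \leq 1$ is what absorbs the quadratic $\mathbb{E}\|\tfrac{1}{cm}\sum_{i\in C_k}\nabla F_i(x_k^{(i)})\|^2$ term into the LHS coefficient, and the $P$ constraint comes from requiring that coefficients multiplying $\sum_k \mathbb{E}\|\nabla F(u_k)\|^2$ on the right-hand side (coming from the smoothness bound applied to the $\|u_k - x_k^{(i)}\|^2$ expansion over a $\tau$-length local phase) remain strictly less than the coefficient on the LHS, with the values $1/6$, $1/(6L^2+3)$ and $c/(6L^2)$ arising from the different terms (client-selection, local-step accumulation, and squared-gradient feedback); bookkeeping the local-step index $k \bmod \tau$ gives the $S_{series}$ expression.

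The principal obstacle will be the consensus-error step. For a fixed doubly stochastic symmetric $W$ the product $\Phi_{s,k}^T(I-J)$ collapses to $(W-J)^{k-s+1}$ with a clean geometric rate $\zeta^{k-s}$. Here neither doubly-stochasticity nor symmetry nor a single $W$ is available, so I would need a careful operator-norm inequality of the form $\|\Phi_{s,k}^T (I - J)\|_{op} \leq \prod_{t} \rho_t$ in terms of per-matrix contraction constants, followed by a worst-case summation producing the uniform surrogate $\delta$. Incorporating client selection (zero rows/columns in $W_k$ for clients outside $C_k$) requires verifying that column stochasticity is preserved and that the "zeroed" entries of $X_k$ and $G_k$ do not introduce bias after multiplication by $1/(m+v)$; this is precisely why the factor $c$ shows up in $\eta_{eff}$ and in the allowable range of $\delta$. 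Once that contraction bound is obtained, the remaining algebra is a routine but tedious rearrangement.
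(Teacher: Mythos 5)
Your overall template — descent lemma, consensus-error recursion via $X_k(I-J)=X_1\Phi_{1,k-1}^T(I-J)-\eta\sum_s G_s\Phi_{s,k-1}^T(I-J)$, then a self-bounding argument that moves a $P$-weighted gradient sum to the left-hand side — is the same as the paper's. But there are two concrete gaps. First, your decomposition of the consensus error into only an initialization term and a $\sigma^2$-noise term is wrong as stated: the cross terms $\langle G_s\Phi_{s,k-1}^T(I-J),\,G_{s'}\Phi_{s',k-1}^T(I-J)\rangle$ do not vanish in expectation because $G_s$ is not zero-mean. You must first center, writing $G_s=(G_s-\nabla F(X_s))+\nabla F(X_s)$; only the centered parts enjoy vanishing cross terms and yield the $\eta^2\sigma^2 cm\,\delta\, K(K-1)$ contribution, while the systematic part $\sum_s\nabla F(X_s)\Phi_{s,k-1}^T(I-J)$ survives and is precisely what generates the term $P\sum_k\mathbb{E}\|\nabla F(X_k)\|_F^2$ with $P=\eta^2\delta\tau[2\tau S_{series}+(\tau-1)(1+K/\tau)]$. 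Your proposal later invokes a feedback term to justify the $P$ constraint but attributes it to the smoothness expansion of $\|u_k-x_k^{(i)}\|^2$, which is not where it comes from; as written the plan is internally inconsistent about the origin of $P$ and $S_{series}$, and without the bias term you would never produce them.

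Second, your plan for the key lemma — an operator-norm contraction $\|\Phi_{s,k}^T(I-J)\|_{op}\le\prod_t\rho_t$ with per-matrix constants — fails for merely column-stochastic, asymmetric $W_t$: one can have $\|W_t^T-J\|_{op}>1$ (e.g.\ a column-stochastic matrix concentrating all mass on one row gives $\|W^T-J\|_{op}=\sqrt{2}$ for $m=3$), so the product of per-matrix bounds can grow exponentially and yields nothing. The paper's Lemma 8 sidesteps this by exploiting that products of column-stochastic matrices are themselves column-stochastic, and then bounding $\|\Phi_{s,k}^T(I-J)\|_F^2$ of the \emph{product matrix} directly by an element-wise computation on its column sums, giving the uniform (non-decaying) bound $\delta=c(m+v-1)(1-(m+v)^2t^{(1)}t^{(2)})\in[0,c(m+v-1)]$. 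This uniformity is also why the final bound carries $\delta(K-1)$ rather than a geometric-series constant. You correctly flag this step as the principal obstacle, but the route you sketch for it is the one that does not go through.
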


\begin{proof} Check supplementary material (Section \ref{IID-Theorem1}). \end{proof}

\subsection{Optimized learning rate, IID case}

    \textbf{Corollary 1.} \textit{For algorithm $\mathcal{A}(\tau, W_{k}, v)$ that satisfies the update rule in equation \ref{eq:update_rule}, under assumptions 1-5, if learning rate is $\eta = \frac{m+v}{Lcm}\sqrt{\frac{cm}{K^{2}}}$, all local models are initialized as zero and hyperparameters satisfy $K \geq \mathcal{O}\left(\max(\tau, \delta m \sqrt{\frac{m}{c}}) \right)$, then the average squared gradient norm after K iterations is bounded by
    \begin{align}
        \mathbb{E}\left[ \frac{1}{K}\sum_{k=1}^{K}||\nabla F(u_{k})||^{2} \right] &\leq \mathcal{O}\left( \frac{1}{\sqrt{cm}} \right) + \mathcal{O}\left( \frac{m\delta}{cK} \right)
    \end{align}\label{thm:cor1}}

\begin{proof} Check supplementary material (Section \ref{Corollary1}) \end{proof}

\subsection{Some Comments About the IID Case}

We provide a comparison of our convergence guarantees with \citet{Wang2021} (ignoring leading factors) in Table \ref{tab:wang_comp}.

\noindent \textbf{Initialization Error}: As we can see, our convergence analysis leads to an additional $\frac{\delta L^{2}}{Kcm}\left|\left|X_{1}\right|\right|^{2}_{F}$ term in both the IID and non-IID case. This is an error term that arises due to the values of initial model weights of all the clients, which we refer to as \textit{initialization error}. Unlike \citet{Wang2021}, this term persists in our case due to the lack of symmetry for the mixing matrix. Although it can be said to be a result of non-uniform aggregation strategies, we note that because of this term the error bound is directly proportional to the size of the initial models. This is in keeping with common wisdom which states that it is better to initialize deep learning models with small parameter values to avoid the problem of gradient explosion. However, another contributing factor here is $F(u_{1})$, the starting point for the average model on the loss surface in the first iteration. If we use tiny parameter values, this term might get too large, negating the benefits achieved from reducing the initialization error. So, there is a tradeoff involved when it comes to model initialization. This tradeoff is showcased in Figure \ref{fig:initcomp_psasgd} for PSASGD.\\

\begin{table}[t]
    \centering
    \begin{tabular}{|c|c|c|}
    \hline
         IID & \citet{Wang2021} & $\frac{2[ F(u_{1}) - F_{inf}]}{\eta_{eff}K} + \frac{\eta_{eff}L\sigma^{2}}{m}   $\\ & & $+\eta^{2}\sigma^{2}L^{2}\left[ \frac{1+\varsigma^{2}}{1-\varsigma^{2}}\tau -1 \right] $ \\
    \hline
         IID & Ours & 
             $
                 \frac{2[ F(u_{1}) - F_{inf}]}{\eta_{eff}K} + \frac{\eta_{eff}L\sigma^{2}}{cm} $\\& & $+   \eta^{2}\sigma^{2}L^{2}\delta (K-1)$ \\ & & $+\frac{\delta L^{2}}{Kcm}\left|\left|X_{1}\right|\right|^{2}_{F}
                $\\[0.2em]
    \hline
         Non-IID & \citet{Wang2021} &  $\epsilon_{IID} +C_{2}\kappa^{2}$\\
    \hline
         Non-IID & Ours & $\epsilon_{IID} + 12PL^{2}\kappa^{2}$\\
    \hline
    \end{tabular}
    \caption{Error term $\epsilon$ Comparison (Excluding leading factors)}
    \label{tab:wang_comp}
\end{table}



\noindent \textbf{Aggregation Strategy}: We also get a $\delta(K-1)$ term in place of the eigenvalue-based constant $\frac{1+\varsigma^{2}}{1-\varsigma^{2}}\tau$ where $\varsigma = \max(|\lambda_{2}|, |\lambda_{m}|) < 1$ is the second largest eigenvalue of mixing matrix $W_{k}$. The term $\delta$ is dependent on $W_{k}$ and is equal to $c(m+v-1)(1-(m+v)^{2}t_{1}t_{2})$, where $t_{1}t_{2}$ refers to the smallest multiplicative pair among all such pairs where both elements come from the same column of $W_{k}$.
When we have $\delta \in [0, 1]$, we can replicate the scenarios covered by \citet{Wang2021} and provide analysis for them. We can prove that in such a scenario, if we satisfy $\tau > \frac{1-\varsigma^{2}}{2\varsigma^{2}}$, then we get a tighter convergence bound. This is a very easy condition to satisfy - when $\tau = 1$, we need $\varsigma > \frac{1}{\sqrt{3}}$ to satisfy this criterion. This is quite a small value for what is the second largest eigenvalue by magnitude. For larger values of $\tau$ we get an even lower criterion for $\varsigma$. One easy way to achieve such a $\delta$ value is to simply ensure\footnote{This condition comes from keeping $c(m+v-1)(1-(m+v)^{2}t_{1}t_{2}) \leq 1$} product of the two smallest values in the mixing matrix is greater than $\frac{c}{m+v}$. If the number of clients is large, this can be easily achieved. 
In a more general setting, we get a higher bound on the $\delta$ value, and hence the overall error, due to the fact that various aggregation strategies are possible under our framework, and hence we are unable to say much about them. If we adopt a non-uniform aggregation strategy assigning smaller weights to some clients' models and larger weights to others, we obtain a higher value of $\delta$ compared to a strategy closer to uniform aggregation. In the extreme case where we simply ignore some clients (even if they are selected for the round), we get $\delta = c(m+v-1)$. In general, we see that the convergence error has a direct relation to $\delta$. This implies that aggregation strategies closer to the uniform scenario would give better results. If a fully uniform averaging strategy is used where we average all selected models in the same proportion $\frac{1}{cm}$ to form the global model, then $\delta=0$ leading to a lower error value as per Theorem 1.\\


\noindent \textbf{Client Selection}:
Another thing to note is that there is an inverse relationship of convergence with the fraction of clients selected $c$. This is in keeping with the observation that more number of clients leads to a better convergence. This, however, may lead to a larger value of $\delta$, implying that there is a tradeoff involved for the value of c. This tradeoff is in keeping with previous works which imply that not selecting all clients may lead to better convergence, provided we don't select too few \citep{Lai2021}. We also get a lower bound on the fraction of clients that need to be selected. This is useful as it indicates the minimum number of clients needed by the global model to converge.\\

\noindent \textbf{Dependence on $\tau$}:
Our analysis also shows that the convergence error is independent from the communication period $\tau$ for sufficiently large values of $\delta$. This implies that the convergence error would not change even if we increase the number of local iterations before aggregating. Thus, we can reduce the communication overhead without affecting convergence. We validate this in Section \ref{sec:exp} (Figure \ref{fig:taucomp_psasgd}).



\section{Unified Convergence Analysis for Non-IID Scenario}\label{convergence_proof_niid}

\subsection{Assumptions}
For non-IID case, we have all the basic assumptions that were considered in IID case with a few changes. We consider the fact that the stochastic gradient at each client is not an unbiased estimator of the global gradient, which leads to changes in two assumptions (Assumption 3 and 4) and the addition of one assumption. The changed assumption along with new assumptions are:
\begin{enumerate}
    \item[3.] \textbf{Unbiased Gradients}: $\mathbb{E}_{\xi|x}[g_{i}(x)] = \nabla F_{i}(x)$
    \item[4.] \textbf{Bounded Variance}: $\mathbb{E}_{\xi|x}||g_{i}(x)-\nabla F_{i}(x)||^{2} \leq  \sigma^{2}$\newline
    where $\sigma$ is some non-negative constant and in inverse proportion to batch size
    \item[7.] \textbf{Bounded Dissimilarities}: $\frac{1}{m}\sum_{i=1}^{m}||\nabla F_{i}(x)-\nabla F(x)||^{2} \leq  \kappa^{2}$\newline
    where $\kappa$ is some non-negative constant
\end{enumerate}

\subsection{Convergence Bounds for Non-IID Case} 

    \begin{theorem}[]\label{thm:niid-conv}
        For algorithm $\mathcal{A}(\tau, W_{k}, v)$ that satisfies the update rule in equation \ref{eq:update_rule}, let the total number of iterations be $K$, and out of $m$ clients a fraction $c$ is selected. Under assumptions 1-7 for non-IID scenario, if learning rate satisfies
            $\eta_{eff}L \leq 1$
        and all local models are initialized at the same point $u_{1}$, then the average squared gradient norm after K iterations is $\epsilon_{NIID}$-suboptimal, where
        \begin{align}
            \epsilon_{NIID} =  \epsilon_{IID} +  12PL^{2}\kappa^{2}
        \end{align}
        where $\epsilon_{IID}$ is the $\epsilon$ bound in the IID case, and we have:
        \begin{align}
            0 \leq P \leq \min\left(\frac{1}{6},\frac{1}{6L^{2}+3}, \frac{c}{6L^{2}}\right)
        \end{align}
        where $P = \eta^{2}\delta\tau
            \left[2\tau S_{series} + (\tau-1)(1 +K/\tau)\right]$, $S_{series} = (K/\tau - 1) \left[ 2  +\frac{K}{2\tau}\right]$ and $\delta \in [0, c(m+v-1)]$.
    \end{theorem}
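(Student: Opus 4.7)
The plan is to follow the same skeleton as the proof of Theorem \ref{thm:iid-conv}, isolating the single place where the non-IID assumptions inject a $\kappa^{2}$ contribution and then propagating that contribution through the existing bookkeeping to obtain the extra $12PL^{2}\kappa^{2}$.

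First I would start, exactly as in the IID case, from the $L$-smoothness descent inequality applied to the averaged iterate $u_{k}$ together with the aggregated update rule \eqref{eq:update_rule_average}. Taking conditional expectation over the mini-batches $\xi_{k}^{(i)}$ and the client-selection set $C_{k}$ yields the standard bound
\begin{align*}
\mathbb{E}[F(u_{k+1})] \le \mathbb{E}[F(u_{k})] - \eta_{eff}\,\mathbb{E}\!\left\langle \nabla F(u_{k}),\,\tfrac{1}{cm}\!\sum_{i\in C_{k}}\! \nabla F_{i}(x_{k}^{(i)})\right\rangle + \tfrac{L\eta_{eff}^{2}}{2}\,\mathbb{E}\!\left\|\tfrac{1}{cm}\!\sum_{i\in C_{k}}\! g_{i}(x_{k}^{(i)})\right\|^{2}.
\end{align*}
The second moment of the gradient estimator is handled by the usual bias/variance split using the modified Assumption 4, producing the same $\sigma^{2}/(cm)$ term as in Theorem 1.

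The key step is handling the inner product: under Assumption 3 of the non-IID setting, the conditional expectation of $g_{i}(x_{k}^{(i)})$ is $\nabla F_{i}(x_{k}^{(i)})$ rather than $\nabla F(x_{k}^{(i)})$. I would use the decomposition
\begin{align*}
\nabla F_{i}(x_{k}^{(i)}) - \nabla F(u_{k}) = \bigl[\nabla F_{i}(x_{k}^{(i)}) - \nabla F_{i}(u_{k})\bigr] + \bigl[\nabla F_{i}(u_{k}) - \nabla F(u_{k})\bigr],
\end{align*}
bounding the first bracket by $L\|x_{k}^{(i)}-u_{k}\|$ through smoothness (Assumption 1) and the second bracket in squared norm via the dissimilarity bound $\kappa^{2}$ (Assumption 7). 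Combined with Young's inequality on the inner product, this produces two residuals beyond the IID terms: a consensus residual of the form $\tfrac{L^{2}}{cm}\sum_{i}\mathbb{E}\|x_{k}^{(i)}-u_{k}\|^{2}$ (which is already present in the IID proof) and a new deterministic drift proportional to $\kappa^{2}$.

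Next I would bound the network/consensus error $\sum_{k=1}^{K}\mathbb{E}\|X_{k}-u_{k}\mathbf{1}^{T}\|_{F}^{2}$ by unrolling $X_{k+1}=(X_{k}-\eta G_{k})S_{k}^{T}$ and invoking the same column-stochastic telescoping identities $\Phi_{s,k}^{T}$ used for Theorem 1. Writing $G_{k}=\bigl(G_{k}-\mathbb{E}G_{k}\bigr)+\mathbb{E}G_{k}$, the stochastic part still yields the $\eta^{2}\sigma^{2}$-type expressions appearing in $\epsilon_{IID}$. The deterministic part, which vanished in the IID analysis because $\mathbb{E} g_{i}(x)=\nabla F(x)$ was the same for every client, now contributes a surplus controlled by $\kappa^{2}$ via Assumption 7. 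The scalar coefficient multiplying this surplus is precisely the same combinatorial factor $\eta^{2}\delta\tau\bigl[2\tau S_{series}+(\tau-1)(1+K/\tau)\bigr]=P$ that governs the consensus-error propagation in the IID argument, because it depends only on the dynamic-mixing geometry and not on whether the expected gradients are equal across clients.

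Finally I would sum the descent inequality from $k=1$ to $K$, divide by $K$, and absorb the consensus-error coefficient on the right into the left-hand side under the conditions $\eta_{eff}L\le 1$ and $P\le\min(1/6,\,1/(6L^{2}+3),\,c/(6L^{2}))$, which are inherited unchanged from the IID proof because only the algebraic manipulation of identical coefficients is involved. The collection of constants from the Young/Cauchy-Schwarz splittings applied to the non-IID residual yields the factor $12$, giving $\epsilon_{NIID}=\epsilon_{IID}+12PL^{2}\kappa^{2}$. The main obstacle I anticipate is the consensus-error step: carefully tracking the $\kappa^{2}$ contribution through the telescoping products $\Phi_{s,k}^{T}$ of asymmetric, time-varying, merely column-stochastic mixing matrices, and verifying that no cross-terms tied to the loss of symmetry inflate the coefficient beyond $12P$. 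A secondary subtlety is the inner-product step under random client selection, where one must show that the between-clients dispersion measured by Assumption 7 at the population level is the right object to control the sampled sum $\tfrac{1}{cm}\sum_{i\in C_{k}}(\nabla F_{i}(u_{k})-\nabla F(u_{k}))$ without blowing up the constants via Assumption 6.
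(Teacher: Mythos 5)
Your overall skeleton matches the paper's: the descent inequality on $u_{k}$, the unrolling of the network error through the products $\Phi_{s,k}^{T}$, the T0/T1/T2 decomposition, and the recognition that the heterogeneity surplus must be multiplied by the same combinatorial factor $P$. The gap is in \emph{where} you inject Assumption 7. You propose to split $\nabla F_{i}(x_{k}^{(i)})-\nabla F(u_{k})$ inside the inner-product step of the descent lemma and bound the piece $\nabla F_{i}(u_{k})-\nabla F(u_{k})$ by $\kappa^{2}$ via Young's inequality there. The paper does not do this: its non-IID Lemma 4 keeps the descent inequality entirely $\kappa^{2}$-free, writing $2\langle\nabla F(u_{k}),\mathcal{H}_{k}\rangle=\|\nabla F(u_{k})\|^{2}+\|\mathcal{H}_{k}\|^{2}-\|\nabla F(u_{k})-\mathcal{H}_{k}\|^{2}$ and controlling the last term purely by smoothness, so that the only change from the IID Lemma 2 is notational. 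The dissimilarity bound enters exactly once, much later, in the inequality $\|\nabla F(X_{k})\|_{F}^{2}\le 3\|X_{k}(I-J)\|_{F}^{2}+3cm\kappa^{2}+3cm\|\nabla F(u_{k})\|^{2}$ that closes the recursion on the T2 contribution; since T2 is the term carrying the prefactor $P$, the $\kappa^{2}$ automatically inherits $P$ and the constants work out to $12PL^{2}\kappa^{2}$ after the $1/(1-3P)\le 2$ and overall factor-of-$2$ absorptions.

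This placement is not cosmetic. A $\kappa^{2}$ drift introduced at the descent-inequality level carries a coefficient of order $\eta_{eff}$ or $1$, not $P$: it does not vanish when $\delta=0$ (uniform mixing, $P=0$), whereas the theorem asserts $\epsilon_{NIID}=\epsilon_{IID}$ exactly in that case. So your route, as described, would leave an extra additive $O(\kappa^{2})$ term that cannot be folded into $12PL^{2}\kappa^{2}$, and you would prove a strictly weaker statement. (You would also be double-counting, since you separately let $\kappa^{2}$ enter the deterministic part of the network error.) The fix is simply to drop the heterogeneity split in the inner-product step and use only smoothness there, reserving Assumption 7 for the single three-term bound on $\|\nabla F(X_{k})\|_{F}^{2}$. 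A minor further correction: the deterministic accumulated-gradient term $Q_{r}$ does not vanish in the IID analysis, as you suggest; T2 is present and bounded identically in both regimes, and the only difference is whether the middle term of that three-way split is dropped (IID) or bounded by $3cm\kappa^{2}$ (non-IID).
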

    

\begin{proof} Check supplementary material (Section \ref{NIID-Theorem2}) \end{proof}

\subsection{Some Comments About the Non-IID Case}
We see that there is an additional $12PL^{2}\kappa^{2}$ term compared to the IID case  that arises due to fact that the global average gradient is different from the average gradient of each worker.  Considering the bounds on the value of $P$, this term acts like a constant term independent of the values of different parameters. As with the IID case, having a mixing strategy closer to uniform aggregation leads to a tighter bound. If a fully uniform mixing strategy is used, then we get $P=0$ and, thus, almost the same bounds as in the IID case. We also get a lower bound on the fraction of clients $c$ that need to be selected, as in the IID case.


\subsection{Comparison with \citet{Wang2021}} \label{sec:novelty}

The three core extensions of Asymmetric aggregation, Dynamic topology, and client selection on the work done by \citet{Wang2021} lead to non-trivial changes in the proof, which we explain below.
\begin{enumerate}
    \item The loss of symmetry in the mixing matrix leads to a reworked Lemma 8 (see Supplementary), with non-diagonalizable $W$. This prevents us from using the eigenvalue approach of \citet{Wang2021}. Instead, we define and utilize the $\delta$ term in terms of other properties of the matrix and then optimize it.
    \item Using a dynamic topology prevents the aggregation of the mixing matrices as some power of $W$ (for example, the sequence of mixing matrices till $k$ iterations can be denoted as $W^{k}$), as was done in \citet{Wang2021}. Instead, we treat them as separate and non-interchangeable (for example, $W_{1}W_{2} \neq W_{1}^{2} \neq W_{2}W_{1}$).
    \item Incorporating client selection necessitates modifications in assumptions and definitions while maintaining consistency. We also provide lower bounds on the number of clients selected.
\end{enumerate}


\section{Analysis of Standard Algorithms as Special Cases and Comparison with Existing Works} \label{sec:Analysis}
We now show how our convergence guarantees improve on existing bounds in some special cases.

\subsection{PSASGD Analysis}
We start by analysing PSASGD. This can be done by setting $W_{j}=J \implies \delta=0$ and $v=0$. This leads to the following result for both IID and Non-IID case:
\begin{align} \label{eq:14}
    \mathbb{E}\left[ \frac{1}{K}\sum_{k=1}^{K}||\nabla F(u_{k})||^{2} \right] &\leq \frac{2[ F(u_{1}) - F_{inf}]}{\eta_{eff} K} + \frac{\eta_{eff} L\sigma^{2}}{cm}
\end{align}
If we set the learning rate as $\eta = \frac{1}{Lc}\sqrt{\frac{cm}{K}}$, then we get,
\begin{align} \label{eq:15}
    \mathbb{E}\left[ \frac{1}{K}\sum_{k=1}^{K}||\nabla F(u_{k})||^{2} \right] &\leq \frac{2L[ F(u_{1}) - F_{inf}]}{\sqrt{cmK}} + \frac{\sigma^{2}}{\sqrt{cmK}} \nonumber\\
    &= \mathcal{O}\left( \frac{1}{\sqrt{cmK}} \right) 
\end{align}
We get the simple criteria of  $K>\mathcal{O}(\max(\tau, cm))$ to achieve a convergence rate of $\mathcal{O}\left( \frac{1}{\sqrt{cmK}} \right) $. This improves on the criteria of $K > m^{3}\tau^{2}$ for the same provided by \citet{Wang2021}.

If we use dynamic and unsymmetric mixing matrices then $\delta \neq 0$. Initializing models to nearly zero, in the IID scenario gives us,
\begin{align} \label{eq:16}
    &\mathbb{E}\left[ \frac{1}{K}\sum_{k=1}^{K}||\nabla F(u_{k})||^{2} \right] \leq 4\left[
    \begin{aligned}
        \frac{2L[ F(u_{1}) - F_{inf}]}{\eta_{eff}K} + \frac{\eta_{eff}L\sigma^{2}}{cm}\\+  \eta^{2}\sigma^{2}L^{2}\delta (K-1)
    \end{aligned}\right]
\end{align}
If $\delta \in (0,1]$, then we get a better convergence criteria than \citet{Wang2021} when $\tau > \frac{1-\varsigma^{2}}{2\varsigma^{2}}$ as discussed before. In such a scenario for $\delta$, we can reduce the convergence criteria to
\begin{align} \label{eq:17}
    &\mathbb{E}\left[ \frac{1}{K}\sum_{k=1}^{K}||\nabla F(u_{k})||^{2} \right] \leq 4\left[
    \begin{aligned}
        \frac{2L[ F(u_{1}) - F_{inf}]}{\eta_{eff}K} + \frac{\eta_{eff}L\sigma^{2}}{cm}\\+  \eta^{2}\sigma^{2}L^{2}\left( \frac{1-\varsigma^{2}}{1+\varsigma^{2}}\tau - 1 \right)
    \end{aligned}\right]
\end{align}
Putting $\eta = \frac{1}{Lc}\sqrt{\frac{cm}{K}}$ in this case yields,
\begin{align} \label{eq:18}
    &\mathbb{E}\left[ \frac{1}{K}\sum_{k=1}^{K}||\nabla F(u_{k})||^{2} \right] \leq \mathcal{O}\left( \frac{1}{\sqrt{cmK}}\right) + \mathcal{O}\left( \frac{m\tau}{Kc} \right)
\end{align}

To get a better convergence than $\mathcal{O}\left( \frac{1}{\sqrt{cmK}} \right) $ in this case, we need $K > \mathcal{O}\left( \frac{m^{3}\tau^{2}}{c}\right)$. This matches the criterion of \citet{Wang2021}.

For non-IID scenario, we get,
\begin{align} \label{eq:19}
    &\mathbb{E}\left[ \frac{1}{K}\sum_{k=1}^{K}||\nabla F(u_{k})||^{2} \right] \leq 4\left[
    \begin{aligned}
        \frac{2L[ F(u_{1}) - F_{inf}]}{\eta_{eff}K} + \frac{\eta_{eff}L\sigma^{2}}{cm}\\+  \eta^{2}\sigma^{2}L^{2}\delta (K-1) + 12PL^{2}\kappa^{2}
    \end{aligned}\right]
\end{align}
When $\delta \in (0,1]$ and $\eta = \frac{1}{Lc}\sqrt{\frac{cm}{K}}$, a similar analysis as the IID case leads to equation \ref{eq:18}.
 We achieve a convergence rate of $\mathcal{O}\left( \frac{1}{\sqrt{cmK}} \right) $ with $K > \mathcal{O}\left( \frac{m^{3}\tau^{2}}{c}\right)$ in the non-IID case, which matches the $K > \mathcal{O}\left( m^{3}\tau^{2} \right)$ criteria provided by \citet{Wang2021} \footnote{In \citet{Wang2021} they consider the effect of the additional term even though, effectively, it is a constant. In our above analysis we ignore the constant for both our and their work.}.\\

When $\delta > 1$ in both the IID and non-IID case, putting $\eta = \frac{1}{Lc}\sqrt{\frac{cm}{K}}$ gives us,
\begin{align} \label{eq:21}
    &\mathbb{E}\left[ \frac{1}{K}\sum_{k=1}^{K}||\nabla F(u_{k})||^{2} \right] \leq \mathcal{O}\left( \frac{\delta m}{c} \right)
\end{align}
We see that in such a scenario, we start to get convergence error bound $\epsilon > 1$. So it is highly advisable to not resort to highly non-uniform aggregation strategies where we heavily rely on some clients while ignoring others. If we use smaller learning rates, such as in corollary 1, we can avoid this issue and get (possibly) sublinear error. Even then, getting a convergence rate of $\mathcal{O}\left( \frac{1}{\sqrt{cmK}} \right) $ is not possible.

\subsection{Fully Synchronous SGD Analysis}
If we put $\tau = 1$ for fully synchronous SGD in IID scenario with uniform aggregation ($W_{j} = J \implies \delta=0$), and $\eta = \frac{1}{Lc}\sqrt{\frac{cm}{K}}$, we get equation \ref{eq:15}. This matches the optimisation error bound found by \citet{Ghadimi2013}, \citet{Arjevani2023} and \citet{Wang2021}, with an improved  $K>\mathcal{O}(\max(\tau, cm))$ criterion to achieve the same. Since $\delta=0$ in such a scenario, this implies that the extra term in the error bound for non-IID scenario $12PL^{2}\kappa^{2}=0$, and hence the order of bounds remain the same in the non-IID case.

If we consider a scenario such as in \citet{Assran2019} where dynamic column-stochastic mixing matrices are used, then  with $\tau = 1$ and $\eta = \frac{1}{Lc}\sqrt{\frac{cm}{K}}$ we get the same results as in equations \ref{eq:16} and \ref{eq:19}  for the IID and non-IID cases respectively. The analysis with respect to $\delta$ values follows the PSASGD case with $\tau = 1$.

\subsection{D-PSGD Analysis}
For D-PSGD, we set $\tau = 1$, $W_{j} = W$ and $v=0$ in equations \ref{eq:16} and \ref{eq:19}  for the IID and non-IID cases respectively. When $\delta \in (0, 1]$, as is the case covered by \citet{Wang2021}, we get the same bounds as equation \ref{eq:18} for IID and non-IID scenarios, with the same criterion (putting $\tau = 1$).  This matches \citet{Wang2021} in terms of result and criterion, and is an improvement on \citet{Lian2017}.


\section{Experimental Verification} \label{sec:exp}

\begin{figure}[t]
    \centering
    \begin{subfigure}[t]{.22\textwidth}
        \includegraphics[width=\textwidth]{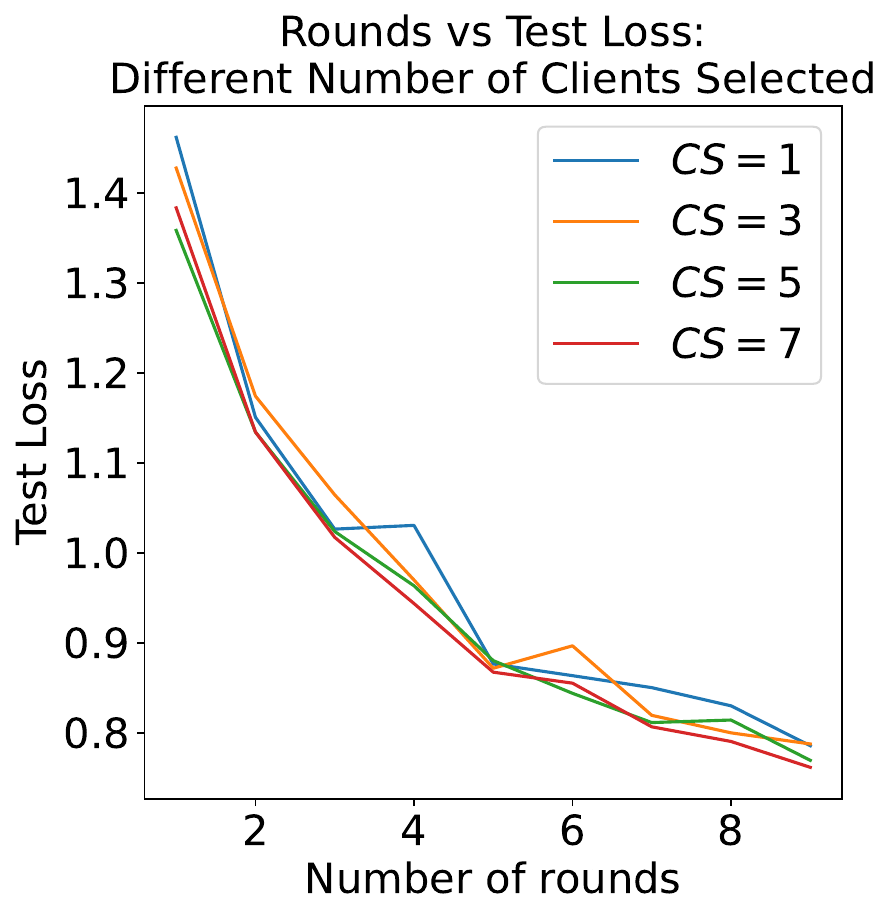}
        \caption{IID Scenario}\label{fig:cs_iid_psasgd}
    \end{subfigure}\hfill
    \begin{subfigure}[t]{.22\textwidth}
        \includegraphics[width=\textwidth]{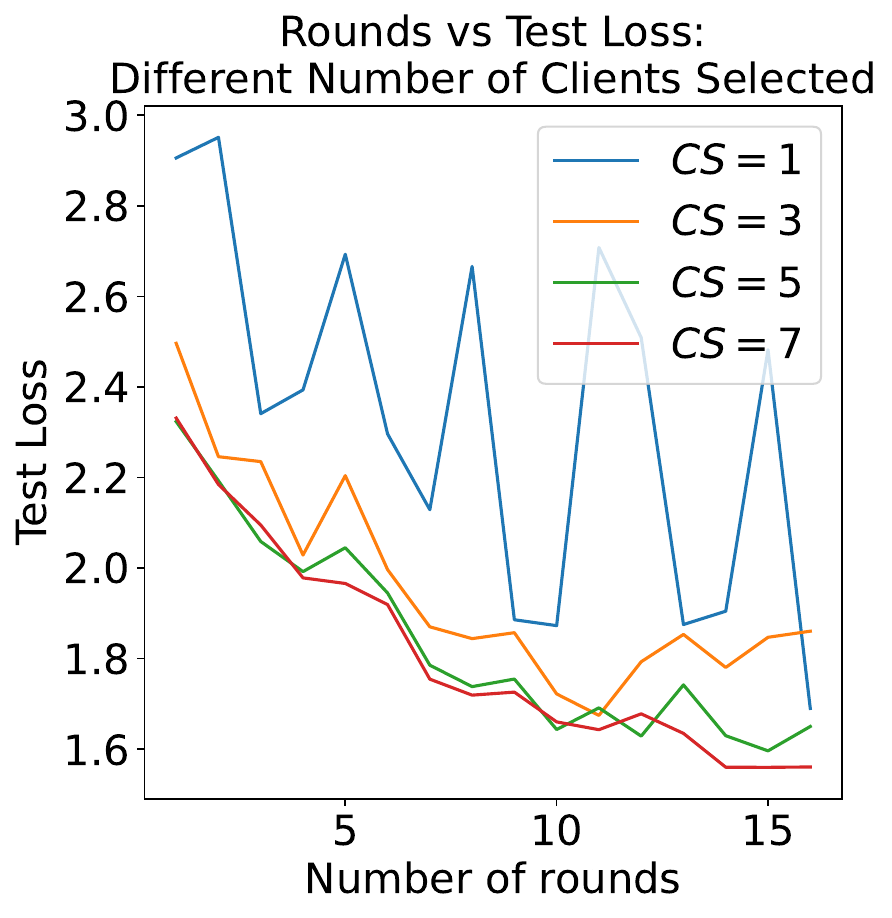}
        \caption{Non-IID Scenario}\label{fig:cs_niid_psasgd}
    \end{subfigure}\hfill
    \vspace{1em}
    \caption{Model convergence with different number of clients selected for both IID and non-IID scenario. The non-IID scenario is using a dirichlet distribution with $\alpha=0.6$. We set the communication period $\tau = 70$ for IID case, and $\tau = 60$ for Non-IID case.} \label{fig:cscomp_psasgd}
    \vspace{2em}
\end{figure}

We run some initial experiments on the CIFAR-10 dataset to validate our analysis. We apply some standard transformations on our images (resize, flip, rotation, jitter and zoom) and utilise VGG16 with default weights for the classification purpose. 
In most experiments, we use a batch size of 128, 8 clients and 20 epochs. We also select clients randomly at every round.
Specific hyperparameters can be found in the supplementary materials and code\footnote{Code can be found at \href{https://github.com/gtmliitrpr/Cooperative-SGD-with-Dynamic-Mixing-Matrices}{GTML Github}}. We check for two different scenarios - IID scenario and Non-IID scenario using a dirichlet distribution.

\subsection{PSASGD} \label{exp:psasgd}

We start by analyzing the importance of communication period $\tau$ on the convergence for PSASGD. We see that there is no observable trend with increasing $\tau$ in both the IID case and Non-IID case, validating our analysis. The graphs are shown in Figure \ref{fig:taucomp_psasgd}.

\begin{figure}[t]
    \centering
    \begin{subfigure}[t]{.22\textwidth}
        \includegraphics[width=\textwidth]{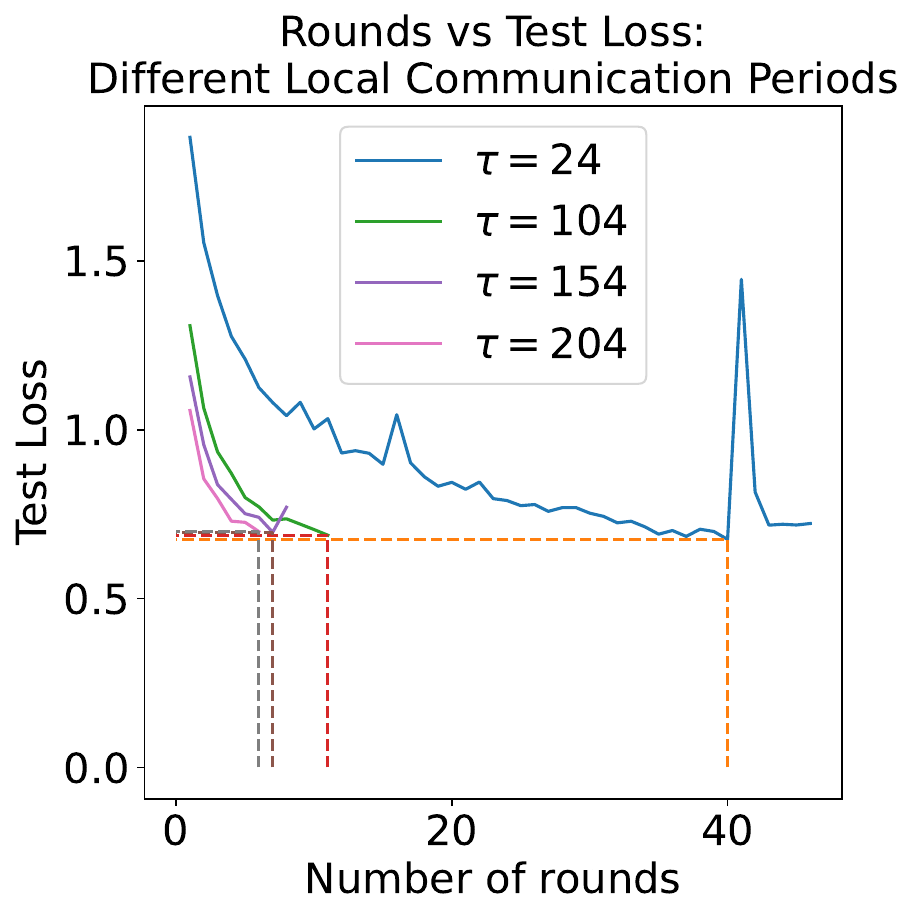} 
        \caption{IID Scenario}\label{fig:tau_iid_psasgd}
    \end{subfigure}\hfill
    \begin{subfigure}[t]{.22\textwidth}
        \includegraphics[width=\textwidth]{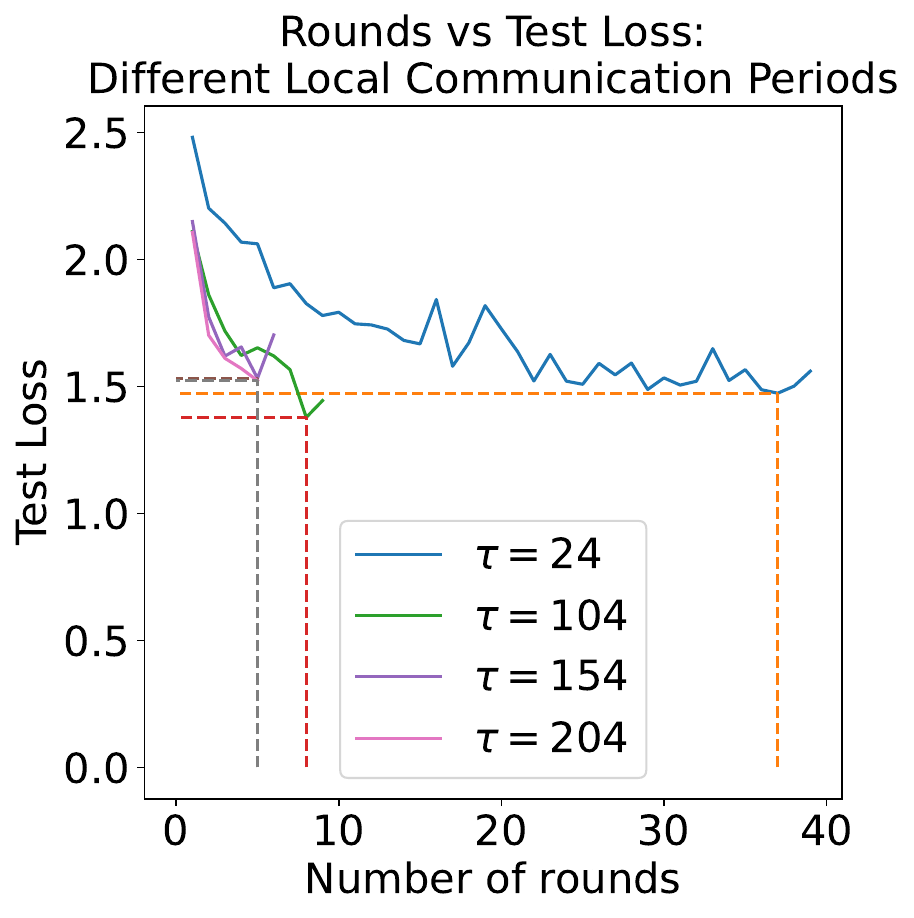}
        \caption{Non-IID Scenario}\label{fig:tau_niid_psasgd}
    \end{subfigure}\hfill
    \vspace{1em}
    \caption{Model convergence with different communication periods for both IID and non-IID scenario. 4 clients were selected out of 8 in IID scenario and 7 were selected out of 8 in Non-IID scenario. The non-IID scenario is using a dirichlet distribution with $\alpha=0.6$.} \label{fig:taucomp_psasgd}
    \vspace{2em}
\end{figure}

We also compare the effect of client selection on model convergence. We showcase the results for both IID and non-IID scenarios in Figure \ref{fig:cscomp_psasgd}. We see that choosing a larger fraction of clients not only leads to improved convergence, but also increased stability.

Finally, we compare different initialization strategies. We want to check the dependency of the convergence on how large the initial parameter values are. We multiply the default weights of all parameters of VGG16 with some factor $i$. The results are provided in Figure \ref{fig:initcomp_psasgd}. The initialization strategy is described in detail in the supplementary material. We see that there is a lack of a proper trend, indicating the tradeoff between the initialization error due to parameter weights and the starting point on the surface of the loss function ($F(u_{1})$).

\begin{figure}[t]
    \centering
    \begin{subfigure}[t]{.22\textwidth}
        \includegraphics[width=\textwidth]{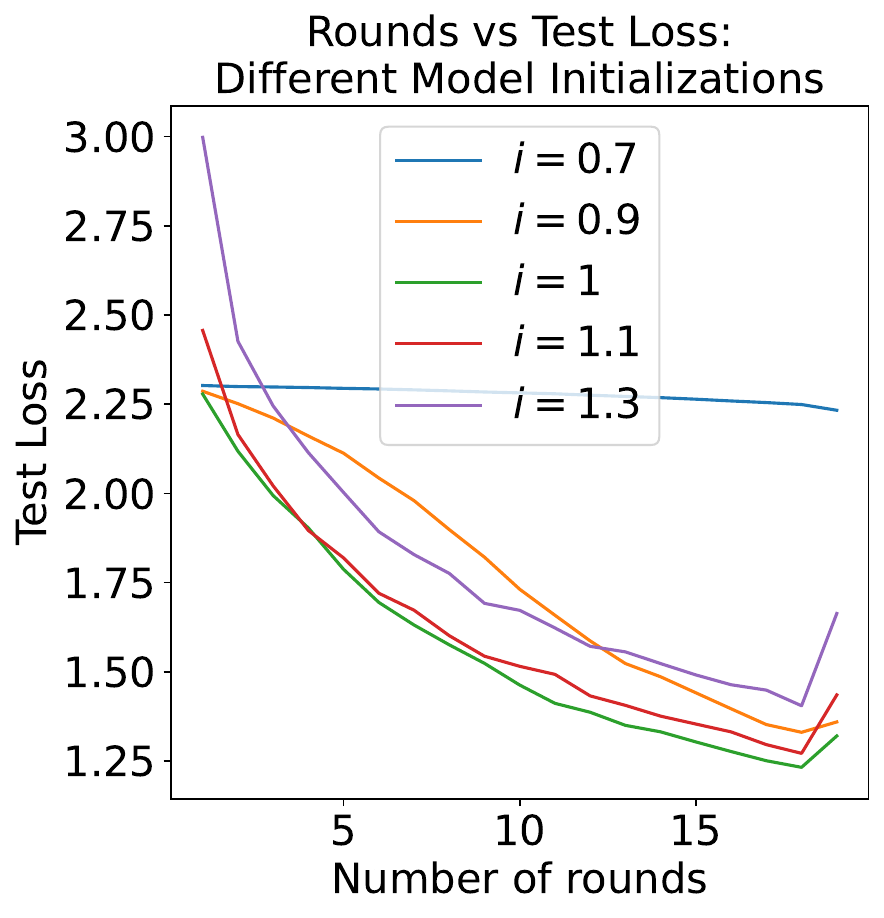}
        \caption{IID Scenario}\label{fig:init_iid_psasgd}
    \end{subfigure}\hfill
    \begin{subfigure}[t]{.22\textwidth}
        \includegraphics[width=\textwidth]{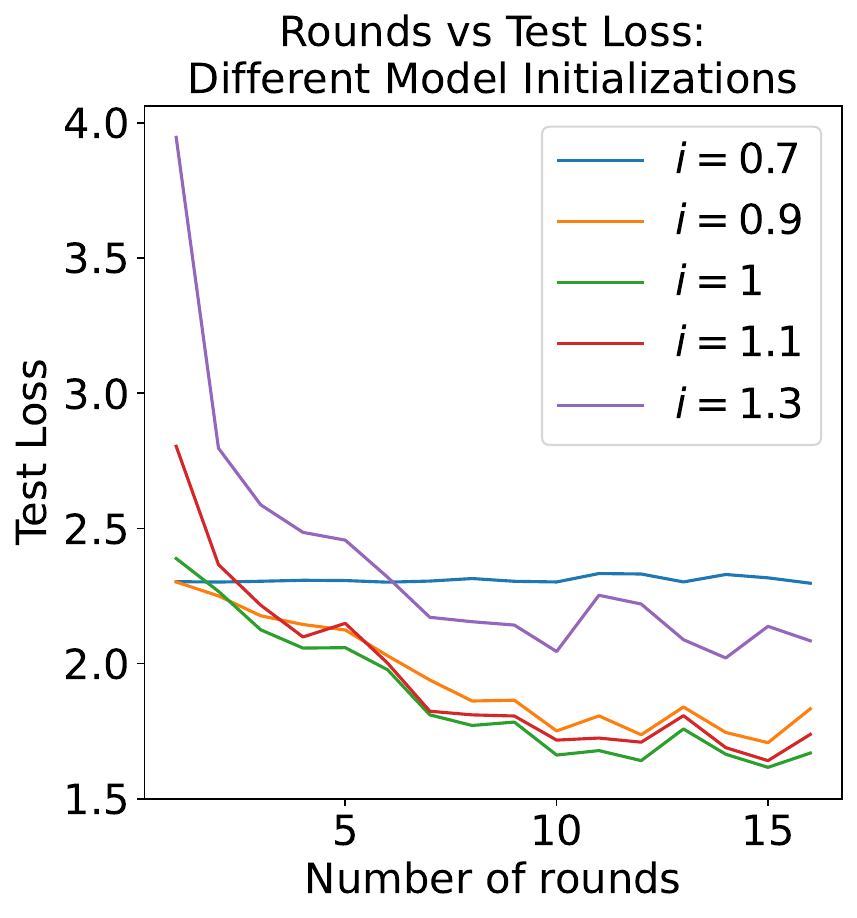}
        \caption{Non-IID Scenario}\label{fig:init_niid_psasgd}
    \end{subfigure}\hfill
    \vspace{1em}
    \caption{Model convergence with different initialization values of parameters for both IID and non-IID scenario. The non-IID scenario is using a dirichlet distribution with $\alpha=0.6$. 5 clients were selected out of 8 in both scenarios, and the communication period $\tau = 60$.} \label{fig:initcomp_psasgd}
    \vspace{2em}
\end{figure}

\subsection{Fully Synchronous SGD and D-PSGD}
We found fully synchronous SGD behaves like PSASGD, which is expected as it is a special case of it. For the D-PSGD scenario, we run the same experiments as was done in Section \ref{exp:psasgd}. We found that when $\tau>1$, we get the same trends as PSASGD. This is in keeping with our analysis so far which indicates D-PSGD with $\tau>1$ would behave just like PSASGD. We also conduct the experiments for $\tau=1$, which is the classical D-PSGD scenario, and see it behaves the same as PSASGD and Fully Synchronized SGD. We provide graphs for both D-PSGD and Fully Synchronous SGD in the supplementary materials (Sections \ref{exp:fsync} and \ref{exp:dpsgd}).

\section{Conclusion}
In this paper we extend the Cooperative SGD framework to the scenario of dynamic and assymetric mixing matrices, while also incorporating client selection. Furthermore, we analyse several existing algorithms and provide convergence guarantees with improved or matching bounds for achieving it in comparison to existing literature. We provide novel analysis and verify them experimentally. We also manage to provide a lower bound on the number of clients that need to be selected for both IID as well as non-IID scenarios. Finally, as a result of relaxing assumptions in previous works, we are able to provide convergence guarantees for a wider family of algorithms than has been previously possible in the distributed local SGD literature, both in the IID and non-IID scenario. We show how closely related the IID scenario is with the non-IID, as well as bridging the gap between different algorithms that are seemingly quite different. Further exploration is possible within the bounds of our work as well as beyond. Most notably, while tighter bounds under fewer assumptions are possible, it could be more worthwhile to study how other variables affect the convergence. For example, an analysis on the effects of variable learning rates might be useful to understand the effect of learning rate schedulers. There is also a possibility to extend this work to communication primitives such as PUSHSUM, and perhaps even generalize for any communication primitive.



\begin{ack}
    The project is supported by ANRF project CRG/2022/004980. 
\end{ack}



\bibliography{mainpaper}

\pagebreak
\onecolumn
\begin{center}
\textbf{\Large Supplementary Materials}
\end{center}
\setcounter{equation}{0}
\setcounter{figure}{0}
\setcounter{table}{0}
\setcounter{page}{1}

\section{Proof Preliminaries} \label{prelims}
In this section, we specify the notations to be used for proving Theorem 1 and Theorem 2 below:
\begin{enumerate}
    \item We express the $m$-element column vector $\left[ 1, 1, 1, ... \right]^{T}$ as \textbf{1} and define the square matrix \boldmath $J = {11^{T}}/{1^{T}1}$ \unboldmath. \boldmath $J$ \unboldmath and \boldmath $I$ \unboldmath are of size $(m+v)\times (m+v)$.
    \item We utilize a dynamic matrix $S_{k}$ in our update rule to be used as a stand-in for the regular mixing matrix (when aggregating), and for the identity matrix (when doing local updates).
        \begin{equation}
            \mathcal{S}_{k} =
            \begin{cases}
                W_{k} & k\mod\tau=0\\
                I & \text{Otherwise}
            \end{cases}
        \end{equation}
    \item We also define two matrices $X_{k}, G_{k} \in \mathbb{R}^{d\times(m+v)}$ as below:
        \begin{align*}
            X_{k} = \left[ x_{k}^{(1)},...x_{k}^{(m)}, z_{k}^{(1)},..z_{k}^{(v)} \right]\\
            G_{k} = \left[ g_{1}(x_{k}^{(1)}),...g_{m}(x_{k}^{(m)}), 0,..0 \right]
        \end{align*}
    For the above two matrices, we assume the models of clients not selected at iteration $k$ are zero i.e, $x_{k}^{(j)} = 0 \text{ and } g_{j}(x_{k}^{(j)}) = 0 \text{ } \forall j \notin C_{k}$.
    \item \textbf{Update Rule}: $X_{k+1} = \left( X_{k} - \eta G_{k} \right)S_{k}^{T}$
    \item We take the following quantities to represent average model and average learning rate respectively,
    \begin{align*}
        u_{k} = X_{k}\frac{1_{m+v}}{m+v},\text{ } \eta_{eff} = \frac{cm}{m+v}\eta 
    \end{align*}
    \item $\equiv_{k} = \{ \xi_{k}^{(1)}, \xi_{k}^{(2)}, ... \} \implies$ Set of mini batches for $m$ workers at iteration $K$.
    \item $E_{k}$ denotes the expression $\mathbb{E}_{\equiv_{k}|X_{k}}$
    \item Averaged stochastic gradient with selected clients as $C_{k}$:
        \begin{equation}
            \mathcal{G}_{k} =
            \begin{cases}
                \frac{1}{cm}\sum_{i \in C_{k}} g(x_{k}^{(i)}) & \text{IID case}\\
                \frac{1}{cm}\sum_{i \in C_{k}} g_{i}(x_{k}^{(i)}) & \text{Non-IID case}
            \end{cases}
        \end{equation}
    \item Averaged full batch gradient with selected clients as $C_{k}$:
        \begin{equation}
            \mathcal{H}_{k} =
            \begin{cases}
                \frac{1}{cm}\sum_{i \in C_{k}} \nabla F(x_{k}^{(i)}) & \text{IID case}\\
                \frac{1}{cm}\sum_{i \in C_{k}} \nabla F_{i}(x_{k}^{(i)}) & \text{Non-IID case}
            \end{cases}
        \end{equation}
    \item We also define the following matrix:
        \begin{equation}
            \nabla F(X_{K}) =
            \begin{cases}
                [\underbrace{\nabla F(x_{k}^{(1)}),..., \nabla F(x_{k}^{(t)}), ..., \nabla F(x_{k}^{(m)})}_{m\text{ elements}}, 0,0,....,0] & \text{IID case}\\
                [\underbrace{\nabla F_{t}(x_{k}^{(1)}),..., \nabla F_{t}(x_{k}^{(t)}), ..., \nabla F_{t}(x_{k}^{(m)}),}_{m\text{ elements}} 0,0,....,0] & \text{Non-IID case}
            \end{cases}
        \end{equation}
    For the above matrix, we assume the average full batch gradient of clients not selected at iteration $k$ are zero i.e, $\nabla F(x_{k}^{(j)}) = 0 \text{ } \forall j \notin C_{k}$.
    
    \item Frobenius norm defined for $A \in M_{n}$ by:
        \begin{equation}
            ||A||_{F}^{2} = ||A^{T}||_{F}^{2} = |Tr(AA^{T})| = \sum_{i,j=1}^{n} |a_{ij}|^{2}
        \end{equation}
    \item Operator norm defined for $A \in M_{n}$ by:
        \begin{equation}
            ||A||_{op} = \max_{||x||=1}||Ax|| = \sqrt{\lambda_{max}(A^{T}A)}
        \end{equation}
    \item $||\nabla F(X_{k})||^{2}_{F} = \sum_{i=1}^{m} ||\nabla F(x_{k}^{(i)})||^{2} = \sum_{i \in C_{k}} ||\nabla F(x_{k}^{(i)})||^{2}$
        
\end{enumerate}


\section{Proof of Theorem 1} \label{IID-Theorem1}
We will proceed to prove Theorem 1 in this section. We start by stating the assumptions for IID scenario  (Section \ref{iid:assumptions}). We thereafter lay the groundwork for proving Lemma 2 by utilizing three lemmas - Lemma 3 (Section \ref{iid:lemma3}), Lemma 4 (Section \ref{iid:lemma4}) and Lemma 5 (Section \ref{iid:lemma5}). We state Lemma 2 initially for convenience, but finally prove it in Section \ref{iid:lemma2}. Thereafter, we prove 2 more lemmas needed for proving Theorem 1- Lemma 7 (Section \ref{iid:lemma7}) and Lemma 8 (Section \ref{iid:lemma8}). Finally, we prove Theorem 1 in Section \ref{theorem1}.
\subsection{Assumptions} \label{iid:assumptions}
\begin{enumerate}
    \item \textbf{Smoothness}: $||\nabla F_{i}(x) - \nabla F_{i}(y)|| \leq L||x-y||$
    \item \textbf{Lower Bounded}: $F(x) \geq F_{ing}$
    \item \textbf{Unbiased Gradients}: $\mathbb{E}_{\xi|x}[g_{i}(x)] = \nabla F(x)$
    \item \textbf{Bounded Variance}: $\mathbb{E}_{\xi|x}||g_{i}(x)-\nabla F(x)||^{2} \leq \sigma^{2}$\newline
    where $\sigma$ is a non-negative constant and in inverse proportion to mini-batch size
    \item \textbf{Mixing Matrix}:
         $W_{k}^{T}1_{m+v} = 1_{m+v}$
    \item For all rounds, fraction of clients selected $c$ is fixed.
\end{enumerate}
\subsection{Lemma 2} \label{iid:lemma2_statement}
\textcolor{red}{\textbf{For algorithm $\mathcal{A}(\tau, W, v)$, under assumption 1-5, if learning rate satisfies $\eta_{eff}L\left(1+\frac{\beta}{m}\right) \leq 1$ and all local parameters are initialized at same point $u_{1}$, then the average-squared gradient after K iterations is bounded as follows:
\begin{equation}
    \mathbb{E} \left[ \frac{1}{K}\sum_{k=1}^{K}||\nabla F(u_{k}) ||^{2} \right] \leq \underbrace{\frac{2[F(u_{1}) - F_{inf}]}{\eta_{eff}K} + \frac{\eta_{eff}L\sigma^{2}}{cm}}_{\text{Fully Sync. SGD}} + \underbrace{\frac{L^{2}}{K}\sum_{k=1}^{K}\frac{\mathbb{E}||X_{k}(I-J)||^{2}_{F}}{cm}}_{\text{Network Error}}
\end{equation}}}
\subsection{Lemmas to prove for Lemma 2}
\subsubsection{Lemma 3} \label{iid:lemma3}
\textcolor{red}{\textbf{Under assumption 3 and 4, we have the following variance bound for averaged stochastic gradient:
\begin{equation}
    \mathbb{E}_{\equiv_{K}|X_{K}}[||\mathcal{G}_{k} - \mathcal{H}_{k}||^{2}] \leq  \frac{\sigma^{2}}{cm}
\end{equation}}}
\textit{Proof}:
\begin{equation*}
    \begin{gathered}
        \mathbb{E}_{\equiv_{K}|X_{K}}[||\mathcal{G}_{k} - \mathcal{H}_{k}||^{2}]\\
        = \mathbb{E}_{\equiv_{K}|X_{K}} \left| \left| \frac{1}{cm}\sum_{i \in C_{k}}[g(x_{k}^{(i)}) - \nabla F(x_{k}^{(i)})]\right| \right| ^{2}\\
        = \frac{1}{(cm)^{2}}\mathbb{E}_{\equiv_{K}|X_{K}} \left[ \sum_{i \in C_{k}}||g(x_{k}^{(i)}) - \nabla F(x_{k}^{(i)})||^{2} + \sum_{j,l \in C_{k}, j\neq l} \left< g(x_{k}^{(j)}) - \nabla F(x_{k}^{(j)}),g(x_{k}^{(l)}) - \nabla F(x_{k}^{(l)}) \right> \right]\\
        = \frac{1}{(cm)^{2}} \sum_{i \in C_{k}} \mathbb{E}_{\xi_{K}^{(i)}|X_{K}} ||g(x_{k}^{(i)}) - \nabla F(x_{k}^{(i)})||^{2} \\+ \frac{1}{(cm)^{2}}\sum_{j,l \in C_{k}, j\neq l} \left< \mathbb{E}_{\xi_{K}^{(j)}|X_{K}} [g(x_{k}^{(j)}) - \nabla F(x_{k}^{(j)})] , \mathbb{E}_{\xi_{k}^{(l)}|X_{K}} [g(x_{k}^{(l)}) - \nabla F(x_{k}^{(l)})] \right>
    \end{gathered}
\end{equation*}
This step was possible as $\equiv_{K}|X_{K}$ can be seen as over batches of all clients, so as we treat $\{\xi_{k}^{(i)}\}$ as independent random variables, for a particular client it should only depend on their own batch. We also use the equation $\mathbb{E}<X, Y> = <\mathbb{E}[X], \mathbb{E}[Y]> $ provided X and Y are independent. As we treat batches as independent, hence $\mathbb{E}_{\xi_{K}^{(j)}|X_{K}} [g(x_{k}^{(j)}) - \nabla F(x_{k}^{(j)})]$ is independent of $\mathbb{E}_{\xi_{k}^{(l)}|X_{K}} [g(x_{k}^{(l)}) - \nabla F(x_{k}^{(l)})]$.\\

Now by assumption 3, we have
\begin{equation*}
    \begin{gathered}
        \mathbb{E}_{\xi_{K}^{(j)}|X_{K}} [g(x_{k}^{(j)}) - \nabla F(x_{k}^{(j)})] = \mathbb{E}_{\xi_{K}^{(j)}|X_{K}} [g(x_{k}^{(j)})] - \nabla F(x_{k}^{(j)})\\
        \textcolor{magenta}{\text{($\because$ Since $\nabla F$ is not dependent on batches)}}\\
        \implies \nabla F(x_{k}^{(j)}) - \nabla F(x_{k}^{(j)}) = 0
    \end{gathered}
\end{equation*}
Now by assumption 4, we have
\begin{equation*}
    \begin{gathered}
        \mathbb{E}_{\xi_{K}^{(i)}|X_{K}} ||g(x_{k}^{(i)}) - \nabla F(x_{k}^{(i)})||^{2} \leq \sigma^{2}
    \end{gathered}
\end{equation*}
Thus, we get
\begin{equation*}
    \begin{gathered}
        \frac{1}{(cm)^{2}} \sum_{i \in C_{k}} \mathbb{E}_{\xi_{K}^{(i)}|X_{K}} ||g(x_{k}^{(i)}) - \nabla F(x_{k}^{(i)})||^{2} \\+ \frac{1}{(cm)^{2}}\sum_{j,l \in S_{k},j\neq l} \left< \mathbb{E}_{\xi_{K}^{(j)}|X_{K}} [g(x_{k}^{(j)}) - \nabla F(x_{k}^{(j)})] , \mathbb{E}_{\xi_{k}^{(l)}|X_{K}} [g(x_{k}^{(l)}) - \nabla F(x_{k}^{(l)})] \right> \\
        \leq \frac{1}{(cm)^{2}} \sum_{i \in C_{k}}  \sigma^{2} \\
        \leq \frac{\sigma^{2}}{cm}
    \end{gathered}
\end{equation*}

\subsubsection{Lemma 4} \label{iid:lemma4}
\textcolor{red}{\textbf{Under assumption 3, the expected inner product between stochastic gradient and full batch gradient for averaged model can be expanded as:
\begin{equation}
    \mathbb{E}_{\equiv_{K}|X_{K}}\left[ \left< \nabla F(u_{k}), \mathcal{G}_{k}  \right> \right] = \frac{1}{2}||\nabla F(u_{k})||^{2} + \frac{1}{2cm}\sum_{i \in C_{k}}||\nabla F(x_{k}^{(i)})||^{2} - \frac{1}{2cm}\sum_{i \in C_{k}}^{m}||\nabla F(u_{k}) - \nabla F(x_{k}^{(i)})||^{2}
\end{equation}}}
\textit{Proof}:
\begin{equation*}
    \begin{gathered}
    \mathbb{E}_{\equiv_{K}|X_{K}}\left[ \left< \nabla F(u_{k}), \mathcal{G}_{k}  \right> \right] = \mathbb{E}_{\equiv_{K}|X_{K}}\left[ \left< \nabla F(u_{k}), \frac{1}{cm} \sum_{i \in C_{k}} g(x_{k}^{(i)})  \right> \right]\\
    = \left< \mathbb{E}_{\equiv_{K}|X_{K}}[\nabla F(u_{k})], \mathbb{E}_{\equiv_{K}|X_{K}}\left[ \frac{1}{cm} \sum_{i \in C_{k}} g(x_{k}^{(i)}) \right]  \right> \\
    = \left< \nabla F(u_{k}), \mathbb{E}_{\equiv_{K}|X_{K}}\left[ \frac{1}{cm} \sum_{i \in C_{k}} g(x_{k}^{(i)}) \right]  \right> \\
    = \left< \frac{1}{cm}\nabla F(u_{k}).cm, \mathbb{E}_{\equiv_{K}|X_{K}}\left[ \frac{1}{cm} \sum_{i \in C_{k}} g(x_{k}^{(i)}) \right]  \right> \\
    = \left< \frac{1}{cm}\nabla F(u_{k}) \sum_{i \in C_{k}} 1, \mathbb{E}_{\equiv_{K}|X_{K}}\left[ \frac{1}{cm} \sum_{i \in C_{k}} g(x_{k}^{(i)}) \right]  \right> \\
    = \frac{1}{cm} \sum_{i \in C_{k}}\left< \nabla F(u_{k}), \mathbb{E}_{\equiv_{K}|X_{K}}\left[g(x_{k}^{(i)}) \right]  \right> \\
    = \frac{1}{cm} \sum_{i \in C_{k}}\left< \nabla F(u_{k}), \nabla F(x_{k}^{(i)})  \right> \\
    \textcolor{magenta}{\text{($\because$ Assumption 3)}}\\
    = \frac{1}{2cm} \sum_{i \in C_{k}}\left[ ||\nabla F(u_{k})||^{2} + ||\nabla F(x_{k}^{(i)})||^{2} - ||\nabla F(u_{k}) - \nabla F(x_{k}^{(i)})||^{2}  \right] \\
    \textcolor{magenta}{\text{($\because 2<a,b> = ||a||^{2} + ||b||^{2} - ||a-b||^{2}$ )}}\\
    = \frac{1}{2} ||\nabla F(u_{k})||^{2} + \frac{1}{2cm} \sum_{i \in C_{k}} ||\nabla F(x_{k}^{(i)})||^{2} - \frac{1}{2cm} \sum_{i \in C_{k}}||\nabla F(u_{k}) - \nabla F(x_{k}^{(i)})||^{2}
    \end{gathered}
\end{equation*}

\subsubsection{Lemma 5} \label{iid:lemma5}
\textcolor{red}{\textbf{Under assumption 3 and 4, the squared norm of averaged stochastic gradient is bounded as:
\begin{equation}
    \mathbb{E}_{\equiv_{K}|X_{K}}\left[ ||\mathcal{G}_{k}||^{2} \right] \leq \frac{||\nabla F(X_{K})||^{2}_{F}}{cm} + \frac{\sigma^{2}}{cm}
\end{equation}}}
\textit{Proof}:
Since $\mathbb{E}_{\equiv_{K}|X_{K}}\left[ \mathcal{G}_{k} \right] = \mathcal{H}_{k}$, we have
\begin{equation*}
    \begin{gathered}
    \mathbb{E}_{\equiv_{K}|X_{K}}\left[ ||\mathcal{G}_{k}||^{2} \right] = \mathbb{E}_{\equiv_{K}|X_{K}}\left[ ||\mathcal{G}_{k} - \mathbb{E}[\mathcal{G}_{k}]||^{2} \right] + ||\mathbb{E}_{\equiv_{K}|X_{K}}[\mathcal{G}_{k}]||^{2}\\
    \textcolor{magenta}{\text{($\because \mathbb{E}[||a||^{2}] = \mathbb{E}[||a - \mathbb{E}[a]||]^{2} + ||\mathbb{E}[a]||^{2}$)}}\\
    = \mathbb{E}_{\equiv_{K}|X_{K}}\left[ ||\mathcal{G}_{k} - \mathbb{E}[\mathcal{G}_{k}]||^{2} \right] + ||\mathcal{H}_{k}||^{2}\\
    \leq \frac{\sigma^{2}}{cm} + ||\mathcal{H}_{k}||^{2}\\
    \textcolor{magenta}{\text{($\because$ Lemma 3)}}\\
    = \frac{\sigma^{2}}{cm} + \frac{1}{cm}||\nabla F({X}_{k})||^{2}_{F}\\
    \textcolor{magenta}{( \because \text{By Jensen's Inequality, } \varphi(\mathbb{E}[X]) \leq \mathbb{E}[\varphi(X)] \implies \text{ Take }\varphi = ||x||^{2} \text{ and } X = \nabla F(x_{k}^{(i)}) \text{, then }}\\\textcolor{magenta}{ ||\mathcal{H}_{k}||^{2} = \left| \left| \frac{1}{cm} \sum_{i \in C_{k}} ||\nabla F(x_{k}^{(i)})||^{2}  \right| \right| \leq  \frac{1}{cm} \sum_{i \in C_{k}} \left| \left| \nabla F(x_{k}^{(i)})  \right| \right|^{2}  = \frac{1}{cm}||\nabla F({X}_{k})||^{2}_{F})}\\
    \implies \mathbb{E}_{\equiv_{K}|X_{K}}\left[ ||\mathcal{G}_{k}||^{2} \right] \leq \frac{||\nabla F(X_{K})||^{2}_{F}}{cm} + \frac{\sigma^{2}}{cm}
    \end{gathered}
\end{equation*}
\subsection{Proof of Lemma 2} \label{iid:lemma2}
By Lipshitz continuous gradient assumption,
\begin{equation*}
    \begin{gathered}
    f(y) \leq f(x) + \nabla f(x)^{T}(y-x) + \frac{L}{2}||y-x||^{2}
    \end{gathered}
\end{equation*}
Here, let $f = F$, and $y = u_{k+1}$ and $x = u_{k}$. Then,
\begin{equation*}
    \begin{gathered}
    F(u_{k+1}) \leq F(u_{k}) + \nabla F(u_{k})^{T}(u_{k+1}-u_{k}) + \frac{L}{2}||u_{k+1}-u_{k}||^{2}
    \end{gathered}
\end{equation*}
By update rule,
\begin{equation*}
    \begin{gathered}
    u_{k+1}-u_{k} = -\eta_{eff} \left[ \frac{1}{cm}\sum_{i \in C_{k}} g(x_{k}^{(i)}) \right]
    \end{gathered}
\end{equation*}
Thus, we get,
\begin{equation*}
    \begin{gathered}
    F(u_{k+1}) \leq F(u_{k}) - \nabla F(u_{k})^{T}\left(\eta_{eff} \left[ \frac{1}{cm}\sum_{i \in C_{k}} g(x_{k}^{(i)}) \right]\right) + \frac{L}{2}\left|\left|-\eta_{eff} \left[ \frac{1}{cm}\sum_{i \in C_{k}} g(x_{k}^{(i)}) \right]\right|\right|^{2}\\
    \implies F(u_{k+1}) \leq F(u_{k}) - \eta_{eff}\nabla F(u_{k})^{T}\mathcal{G}_{k} + \frac{\eta_{eff}^{2}L}{2}\left|\left| \mathcal{G}_{k} \right|\right|^{2}\\
    \implies F(u_{k+1}) \leq F(u_{k}) - \eta_{eff}\left<\nabla F(u_{k}),\mathcal{G}_{k}\right> + \frac{\eta_{eff}^{2}L}{2}\left|\left| \mathcal{G}_{k} \right|\right|^{2}
    \end{gathered}
\end{equation*}
Taking expectations on both sides, we get
\begin{equation*}
    \begin{gathered}
     \mathbb{E}_{\equiv|X_{k}}[F(u_{k+1})] - \mathbb{E}_{\equiv|X_{k}}[F(u_{k})] \leq - \eta_{eff}\mathbb{E}_{\equiv|X_{k}}[\left<\nabla F(u_{k}),\mathcal{G}_{k}\right>] + \frac{\eta_{eff}^{2}L}{2} \mathbb{E}_{\equiv|X_{k}}[\left|\left| \mathcal{G}_{k} \right|\right|^{2}]\\
     \implies F(u_{k+1}) - F(u_{k}) \leq - \eta_{eff}\mathbb{E}_{\equiv|X_{k}}[\left<\nabla F(u_{k}),\mathcal{G}_{k}\right>] + \frac{\eta_{eff}^{2}L}{2} \mathbb{E}_{\equiv|X_{k}}[\left|\left| \mathcal{G}_{k} \right|\right|^{2}]\\
     \textcolor{magenta}{\text{($\because \mathbb{E}_{\equiv|X_{k}}[F(u_{k+1})] = F(u_{k+1})$ and $\mathbb{E}_{\equiv|X_{k}}[F(u_{k})] = F(u_{k})$ due to them being average of models)}}
    \end{gathered}
\end{equation*}
By using Lemmas 4 and 5, we get
\begin{equation*}
    \begin{gathered}
      F(u_{k+1}) - F(u_{k}) \leq - \eta_{eff}\mathbb{E}_{\equiv|X_{k}}[\left<\nabla F(u_{k}),\mathcal{G}_{k}\right>] + \frac{\eta_{eff}^{2}L}{2} \mathbb{E}_{\equiv|X_{k}}[\left|\left| \mathcal{G}_{k} \right|\right|^{2}]\\
      \leq - \eta_{eff} \left[ \frac{1}{2}||\nabla F(u_{k})||^{2} + \frac{1}{2cm}\sum_{i \in C_{k}}||\nabla F(x_{k}^{(i)})||^{2} - \frac{1}{2cm}\sum_{i \in C_{k}}||\nabla F(u_{k}) - \nabla F(x_{k}^{(i)})||^{2} \right] \\+ \frac{\eta_{eff}^{2}L}{2} \left[ \frac{||\nabla F(X_{K})||^{2}_{F}}{cm} + \frac{\sigma^{2}}{cm} \right]\\
      \leq - \frac{\eta_{eff}}{2}||\nabla F(u_{k})||^{2} - \frac{\eta_{eff}}{2cm}\sum_{i \in C_{k}}||\nabla F(x_{k}^{(i)})||^{2} + \frac{\eta_{eff}}{2cm}\sum_{i \in C_{k}}||\nabla F(u_{k}) - \nabla F(x_{k}^{(i)})||^{2} \\+
      \frac{\eta_{eff}^{2}L}{2}  \sum_{i \in C_{k}}\frac{||\nabla F(x_{k}^{(i)})||^{2}}{cm} + \frac{\eta_{eff}^{2}L\sigma^{2}}{2cm}\\
      \leq - \frac{\eta_{eff}}{2}||\nabla F(u_{k})||^{2} - \frac{\eta_{eff}}{2}\left[ 1 - \eta_{eff}L  \right]\frac{1}{cm}\sum_{i \in C_{k}}||\nabla F(x_{k}^{(i)})||^{2} \\ + \frac{\eta_{eff}^{2}L\sigma^{2}}{2cm} + \frac{\eta_{eff}L^{2}}{2cm}\sum_{i \in C_{k}}||u_{k} - x_{k}^{(i)}||^{2}\\
      \textcolor{magenta}{\text{($\because$ Assumption 1)}}
    \end{gathered}
\end{equation*}

Rearranging a bit, we get,
\begin{equation*}
    \begin{gathered}
      \frac{\eta_{eff}}{2}||\nabla F(u_{k})||^{2} \leq F(u_{k}) - F(u_{k+1}) + \frac{\eta_{eff}^{2}L\sigma^{2}}{2cm} + \frac{\eta_{eff}L^{2}}{2cm}\sum_{i \in C_{k}}||u_{k} - x_{k}^{(i)}||^{2} \\ - \frac{\eta_{eff}}{2}\left[ 1 - \eta_{eff}L  \right]\frac{1}{cm}\sum_{i \in C_{k}}||\nabla F(x_{k}^{(i)})||^{2}
    \end{gathered}
\end{equation*}
Dividing by $\eta_{eff}/2$, we get
\begin{equation*}
    \begin{gathered}
      ||\nabla F(u_{k})||^{2} \leq \frac{2[ F(u_{k}) - F(u_{k+1})]}{\eta_{eff}} + \frac{\eta_{eff}L\sigma^{2}}{cm} + \frac{L^{2}}{cm}\sum_{i \in C_{k}}||u_{k} - x_{k}^{(i)}||^{2} \\ - \left[ 1 - \eta_{eff}L  \right]\frac{1}{cm}||\nabla F(X_{k})||^{2}_{F}
    \end{gathered}
\end{equation*}
Averaging over all iterates from $k=1$ to $k=K$ on both sides, we get
\begin{equation*}
    \begin{gathered}
      \frac{1}{K}\sum_{k=1}^{K}||\nabla F(u_{k})||^{2} \leq \frac{2[ \sum_{k=1}^{K}F(u_{k}) - \sum_{k=1}^{K}F(u_{k+1})]}{\eta_{eff}K} + \frac{\eta_{eff}L\sigma^{2}}{cmK}\sum_{k=1}^{K}1 + \frac{L^{2}}{Kcm}\sum_{k=1}^{K}\sum_{i \in C_{k}}||u_{k} - x_{k}^{(i)}||^{2} \\ - \left[ 1 - \eta_{eff}L  \right]\frac{1}{K}\sum_{k=1}^{K}\frac{||\nabla F(X_{k})||^{2}_{F}}{cm}
    \end{gathered}
\end{equation*}
Now,
\begin{equation*}
    \begin{gathered}
      2\sum_{k=1}^{K}[F(u_{k}) - F(u_{k+1})]   = F(u_{1}) - F(u_{k}) \leq F(u_{1}) - F_{inf}
    \end{gathered}
\end{equation*}
Thus, we get
\begin{equation*}
    \begin{gathered}
      \frac{1}{K}\sum_{k=1}^{K}||\nabla F(u_{k})||^{2} \leq \frac{2[ F(u_{1}) - F_{inf}]}{\eta_{eff}K} + \frac{\eta_{eff}L\sigma^{2}}{cm} + \frac{L^{2}}{Kcm}\sum_{k=1}^{K}\sum_{i \in C_{k}}||u_{k} - x_{k}^{(i)}||^{2} \\ - \left[ 1 - \eta_{eff}L  \right]\frac{1}{K}\sum_{k=1}^{K}\frac{||\nabla F(X_{k})||^{2}_{F}}{cm}
    \end{gathered}
\end{equation*}
Taking expectations on both sides, we get,
\begin{equation*}
    \begin{gathered}
      \mathbb{E}\left[ \frac{1}{K}\sum_{k=1}^{K}||\nabla F(u_{k})||^{2} \right] \leq \frac{2[ F(u_{1}) - F_{inf}]}{\eta_{eff}K} + \frac{\eta_{eff}L\sigma^{2}}{cm} + \frac{L^{2}}{Kcm}\sum_{k=1}^{K}\sum_{i \in C_{k}}\mathbb{E}||u_{k} - x_{k}^{(i)}||^{2} \\ - \left[ 1 - \eta_{eff}L  \right]\frac{1}{K}\sum_{k=1}^{K}\frac{\mathbb{E}||\nabla F(X_{k})||^{2}_{F}}{cm}
    \end{gathered}
\end{equation*}
If $\eta_{eff}L \leq 1$, then,
\begin{equation*}
    \begin{gathered}
      \mathbb{E}\left[ \frac{1}{K}\sum_{k=1}^{K}||\nabla F(u_{k})||^{2} \right] \leq \frac{2[ F(u_{1}) - F_{inf}]}{\eta_{eff}K} + \frac{\eta_{eff}L\sigma^{2}}{cm} + \frac{L^{2}}{Kcm}\sum_{k=1}^{K}\sum_{i \in C_{k}}\mathbb{E}||u_{k} - x_{k}^{(i)}||^{2}
    \end{gathered}
\end{equation*}
Now $u_{k} = X_{k}\frac{1_{m+v}}{m+v}$. Adding some positive term to RHS we get,
\begin{equation*}
    \begin{gathered}
      \sum_{i \in C_{k}}||u_{k} - x_{k}^{(i)}||^{2} \leq \sum_{i=1}^{m}||u_{k} - x_{k}^{(i)}||^{2} \leq \sum_{i=1}^{m}||u_{k} - x_{k}^{(i)}||^{2} + \sum_{j=1}^{v}||u_{k} - z_{k}^{(j)}||^{2}\\
      = \left|\left| U1_{m+v}^{T} - X_{k}\right|\right|^{2}_{F}\\
      = \left|\left| X_{k}\frac{1_{m+v} 1_{m+v}^{T}}{m+v} - X_{k}\right|\right|^{2}_{F}\\
      = \left|\left| X_{k}J - X_{k}\right|\right|^{2}_{F}\\
      = \left|\left| X_{k}(I - J)\right|\right|^{2}_{F}
    \end{gathered}
\end{equation*}
Substituting, we get Lemma 2.

\textbf{NOTE}: If we do not consider the specific value for learning rate, we get from derivation:
\begin{equation}
    \label{eqn9}
    \begin{gathered}
      \mathbb{E}\left[ \frac{1}{K}\sum_{k=1}^{K}||\nabla F(u_{k})||^{2} \right] \leq \frac{2[ F(u_{1}) - F_{inf}]}{\eta_{eff}K} + \frac{\eta_{eff}L\sigma^{2}}{cm} + \frac{L^{2}}{Kcm}\sum_{k=1}^{K}\mathbb{E}\left|\left| X_{k}(I - J)\right|\right|^{2}_{F} \\ - \left[ 1 - \eta_{eff}L  \right]\frac{1}{K}\sum_{k=1}^{K}\frac{||\nabla F(X_{k})||^{2}_{F}}{cm}
    \end{gathered}
\end{equation}
\subsection{Other Lemmas to prove for Theorem 1}
\subsubsection{Lemma 6} \label{iid:lemma6}
\textcolor{red}{\textbf{For two matrices $A \in \mathbb{R}^{d\times m}$ and $B \in \mathbb{R}^{m\times m}$ we have:
\begin{equation}
    ||AB||_{F} \leq ||A||_{op}||B||_{F}
\end{equation}}}
\textit{Proof}:
Let the rows of A be denoted by $a_{1}^{T}, a_{2}^{T}, ..., a_{d}^{T}$ and $\mathcal{I} = \{ i \in [1,d]: ||a_{i}|| \neq 0\}$. Then,
\begin{equation*}
    \begin{gathered}
    ||AB||^{2}_{F} = \sum_{i=1}^{d}||a_{i}^{T}B||^{2} = \sum_{i \in \mathcal{I}}||Ab_{i}||^{2}\\
    = \sum_{i \in \mathcal{I}}\frac{||Ab_{i}||^{2}}{||b_{i}||^{2}} ||b_{i}||^{2}\\
    \leq \sum_{i \in \mathcal{I}}||A||^{2}_{op} ||b_{i}||^{2}\\
    \leq ||A||^{2}_{op} \sum_{i \in \mathcal{I}}||b_{i}||^{2}\\
    \leq ||A||_{op}^{2}||B||_{F}^{2}
    \end{gathered}
\end{equation*}

\subsubsection{Lemma 7} \label{iid:lemma7}
\textcolor{red}{\textbf{For two matrices $A \in \mathbb{R}^{m\times n}$ and $B \in \mathbb{R}^{n\times m}$, we have:
\begin{equation}
    |Tr(AB)| \leq ||A||_{F} ||B||_{F}
\end{equation}}}
\textit{Proof}:
Let the $i^{th}$ row of A be denoted by $a_{i}^{T} \in \mathbb{R}^{n}$ and the $i^{th}$ column of B be denoted by $b_{i} \in \mathbb{R}^{n}$. Then, by definition of matrix trace,
\begin{equation*}
    \begin{gathered}
    Tr(AB) = \sum_{i=1}^{m}\sum_{j=1}^{n}A_{ij}B_{ji} = \sum_{i=1}^{m}a_{i}^{T}b_{i} = \sum_{i=1}^{m}\left< a_{i}, b_{i} \right>
    \end{gathered}
\end{equation*}
By Cauchu Shwartz Inequality,
\begin{equation*}
    \begin{gathered}
    |\left< u, v \right>|^{2} \leq \left< u, u \right>\left< v, v \right> = ||u||||v||
    \end{gathered}
\end{equation*}
For sums of inner products, we have,
\begin{equation*}
    \begin{gathered}
    \left| \sum_{i=1}^{m} \left< a_{i}, b_{i} \right>\right|^{2} \leq \sqrt{\sum_{i=1}^{m} a_{i}^{2}}\sqrt{\sum_{i=1}^{m} b_{i}^{2}}
    \end{gathered}
\end{equation*}
Thus, we have,
\begin{equation*}
    \begin{gathered}
    \left| \sum_{i=1}^{m} \left< a_{i}, b_{i} \right>\right|^{2} \leq \left(  \sum_{i=1}^{m} \left| \left| a_{i} \right| \right|^{2} \right) \left(  \sum_{i=1}^{m} \left| \left| b_{i} \right| \right|^{2} \right)
    = \left| \left|A\right| \right|_{F}^{2} \left| \left|B\right| \right|_{F}^{2}
    \end{gathered}
\end{equation*}

\subsubsection{Lemma 8} \label{iid:lemma8}
\textcolor{red}{\textbf{Let there be some $m\times m$ matrix $\Phi_{s,k}$ that satisfies assumption 5. Then,
\begin{equation}
    ||\Phi_{s,k}^{T}(I-J)||^{2}_{F} \leq \delta
\end{equation}
where $\delta \in [0, c(m-1)]$}}\\
\textit{Proof}:
We can say,
\begin{equation*}
    \begin{gathered}
    ||\Phi_{s,k}^{T}(I-J)||^{2}_{F} = ||\Phi_{s,k}^{T}-J||^{2}_{F}
    \end{gathered}
\end{equation*}
We also know that product of column stochastic matrices is column stochastic. Hence, $\Phi_{s,k}^{T}$ is column stochastic.
Now, we consider the definition of the frobenius norm
\begin{equation*}
    \begin{gathered}
    ||A||_{F}^{2} = Tr(A^{T}A) = \sum_{i=1, j=1}^{m} a_{ij}^{2} \\
    = (a_{11}^{2} + a_{12}^{2} \cdots + a_{1m}^{2}) + (a_{21}^{2} + a_{22}^{2} \cdots + a_{2m}^{2}) + \cdots + (a_{m1}^{2} + a_{m2}^{2} \cdots + a_{mm}^{2})\\
    = (a_{11}^{2} + a_{21}^{2} \cdots + a_{m1}^{2}) + (a_{12}^{2} + a_{22}^{2} \cdots + a_{m2}^{2}) + \cdots + (a_{1m}^{2} + a_{2m}^{2} \cdots + a_{mm}^{2})
    \end{gathered}
\end{equation*}
where the column stochastic matrix $A_{m\times m}$ is as follows:
    $$A =
    \begin{bmatrix}
        a_{11} & a_{12} & \cdots & a_{1m}\\
        a_{21} & a_{22} & \cdots & a_{2m}\\
        \vdots & \vdots & \ddots & \vdots\\
        a_{m1} & a_{m2} & \cdots & a_{mm}
    \end{bmatrix}$$
Let the non-selected clients have entire columns as zero. Also, let us write this in a form similar to $||\Phi_{s,k}^{T}-J||^{2}_{F}$, which implies some constant $d = \frac{1}{m}$ is subtracted from the all elements of A (For $\Phi_{s,k}^{T}-J$ it will be $d = \frac{1}{m+v}$). So we write as:
\begin{equation*}
    \begin{gathered}
    ||A||_{F}^{2} = ((a_{11}-d)^{2} + (a_{21}-d)^{2} \cdots + (a_{m1}^{2}-d)) + ((a_{12}-d)^{2} + (a_{22}-d)^{2} \cdots + (a_{m2}-d)^{2}) \\+ \cdots + ((a_{1m}-d)^{2} + (a_{2m}-d)^{2} \cdots + (a_{mm}-d)^{2})
    \end{gathered}
\end{equation*}
Now, 
\begin{equation*}
    \begin{gathered}
    ((a_{11}-d)^{2} + (a_{21}-d)^{2} \cdots + (a_{m1}-d)^{2}) + ((a_{12}-d)^{2} + (a_{22}-d)^{2} \cdots + (a_{m2}-d)^{2}) \\+ \cdots + ((a_{1m}-d)^{2} + (a_{2m}-d)^{2} \cdots + (a_{mm}-d)^{2})\\
    = [((a_{11}-d) + (a_{21}-d) \cdots + (a_{m1}-d))^{2} - 2\sum_{j\neq i} (a_{i1}-d).(a_{j1}-d)]+ \\\cdots + [((a_{1m}-d) + (a_{2m}-d) \cdots + (a_{mm}-d))^{2} - 2\sum_{j\neq i} (a_{im}-d).(a_{jm}-d)]\\
    = [(a_{11} + a_{21} \cdots + a_{m1} - md)^{2} - 2\sum_{j\neq i} (a_{i1}-d).(a_{j1}-d) ] + \\\cdots + [(a_{1m} + a_{2m} \cdots + a_{mm}-md) - 2\sum_{j\neq i} (a_{im}-d).(a_{jm}-d) ]\\
    = [(1-md)^{2} - 2\sum_{j\neq i} (a_{i1}-d).(a_{j1}-d)] + \cdots + [(1-md)^{2} - 2\sum_{j\neq i} (a_{im}-d).(a_{jm}-d) ]\\
    = [m(1-md)^{2} -2[\sum_{j\neq i} (a_{i1}-d).(a_{j1}-d) + \cdots +   \sum_{j\neq i} (a_{im}-d).(a_{jm}-d)]\\
    = -2[\sum_{j\neq i} (a_{i1}-d).(a_{j1}-d) + \cdots +   \sum_{j\neq i} (a_{im}-d).(a_{jm}-d)]\\
    \textcolor{magenta}{\left(\because d = \frac{1}{m}\right)}
    \end{gathered}
\end{equation*}
Now, let us rewrite each term as:
\begin{equation*}
    \begin{gathered}
    -2\sum_{j\neq i} (a_{ip}-d).(a_{jp}-d) = -2\sum_{j\neq i}[a_{ip}a_{jp} -d(a_{ip} + a_{jp}) + d^{2}]\\
    = 2d\sum_{j\neq i}(a_{ip} + a_{jp}) - 2[\sum_{j\neq i}a_{ip}a_{jp} + \sum_{j\neq i}d^{2}]
    \end{gathered}
\end{equation*}
We can say that as each element appears at most $(m-1)$ in sum (when all clients are selected), hence,
\begin{equation*}
    \begin{gathered}
    2d\sum_{j\neq i}(a_{ip} + a_{jp}) \leq [(a_{1p}+a_{2p}) + \cdots + (a_{1p}+a_{mp})] + [(a_{2p}+a_{3p}) + \cdots + (a_{2p}+a_{mp})] + \cdots + [a_{(m-1)p}+a_{mp}]\\
    \leq 2d(m-1)\sum_{i=1}^{m}a_{ip} = 2d(m-1)\\
    \end{gathered}
\end{equation*}
Thus, we can say,
\begin{equation*}
    \begin{gathered}
    -2\sum_{j\neq i} (a_{ip}-d).(a_{jp}-d) \leq 
     2d(m-1) - 2\sum_{j\neq i}a_{ip}a_{jp} -d^{2}m(m-1)\\
    \textcolor{magenta}{(\because \text{ there are } \Mycomb{m}{2} \text{ values}  )}\\
    \leq \frac{m-1}{m} - 2\sum_{j\neq i}a_{ip}a_{jp}
    \end{gathered}
\end{equation*}
Now, let us denote the smallest element in the column $p$ as $t_{p}^{(1)}$ 
and the second smallest element as $t_{p}^{(2)}$
. Then,
\begin{equation*}
    \begin{gathered}
    2\sum_{j\neq i} a_{ip}.a_{jp} \geq 2\sum_{j\neq i} t_{p}^{(1)}.t_{p}^{(2)} = 2t_{p}^{(1)}t_{p}^{(2)} \sum_{j\neq i}1  = t_{p}^{(1)}t_{p}^{(2)}m(m-1)\\
    \textcolor{magenta}{(\because \text{ there are } \Mycomb{m}{2} \text{ values}  )}
    \end{gathered}
\end{equation*}
Thus, we can say,
\begin{equation*}
    \begin{gathered}
    -2\sum_{j\neq i} a_{ip}.a_{jp} \leq  -t_{p}^{(1)}t_{p}^{(2)}m(m-1)
    \end{gathered}
\end{equation*}
We can, therefore, say,
\begin{equation*}
    \begin{gathered}
    -2\sum_{j\neq i} (a_{ip}-d).(a_{jp}-d)  
    \leq \frac{m-1}{m}  -t_{p}^{(1)}t_{p}^{(2)}m(m-1)\\
    \end{gathered}
\end{equation*}
For clients who aren't selected, we have,
\begin{equation*}
    \begin{gathered}
    -2\sum_{j\neq i} (a_{ip}-d).(a_{jp}-d)  
    =-2\sum_{j\neq i} d^{2} = -d(m-1) = -\frac{m-1}{m}\\
    \end{gathered}
\end{equation*}
Thus, we get,
\begin{equation*}
    \begin{gathered}
    - 2[\sum_{j\neq i} (a_{i1}-d).(a_{j1}-d) + \cdots +   \sum_{j\neq i} (a_{im}-d).(a_{jm}-d)] \leq \sum_{p=1, p\in C_{k}}^{m} \frac{m-1}{m}  \\- m(m-1)\sum_{p=1,p\in C_{k}}^{m}t_{p}^{(1)}t_{p}^{(2)} - \sum_{p=1, p\notin C_{k}}^{m}\frac{m-1}{m}\\
    \implies - 2[\sum_{j\neq i} (a_{i1}-d).(a_{j1}-d) + \cdots +   \sum_{j\neq i} (a_{im}-d).(a_{jm}-d)] \leq \sum_{p=1, p\in C_{k}}^{m} \frac{m-1}{m}  - m(m-1)\sum_{p=1,p\in C_{k}}^{m}t_{p}^{(1)}t_{p}^{(2)}\\
    \implies - 2[\sum_{j\neq i} (a_{i1}-d).(a_{j1}-d) + \cdots +   \sum_{j\neq i} (a_{im}-d).(a_{jm}-d)] \leq c(m-1) - m(m-1)\sum_{p=1,p\in C_{k}}^{m}t_{p}^{(1)}t_{p}^{(2)}\\
    \implies - 2[\sum_{j\neq i} (a_{i1}-d).(a_{j1}-d) + \cdots +   \sum_{j\neq i} (a_{im}-d).(a_{jm}-d)] \leq c(m-1) - m(m-1)[t_{1}^{(1)}t_{1}^{(2)} + \cdots +  t_{m}^{(1)}t_{m}^{(2)}]
    \end{gathered}
\end{equation*}
Let the smallest pair of these be $t^{(1)}t^{(2)}$. Then,
\begin{equation*}
    \begin{gathered}
    - 2[\sum_{j\neq i} (a_{i1}-d).(a_{j1}-d) + \cdots +   \sum_{j\neq i} (a_{im}-d).(a_{jm}-d)] \leq c(m-1) -m(m-1)[t^{(1)}t^{(2)}  + \cdots +  t^{(1)}t^{(2)}]\\
    \implies - 2[\sum_{j\neq i} (a_{i1}-d).(a_{j1}-d) + \cdots +   \sum_{j\neq i} (a_{im}-d).(a_{jm}-d)] \leq \underbrace{c(m-1)[1 - m^{2}t^{(1)}t^{(2)}]}_{\delta}\\
    \end{gathered}
\end{equation*}
If  $t^{(1)} \rightarrow \frac{1}{m}$ and $t^{(2)} \rightarrow \frac{1}{m}$ i.e, we are close to the largest possible value of these smallest values, or we can say we are close to the uniform aggregation state, then,
\begin{equation*}
    \begin{gathered}
    - 2[\sum_{j\neq i} (a_{i1}-d).(a_{j1}-d) + \cdots +   \sum_{j\neq i} (a_{im}-d).(a_{jm}-d)] \leq 0\\
    \implies ||A||_{F}^{2} = 0 \text{ when } c=1\\
    \end{gathered}
\end{equation*}
If $t^{(1)} \rightarrow 0$ and $t^{(2)} \rightarrow 0$ i.e, we are close to the smallest possible value of these small values, or we can say we are close to the aggregation state where we heavily bias against some clients. Then,
\begin{equation*}
    \begin{gathered}
    - 2[\sum_{j\neq i} a_{i1}.a_{j1} + \cdots +   \sum_{j\neq i} a_{im}.a_{jm}] \leq c(m-1)\\
    \implies ||A||_{F}^{2} \leq c(m-1)\\
    \end{gathered}
\end{equation*}
This is a larger value than the one we get with uniform aggregation. Thus, getting closer to uniform aggregation allows us to get a smaller value of $\delta$.

\subsection{Theorem 1} \label{theorem1}
As per Lemma 2 derivation, we achieved Equation \ref{eqn9}. We want to provide an upper bound to the network error term.\\

Now,
\begin{equation*}
    \begin{gathered}
    X_{k}(I-J) = (X_{k-1} - \eta G_{k-1})S_{k-1}^{T}(I-J) = X_{k-1}S_{k-1}^{T}(I-J) - \eta G_{k-1}S_{k-1}^{T}(I-J)\\
    \implies X_{k-1}S_{k-1}^{T}(I-J) - \eta (G_{k-1}S_{k-1}^{T}-G_{k-1}S_{k-1}^{T}J) = X_{k-1}S_{k-1}^{T}(I-J) - \eta G_{k-1}(S_{k-1}^{T}-J)
    \end{gathered}
\end{equation*}
Expanding $X_{k-1}$ further, we get after rearranging,
\begin{equation*}
    \begin{gathered}
    X_{k}(I-J) = X_{k-2}S_{k-2}^{T}S_{k-1}^{T}(I-J) - \eta G_{k-2}(S_{k-2}^{T}S_{k-1}^{T}-J) - \eta G_{k-1}(S_{k-1}^{T}-J)
    \end{gathered}
\end{equation*}
Going all the way, we get,
\begin{equation*}
    \begin{gathered}
    X_{k}(I-J) = X_{1}\Phi_{1,k-1}^{T}(I-J) - \eta \sum_{s=1}^{k-1} G_{s}\Phi_{s,k-1}^{T}(I-J)
    \end{gathered}
\end{equation*}
where $\Phi_{s,k-1}^{T} = \prod_{l=s}^{k-1}S_{l}^{T}$.\\
Taking the squared frobenius norm and expectations on both sides, we get,
\begin{equation*}
    \begin{gathered}
    \mathbb{E} ||X_{k}(I-J)||^{2}_{F} = \mathbb{E} \left|\left|X_{1}\Phi_{1,k-1}^{T}(I-J) - \eta\sum_{s=1}^{k-1} G_{s}\Phi_{s,k-1}^{T}(I-J)\right|\right|^{2}_{F}\\
    \implies \mathbb{E} ||X_{k}(I-J)||^{2}_{F} = \mathbb{E} \left|\left|X_{1}\Phi_{1,k-1}^{T}(I-J)\right|\right|^{2}_{F} +  \eta^{2} \mathbb{E} \left|\left|\sum_{s=1}^{k-1} G_{s}\Phi_{s,k-1}^{T}(I-J)\right|\right|^{2}_{F} \\- 2\left< X_{1}\Phi_{1,k-1}^{T}(I-J), \eta\sum_{s=1}^{k-1} G_{s}\Phi_{s,k-1}^{T}(I-J)  \right>\\
    \implies \mathbb{E} ||X_{k}(I-J)||^{2}_{F} \leq \mathbb{E} \left|\left|X_{1}\Phi_{1,k-1}^{T}(I-J)\right|\right|^{2}_{F} +  \eta^{2} \mathbb{E} \left|\left|\sum_{s=1}^{k-1} G_{s}\Phi_{s,k-1}^{T}(I-J)\right|\right|^{2}_{F}
    \end{gathered}
\end{equation*}

Now let $K = j\tau + i$, where $j$ denotes the index of communication rounds and $i$ denotes index of local updates. Then, considering that for non-communicating rounds $S_{k} = I$, then we get,
\begin{equation*}
    \Phi_{s,k-1} = 
    \begin{cases}
        I  = \Phi_{j,j}^{T} & j\tau < s \leq j\tau + i\\
        W_{j}^{T}  = \Phi_{j-1,j}^{T} & (j-1)\tau < s \leq j\tau\\
        W_{j-1}^{T}W_{j}^{T}  = \Phi_{j-2,j}^{T} & (j-2)\tau < s \leq (j-1)\tau\\
        \vdots\\
        W_{1}^{T}...W_{j-1}^{T}W_{j}^{T}  = \Phi_{0,j}^{T} & 0 < s \leq \tau\\
    \end{cases}
\end{equation*}
For the ease of writing, we write $\Phi_{j\tau,j\tau+i} = \Phi_{j,j}$ and so on. We also define accumulated stochastic gradient within one local period as:
\begin{equation*}
    Y_{r} = \sum_{s=r\tau+1}^{(r+1)\tau} G_{s} \tag{$0 \leq r < j$}
\end{equation*}
and,
\begin{equation*}
    \begin{gathered}
    Y_{j} = \sum_{s=j\tau+1}^{j\tau+i-1} G_{s}
    \end{gathered}
\end{equation*}
Similarly, define accumulated full batch gradient
\begin{equation*}
    Q_{r} = \sum_{s=r\tau+1}^{(r+1)\tau} \nabla F(X_{s}) \tag{$0 \leq r < j$}
\end{equation*}
and,
\begin{equation*}
    \begin{gathered}
    Q_{j} = \sum_{s=j\tau+1}^{j\tau+i-1} \nabla F(X_{s})
    \end{gathered}
\end{equation*}
Then, we have,
\begin{equation*}
    \begin{gathered}
    \sum_{s=1}^{\tau} G_{s}\Phi_{s,k-1}^{T}(I-J) = Y_{0}\Phi_{0,j}^{T}(I-J)\\
    \sum_{s=\tau+1}^{2\tau} G_{s}\Phi_{s,k-1}^{T}(I-J) = Y_{1}\Phi_{1,j}^{T}(I-J)\\
    \vdots\\
    \sum_{s=j\tau+1}^{j\tau+i-1} G_{s}\Phi_{s,k-1}^{T}(I-J) = Y_{j}(I-J)\\
    \end{gathered}
\end{equation*}
Adding up, we get,
\begin{equation*}
    \begin{gathered}
    \sum_{s=1}^{j\tau+i-1} G_{s}\Phi_{s,k-1}^{T}(I-J) = \sum_{r=0}^{j}Y_{r}(\Phi_{r,j}^{T}-J)
    \end{gathered}
\end{equation*}
Now the network error can be decomposed into
\begin{equation*}
    \begin{gathered}
    \mathbb{E} ||X_{k}(I-J)||^{2}_{F} \leq \mathbb{E} \left|\left|X_{1}\Phi_{1,k-1}^{T}(I-J)\right|\right|^{2}_{F} + \eta^{2} \mathbb{E} \left|\left|\sum_{r=0}^{j} Y_{r}\Phi_{r,j}^{T}(I-J)\right|\right|^{2}_{F}\\
    = \mathbb{E} \left|\left|X_{1}\Phi_{1,k-1}^{T}(I-J)\right|\right|^{2}_{F} + \eta^{2} \mathbb{E} \left|\left|\sum_{r=0}^{j} (Y_{r}-Q_{r})\Phi_{r,j}^{T}(I-J) + \sum_{r=0}^{j} Q_{r}\Phi_{r,j}^{T}(I-J)\right|\right|^{2}_{F}\\
    \leq \underbrace{\mathbb{E} \left|\left|X_{1}\Phi_{1,k-1}^{T}(I-J)\right|\right|^{2}_{F}}_{T0} + \underbrace{2\eta^{2} \mathbb{E} \left|\left|\sum_{r=0}^{j} (Y_{r}-Q_{r})\Phi_{r,j}^{T}(I-J)\right|\right|^{2}_{F}}_{\text{T1}} + \underbrace{2\eta^{2} \mathbb{E} \left|\left|\sum_{r=0}^{j} Q_{r}\Phi_{r,j}^{T}(I-J)\right|\right|^{2}_{F}}_{\text{T2}}\\
    \textcolor{magenta}{(\because ||a+b||^{2} \leq 2||a||^{2} + 2||b||^{2})}
    \end{gathered}
\end{equation*}
We now look to bound T0, T1 and T2. We will derive bounds for average of all iterates.
\subsubsection{Bounding T0} \label{t0}
\begin{equation*}
    \begin{gathered}
    T0 = \mathbb{E} \left|\left|X_{1}\Phi_{1,k-1}^{T}(I-J)\right|\right|^{2}_{F}\\
    = \mathbb{E} \left|\left|X_{1}(\Phi_{1,k-1}^{T}-J)\right|\right|^{2}_{F}\\
    \leq \mathbb{E} \left|\left|X_{1}\right|\right|^{2}_{op} \left|\left| \Phi_{1,k-1}^{T}-J\right|\right|^{2}_{F}\\
    \leq \mathbb{E} \left|\left|X_{1}\right|\right|^{2}_{F} \left|\left| \Phi_{1,k-1}^{T}-J\right|\right|^{2}_{F}\\
    \textcolor{magenta}{(\because ||A||_{op} \leq ||A||_{F})}\\
    \leq \mathbb{E} \left|\left|X_{1}\right|\right|^{2}_{F} \delta\\
    \textcolor{magenta}{(\because \text{Lemma 8})}\\
    \leq \delta\left|\left|X_{1}\right|\right|^{2}_{F}
    \end{gathered}
\end{equation*}

\subsubsection{Bounding T1} \label{t1}
\begin{equation*}
    \begin{gathered}
    T1 = 2\eta^{2} \mathbb{E} \left|\left|\sum_{r=0}^{j} (Y_{r}-Q_{r})\Phi_{r,j}^{T}(I-J)\right|\right|^{2}_{F} = 2\eta^{2} \sum_{r=0}^{j} \mathbb{E} \left|\left| (Y_{r}-Q_{r})\Phi_{r,j}^{T}(I-J)\right|\right|^{2}_{F}
    \end{gathered}
\end{equation*}
This is possible as cross terms are zero. We can prove this as follows:
\begin{equation*}
    \begin{gathered}
    2\eta^{2} \mathbb{E} \left|\left|\sum_{r=0}^{j} (Y_{r}-Q_{r})\Phi_{r,j}^{T}(I-J)\right|\right|^{2}_{F} = 2\eta^{2} \sum_{r=0}^{j} \mathbb{E} \left|\left| (Y_{r}-Q_{r})\Phi_{r,j}^{T}(I-J)\right|\right|^{2}_{F} \\+ 2\eta^{2}\sum_{l,s=0, l\neq s}^{j} \mathbb{E}\left< (Y_{l}-Q_{l})\Phi_{l,j}^{T}(I-J), (Y_{s}-Q_{s})\Phi_{s,j}^{T}(I-J) \right>\\
    = 2\eta^{2} \sum_{r=0}^{j} \mathbb{E} \left|\left| (Y_{r}-Q_{r})\Phi_{r,j}^{T}(I-J)\right|\right|^{2}_{F} + 2\eta^{2}\sum_{l,s=0, l\neq s}^{j} \left< \mathbb{E}(Y_{l}-Q_{l})\Phi_{l,j}^{T}(I-J), \mathbb{E}(Y_{s}-Q_{s})\Phi_{s,j}^{T}(I-J) \right>
    \end{gathered}
\end{equation*}
Now,
\begin{equation*}
    \begin{gathered}
    \mathbb{E}[Y_{r}] = \sum_{s=r\tau+1}^{(r+1)\tau} \mathbb{E}[G_{s}] = \sum_{s=r\tau+1}^{(r+1)\tau} [\mathbb{E}[g_{1}(x_{s}^{1})], \mathbb{E}[g_{1}(x_{s}^{1})], ...]\\
    = \sum_{s=r\tau+1}^{(r+1)\tau} [\nabla F_{1}(x_{s}^{1}), \nabla F_{1}(x_{s}^{1}), ...]  = Q_{r}
    \end{gathered}
\end{equation*}
Thus, the cross terms become zero. By Lemma 6, we have,
\begin{equation*}
    \begin{gathered}
    \left|\left| (Y_{r}-Q_{r})\Phi_{r,j}^{T}(I-J)\right|\right|^{2}_{F} \leq  \left|\left| Y_{r}-Q_{r}\right|\right|^{2}_{op} \left|\left| \Phi_{r,j}^{T}(I-J)\right|\right|^{2}_{F}\\
    \implies \left|\left| (Y_{r}-Q_{r})\Phi_{r,j}^{T}(I-J)\right|\right|^{2}_{F} \leq  \left|\left| Y_{r}-Q_{r}\right|\right|^{2}_{F} \left|\left| \Phi_{r,j}^{T}(I-J)\right|\right|^{2}_{F}\\
    \end{gathered}
\end{equation*}
Thus, we have,
\begin{equation*}
    \begin{gathered}
    T1 \leq 2\eta^{2} \sum_{r=0}^{j} \mathbb{E} \left|\left| Y_{r}-Q_{r}\right|\right|^{2}_{F} \left|\left| \Phi_{r,j}^{T}(I-J)\right|\right|^{2}_{F}\\
    \implies T1 \leq 2\eta^{2} \sum_{r=0}^{j} \mathbb{E} \left|\left| Y_{r}-Q_{r}\right|\right|^{2}_{F} \delta\\
    \textcolor{magenta}{(\because \text{Lemma 8})}\\
    \implies T1 \leq 2\eta^{2} \sum_{r=0}^{j-1} \mathbb{E} \left|\left| Y_{r}-Q_{r}\right|\right|^{2}_{F} \delta + 2\eta^{2} \mathbb{E} \left|\left| Y_{j}-Q_{j}\right|\right|^{2}_{F}\delta\\
    \end{gathered}
\end{equation*}
Now, for any $0\leq r < j$,
\begin{equation*}
    \begin{gathered}
     \mathbb{E} \left|\left| Y_{r}-Q_{r}\right|\right|^{2}_{F} = \mathbb{E} \left[ \left|\left| \sum_{s=r\tau+1}^{(r+1)\tau}[G_{s}-\nabla F(X_{s})]\right|\right|^{2}_{F} \right] = \sum_{i = 1}^{m} \mathbb{E} \left[ \left|\left| \sum_{s=r\tau+1}^{(r+1)\tau}[g(x_{s}^{(i)})-\nabla F(x_{s}^{(i)})]\right|\right|^{2} \right]\\
     = \sum_{i = 1}^{m}  \left( \mathbb{E} \left[ \sum_{s=r\tau+1}^{(r+1)\tau} \left|\left| g(x_{s}^{(i)})-\nabla F(x_{s}^{(i)})\right|\right|^{2} \right] + \mathbb{E} \left[  \sum_{s,l \in S_{t}, s\neq l}\left< g(x_{s}^{(i)})-\nabla F(x_{s}^{(i)}), g(x_{l}^{(i)})-\nabla F(x_{l}^{(i)}) \right> \right] \right)
    \end{gathered}
\end{equation*}
Here, the cross terms are zero. We prove as follows:
\begin{equation*}
    \begin{gathered}
     \mathbb{E} \left< g(x_{s}^{(i)})-\nabla F(x_{s}^{(i)}), g(x_{l}^{(i)})-\nabla F(x_{l}^{(i)}) \right> =   \mathbb{E}_{x_{s}^{i}, \xi_{s}^{i}, x_{l}^{i}} \mathbb{E}_{\xi_{l}^{i} | x_{s}^{i}, \xi_{s}^{i}, x_{l}^{i}} \left[ \left< g(x_{s}^{(i)})-\nabla F(x_{s}^{(i)}), g(x_{l}^{(i)})-\nabla F(x_{l}^{(i)}) \right> \right]\\
     \textcolor{magenta}{(\because \mathbb{E}_{x}[\mathbb{E}_{y|x} (Z(x,y))] = \mathbb{E}(Z))}\\
     = \mathbb{E}_{x_{s}^{i}, \xi_{s}^{i}, x_{l}^{i}} \left[ \left< \mathbb{E}_{\xi_{l}^{i} | x_{s}^{i}, \xi_{s}^{i}, x_{l}^{i}} \left[ g(x_{s}^{(i)})-\nabla F(x_{s}^{(i)}) \right], \mathbb{E}_{\xi_{l}^{i} | x_{s}^{i}, \xi_{s}^{i}, x_{l}^{i}} \left[ g(x_{l}^{(i)})-\nabla F(x_{l}^{(i)}) \right] \right> \right]\\
     = \mathbb{E}_{x_{s}^{i}, \xi_{s}^{i}, x_{l}^{i}} \left[ \left<  g(x_{s}^{(i)})-\nabla F(x_{s}^{(i)}), \mathbb{E}_{\xi_{l}^{i} | x_{s}^{i}, \xi_{s}^{i}, x_{l}^{i}} \left[ g(x_{l}^{(i)})-\nabla F(x_{l}^{(i)}) \right] \right> \right]\\
     = \mathbb{E} \left[ \left<  g(x_{s}^{(i)})-\nabla F(x_{s}^{(i)}), \mathbb{E}_{\xi_{l}^{i} | x_{s}^{i}, \xi_{s}^{i}, x_{l}^{i}} \left[ g(x_{l}^{(i)})-\nabla F(x_{l}^{(i)}) \right] \right> \right]\\
     = \mathbb{E} \left[ \left<  g(x_{s}^{(i)})-\nabla F(x_{s}^{(i)}), 0 \right> \right]\\
     \textcolor{magenta}{(\because \mathbb{E}_{\xi_{l}^{i} | x_{s}^{i}, \xi_{s}^{i}, x_{l}^{i}} \left[ g(x_{l}^{(i)})-\nabla F(x_{l}^{(i)}) \right] = \mathbb{E}_{\xi_{l}^{i} | x_{s}^{i}, \xi_{s}^{i}, x_{l}^{i}} [g(x_{l}^{(i)})]-\nabla F(x_{l}^{(i)}) = \nabla F(x_{l}^{(i)}) - \nabla F(x_{l}^{(i)}) = 0)}
    \end{gathered}
\end{equation*}
Thus, we have,
\begin{equation*}
    \begin{gathered}
     \mathbb{E} \left|\left| Y_{r}-Q_{r}\right|\right|^{2}_{F} 
     = \sum_{i = 1}^{m}   \mathbb{E} \left[ \sum_{s=r\tau+1}^{(r+1)\tau} \left|\left| g(x_{s}^{(i)})-\nabla F(x_{s}^{(i)})\right|\right|^{2} \right] = \mathbb{E}    \left[ \sum_{i = 1}^{m} \sum_{s=r\tau+1}^{(r+1)\tau} \left|\left| g(x_{s}^{(i)})-\nabla F(x_{s}^{(i)})\right|\right|^{2} \right]\\
     = \mathbb{E}    \left[ \sum_{s=r\tau+1}^{(r+1)\tau} \sum_{i = 1}^{m} \left|\left| g(x_{s}^{(i)})-\nabla F(x_{s}^{(i)})\right|\right|^{2} \right]\\
     \textcolor{magenta}{\left( \because \sum_{x}\sum_{y}a = \sum_{y}\sum_{x}a \right)}\\
     =    \left[ \sum_{s=r\tau+1}^{(r+1)\tau} \sum_{i \in C_{k}} \mathbb{E} 
 \left|\left| g(x_{s}^{(i)})-\nabla F(x_{s}^{(i)})\right|\right|^{2} \right]\\
     \leq    \sum_{s=r\tau+1}^{(r+1)\tau} \sum_{i \in C_{k}} \left[ \sigma^{2} \right]\\
     \textcolor{magenta}{\left( \because \text{Assumption 4} \right)}\\
     \leq  \tau cm \sigma^{2} \\
    \end{gathered}
\end{equation*}
Similarly,
\begin{equation*}
    \begin{gathered}
     \mathbb{E} \left|\left| Y_{j}-Q_{j}\right|\right|^{2}_{F} 
     \leq  (i-1) cm \sigma^{2} \\
    \end{gathered}
\end{equation*}
Thus, we get,
\begin{equation*}
    \begin{gathered}
    T1 \leq 2\eta^{2} \sum_{r=0}^{j-1} \left[ \delta \left( \tau cm \sigma^{2} \right) \right] + 2\eta^{2}(i-1) cm \sigma^{2} \delta\\
    \leq 2\eta^{2}cm \sigma^{2} \delta\left[ \tau j + i-1 \right]\\
    \end{gathered}
\end{equation*}
Now, we sum over all iterates in $j^{th}$ local update period (from $i=1$ to $i=\tau$).
\begin{equation*}
    \begin{gathered}
    \sum_{i=1}^{\tau} T1 \leq  2\eta^{2}cm \sigma^{2}\delta\left[ \tau j \sum_{i=1}^{\tau}1 + \sum_{i=1}^{\tau}i-\sum_{i=1}^{\tau}1 \right]\\
    \implies \sum_{i=1}^{\tau} T1 \leq  2\eta^{2}cm \sigma^{2}\delta\left[ \tau^{2}j + \frac{\tau(\tau+1)}{2}-\tau \right]\\
    \implies \sum_{i=1}^{\tau} T1 \leq  \eta^{2}cm \sigma^{2}\delta\left[ 2\tau^{2}j + \tau(\tau-1) \right]\\ 
    \implies \sum_{i=1}^{\tau} T1 \leq  \eta^{2}cm \sigma^{2} \delta \tau\left[ 2\tau j + \tau-1 \right]
    \end{gathered}
\end{equation*}
Then, summing over all periods $j=0$ to $j=K/\tau-1$ we get,
\begin{equation*}
    \begin{gathered}
     \sum_{j=0}^{K/\tau-1}\sum_{i=1}^{\tau} T1 \leq  \eta^{2}cm \sigma^{2}\delta \tau\left[ 2\tau\sum_{j=0}^{K/\tau-1}j  + (\tau-1)\sum_{j=0}^{K/\tau-1}1 \right]\\ 
     \implies \sum_{j=0}^{K/\tau-1}\sum_{i=1}^{\tau} T1 \leq  \eta^{2}cm \sigma^{2} \delta \tau\left[ 2\tau\frac{K/\tau (K/\tau - 1)}{2} + (\tau-1)K/\tau \right]\\
     \implies \sum_{j=0}^{K/\tau-1}\sum_{i=1}^{\tau} T1 \leq  \eta^{2}cm \sigma^{2}\delta \tau\left[ K\left(K/\tau-1\right) + (\tau-1)K/\tau \right]\\
     \implies \sum_{j=0}^{K/\tau-1}\sum_{i=1}^{\tau} T1 \leq  \eta^{2}cm \sigma^{2}\delta K(K-1)
    \end{gathered}
\end{equation*}

\subsubsection{Bounding T2} \label{t2}
\begin{equation*}
    \begin{gathered}
    T2 = 2\eta^{2} \mathbb{E} \left|\left|\sum_{r=0}^{j} Q_{r}\Phi_{r,j}^{T}(I-J)\right|\right|^{2}_{F}
    \end{gathered}
\end{equation*}
Since $||A||^{2}_{F} = Tr(A^{T}A)$, we have,
\begin{equation*}
    \begin{gathered}
    T2 = 2\eta^{2} \mathbb{E} Tr \left[ \left( \sum_{n=0}^{j} Q_{n}\Phi_{n,j}^{T}(I-J) \right)^{T} \left( \sum_{r=0}^{j} Q_{r}\Phi_{r,j}^{T}(I-J) \right) \right]
    \end{gathered}
\end{equation*}
Expanding the trace equation,
\begin{equation*}
    \begin{gathered}
    T2 = 2\eta^{2} \sum_{r=0}^{j} \sum_{n=0}^{j} \mathbb{E} Tr \left[ \left( \left[\Phi_{n,j}^{T}(I-J)\right]^{T}Q_{n}^{T} \right) \left( Q_{r}\Phi_{r,j}^{T}(I-J) \right) \right]
    \\
    \textcolor{magenta}{
    \left(
    \begin{aligned}
        \because \text{By linearity of $Tr$ and $\mathbb{E}$ as well as by distributive property which states that }\\ \left( \sum_{i=1}^{n}A_{i} \right) \left( \sum_{j=1}^{m}B_{j} \right)  = \sum_{i=1}^{n}\sum_{j=1}^{m}A_{i}B_{j}
    \end{aligned}
    \right)}
    \\
    \end{gathered}
\end{equation*}
We now split into two parts:
\begin{itemize}
    \item Diagonal Terms where $r=n$
    \item Off Diagonal Terms where $r\neq n$
\end{itemize}
Thus,
\begin{equation*}
    \begin{gathered}
    T2 = 2\eta^{2} \sum_{r=0}^{j} \sum_{n=0}^{j} \mathbb{E} Tr \left[ \left( \left[\Phi_{n,j}^{T}(I-J)\right]^{T}Q_{n}^{T} \right) \left( Q_{r}\Phi_{r,j}^{T}(I-J) \right) \right]
    \\
    = 2\eta^{2} \sum_{r=0}^{j} \mathbb{E} Tr \left[ \left[\Phi_{r,j}^{T}( I-J)\right]^{T}Q_{r}^{T}   Q_{r}\Phi_{r,j}^{T}(I-J) \right] + 2\eta^{2}  \sum_{n=0}^{j} \sum_{r=0, r\neq n}^{j} \mathbb{E} Tr \left[  \left[\Phi_{n,j}^{T}(I-J)\right]^{T}Q_{n}^{T}  Q_{r}\Phi_{r,j}^{T}(I-J)  \right]\\
    = 2\eta^{2} \sum_{r=0}^{j} \mathbb{E} || Q_{r}\Phi_{r,j}^{T}(I-J) ||^{2}_{F} + 2\eta^{2}  \sum_{n=0}^{j} \sum_{l=0, l\neq n}^{j} \mathbb{E} Tr \left[  \left[\Phi_{n,j}^{T}(I-J)\right]^{T}Q_{n}^{T}  Q_{l}\Phi_{l,j}^{T}(I-J)  \right]\\
    \textcolor{magenta}{(\because \text{By frobenius norm definition})}
    \end{gathered}
\end{equation*}
Now, by Lemma 7,
\begin{equation*}
    \begin{gathered}
     \left|Tr \left[  \left[\Phi_{n,j}^{T}(I-J)\right]^{T}Q_{n}^{T}  Q_{l}\Phi_{l,j}^{T}(I-J)  \right]\right|
     \leq  ||\left[\Phi_{n,j}^{T}(I-J)\right]^{T}Q_{n}^{T}||_{F}  ||Q_{l}\Phi_{l,j}^{T}(I-J)||_{F}\\
     \leq  ||\Phi_{n,j}^{T}(I-J)||_{op}||Q_{n}||_{F}  ||Q_{l}||_{op}||\Phi_{l,j}^{T}(I-J)||_{F}\\
     \textcolor{magenta}{(\because \text{Lemma 6 and }||A||_{F} = ||A^{T}||_{F})}\\
     \leq  ||\Phi_{n,j}^{T}(I-J)||_{F}||Q_{n}||_{F}  ||Q_{l}||_{op}||\Phi_{l,j}^{T}(I-J)||_{F}\\
     \textcolor{magenta}{(\because ||A||_{op} \leq ||A||_{F})}\\
     \leq  \sqrt{\delta}||Q_{n}||_{F}  ||Q_{l}||_{F}\sqrt{\delta}\\
     \textcolor{magenta}{(\because \text{Lemma 8 and 9})}\\
     \leq \frac{\delta}{2} [||Q_{n}||_{F}^{2} + ||Q_{l}||_{F}^{2}]\\
     \textcolor{magenta}{(\because \text{AM $\geq$ GM})}\\
    \end{gathered}
\end{equation*}
Thus, we have,
\begin{equation*}
    \begin{gathered}
    T2 \leq 2\eta^{2} \sum_{r=0}^{j} \mathbb{E} || Q_{r}\Phi_{r,j}^{T}(I-J) ||^{2}_{F} + \eta^{2}\delta  \sum_{n=0}^{j} \sum_{l=0, l\neq n}^{j}  \mathbb{E} \left[  ||Q_{n}||_{F}^{2} + ||Q_{l}||_{F}^{2}  \right]\\
    \leq 2\eta^{2} \sum_{r=0}^{j} \mathbb{E} || Q_{r}||^{2}_{op}||\Phi_{r,j}^{T}(I-J) ||^{2}_{F} + \eta^{2}\delta  \sum_{n=0}^{j} \sum_{l=0, l\neq n}^{j}  \mathbb{E} \left[  ||Q_{n}||_{F}^{2} + ||Q_{l}||_{F}^{2}  \right]\\
    \leq 2\eta^{2} \sum_{r=0}^{j} \mathbb{E} || Q_{r}||^{2}_{F}||\Phi_{r,j}^{T}(I-J) ||^{2}_{F} + \eta^{2}\delta  \sum_{n=0}^{j} \sum_{l=0, l\neq n}^{j}  \mathbb{E} \left[  ||Q_{n}||_{F}^{2} + ||Q_{l}||_{F}^{2}  \right]\\
    \leq 2\eta^{2}\delta \sum_{r=0}^{j} \mathbb{E} || Q_{r}||^{2}_{F} + \eta^{2}\delta  \sum_{n=0}^{j} \sum_{l=0, l\neq n}^{j}  \mathbb{E} \left[  ||Q_{n}||_{F}^{2} + ||Q_{l}||_{F}^{2}  \right]\\
    \end{gathered}
\end{equation*}
Now $||Q_{n}||$ and $||Q_{l}||$ are the same thing, so $\mathbb{E}||Q_{n}|| = \mathbb{E}||Q_{l}||$. Thus,
\begin{equation*}
    \begin{gathered}
    T2 \leq 2\eta^{2}\delta \sum_{r=0}^{j} \mathbb{E} || Q_{r}||^{2}_{F} + 2\eta^{2}\delta  \sum_{n=0}^{j} \sum_{l=0, l\neq n}^{j}  \mathbb{E} \left[  ||Q_{n}||_{F}^{2}  \right]\\
    \leq 2\eta^{2}\delta \sum_{r=0}^{j} \mathbb{E} || Q_{r}||^{2}_{F} + 2\eta^{2}\delta  \sum_{n=0}^{j}  \mathbb{E} \left[  ||Q_{n}||_{F}^{2}  \right]\sum_{l=0, l\neq n}^{j}1\\
    \leq 2\eta^{2}\delta \left[\sum_{r=0}^{j} \mathbb{E} || Q_{r}||^{2}_{F} +  \sum_{n=0}^{j}  \mathbb{E}  ||Q_{n}||_{F}^{2}  \sum_{l=0, l\neq n}^{j}1 \right]\\
    \leq 2\eta^{2}\delta \left[ \sum_{r=0}^{j-1} \mathbb{E} || Q_{r}||^{2}_{F} +   \sum_{n=0}^{j-1}  \mathbb{E}  ||Q_{n}||_{F}^{2}  \sum_{l=0, l\neq n}^{j}1 +  \mathbb{E} || Q_{j}||^{2}_{F} +  \mathbb{E}  ||Q_{j}||_{F}^{2}  \sum_{l=0, l\neq n}^{j}1 \right]\\
    \leq 2\eta^{2}\delta \left[ \sum_{r=0}^{j-1} \mathbb{E} || Q_{r}||^{2}_{F} +  \sum_{n=0}^{j-1}  \mathbb{E}  ||Q_{n}||_{F}^{2}  \sum_{l=0}^{j}1 +  \mathbb{E} || Q_{j}||^{2}_{F} +  \mathbb{E}  ||Q_{j}||_{F}^{2}  \sum_{l=0}^{j}1 \right]\\
    \textcolor{magenta}{(\because \text{More terms on RHS})}\\
    \leq 2\eta^{2}\delta \left[ \sum_{r=0}^{j-1} \mathbb{E} || Q_{r}||^{2}_{F} +  \sum_{n=0}^{j-1}  \mathbb{E}  ||Q_{n}||_{F}^{2}  (j+1) +  \mathbb{E} || Q_{j}||^{2}_{F} +  \mathbb{E}  ||Q_{j}||_{F}^{2}  (j+1) \right]\\
    \leq 2\eta^{2}\delta  \sum_{r=0}^{j-1} (1+(j+1)) \mathbb{E} || Q_{r}||^{2}_{F} +  2\eta^{2}\delta(1+ (j+1)) \mathbb{E} || Q_{j}||^{2}_{F} \\
    \leq 2\eta^{2}\delta  \sum_{r=0}^{j-1} (j+2) \mathbb{E} || \sum_{s=1}^{\tau} \nabla F(X_{r\tau+s})||^{2}_{F} +  2\eta^{2}\delta(j+2) \mathbb{E} || \sum_{s=1}^{i-1} \nabla F(X_{j\tau+s}) ||^{2}_{F} \\
    \end{gathered}
\end{equation*}
Now, By Jensen's Inequality, 
\begin{equation*}
    \begin{gathered}
     \varphi(\mathbb{E}[X]) \leq \mathbb{E}[\varphi(X)] \implies \text{ Take }\varphi = ||\cdot ||^{2}_{F} \text{ and } X = \nabla F(X_{r\tau+s}) \text{, then }
     \\||E[X]||^{2}_{F} = \left| \left| \frac{1}{\tau} \sum_{s=1}^{\tau} \nabla F(X_{r\tau+s})  \right| \right|^{2}_{F} \leq  \frac{1}{\tau} \sum_{s=1}^{\tau} \left| \left| \nabla F(X_{r\tau+s})  \right| \right|^{2}_{F} \\
    \implies \left| \left| \sum_{s=1}^{\tau} \nabla F(X_{r\tau+s})  \right| \right|^{2}_{F} \leq  \tau \sum_{s=1}^{\tau} \left| \left| \nabla F(X_{r\tau+s})  \right| \right|^{2}_{F}\\
    \end{gathered}
\end{equation*}
For second term in a similar way, 
\begin{equation*}
    \begin{gathered}
     \left| \left| \sum_{s=1}^{\tau} \nabla F(X_{r\tau+s})  \right| \right|^{2}_{F} \leq  (i-1) \sum_{s=1}^{\tau} \left| \left| \nabla F(X_{r\tau+s})  \right| \right|^{2}_{F}\\
    \end{gathered}
\end{equation*}
Thus, we get,
\begin{equation*}
    \begin{gathered}
    T2 \leq 2\eta^{2}\delta\tau  \sum_{r=0}^{j-1} \left[ (j+2)  \sum_{s=1}^{\tau}  \mathbb{E} \left| \left| \nabla F(X_{r\tau+s})  \right| \right|^{2}_{F} \right] +  2\eta^{2}\delta(j+2)(i-1) \sum_{s=1}^{i-1} \mathbb{E}  \left| \left| \nabla F(X_{r\tau+s})  \right| \right|^{2}_{F}
    \end{gathered}
\end{equation*}

Next, we sum over all iterates in $j^{th}$ period.
\begin{equation*}
    \begin{gathered}
    \sum_{i=1}^{\tau}T2 \leq 2\eta^{2}\delta\tau  \sum_{r=0}^{j-1} \left[ (j+2)  \sum_{s=1}^{\tau}  \mathbb{E} \left| \left| \nabla F(X_{r\tau+s})  \right| \right|^{2}_{F} \right] \sum_{i=1}^{\tau}1 \\+  2\eta^{2}\delta(j+2)\sum_{i=1}^{\tau}(i-1) \sum_{s=1}^{i-1} \mathbb{E}  \left| \left| \nabla F(X_{r\tau+s})  \right| \right|^{2}_{F}\\
    \end{gathered}
\end{equation*}
Now,
\begin{equation*}
    \begin{gathered}
    \sum_{i=1}^{\tau}(1-i) \sum_{s=1}^{i-1} \mathbb{E} \left|\left|  \nabla F(X_{j\tau + s}) \right|\right|_{F}^{2} \leq \frac{\tau(\tau-1)}{2} \sum_{s=1}^{\tau-1} \mathbb{E} \left|\left|  \nabla F(X_{j\tau + s}) \right|\right|_{F}^{2}\\
    \end{gathered}
\end{equation*}
Thus, we get,
\begin{equation*}
    \begin{gathered}
    \sum_{i=1}^{\tau}T2 \leq 2\eta^{2}\delta\tau^{2}  \sum_{r=0}^{j-1} \left[ (j+2)  \sum_{s=1}^{\tau}  \mathbb{E} \left| \left| \nabla F(X_{r\tau+s})  \right| \right|^{2}_{F} \right] \\+  \eta^{2}\delta\tau(\tau-1)(j+2) \sum_{s=1}^{\tau-1} \mathbb{E} \left|\left|  \nabla F(X_{j\tau + s}) \right|\right|_{F}^{2}\\
    \end{gathered}
\end{equation*}
Now summing over all periods ($j=0$ to $j=K/\tau - 1$), we get
\begin{equation*}
    \begin{gathered}
    \sum_{j=0}^{K/\tau-1}\sum_{i=1}^{\tau}T2 \leq 2\eta^{2}\delta\tau^{2}  \sum_{j=0}^{K/\tau-1}\sum_{r=0}^{j-1} \left[ (j+2)  \sum_{s=1}^{\tau}  \mathbb{E} \left| \left| \nabla F(X_{r\tau+s})  \right| \right|^{2}_{F} \right] \\
    +  \eta^{2}\delta\tau(\tau-1)\sum_{j=0}^{K/\tau-1}(j+2) \sum_{s=1}^{\tau-1} \mathbb{E} \left|\left|  \nabla F(X_{j\tau + s}) \right|\right|_{F}^{2}\\
    \end{gathered}
\end{equation*}
Now, we have,
\begin{equation*}
    \begin{gathered}
    \sum_{j=0}^{K/\tau-1}\sum_{r=0}^{j-1} \left[( j+2)  \sum_{s=1}^{\tau}  \mathbb{E} \left| \left| \nabla F(X_{r\tau+s})  \right| \right|^{2}_{F} \right] \leq S_{series}\sum_{j=0}^{K/\tau-1}\sum_{s=1}^{\tau} \mathbb{E} \left| \left| \nabla F(X_{j\tau+s})  \right| \right|^{2}_{F}\\
    \end{gathered}
\end{equation*}
Here, $S_{series}$ is given by the following method:
\begin{equation*}
    \begin{gathered}
    \sum_{j=0}^{K/\tau-1}\sum_{r=0}^{j-1} \left[ (j+2)  \sum_{s=1}^{\tau}  \mathbb{E} \left| \left| \nabla F(X_{r\tau+s})  \right| \right|^{2}_{F} \right]  =  \sum_{j=0}^{K/\tau-1}\sum_{r=0}^{j-1} \left[ j  \underbrace{\sum_{s=1}^{\tau}  \mathbb{E} \left| \left| \nabla F(X_{r\tau+s})  \right| \right|^{2}_{F}}_{V_{r}} \right]  \\
    + 2\sum_{j=0}^{K/\tau-1}\sum_{r=0}^{j-1} \left[  \underbrace{\sum_{s=1}^{\tau}  \mathbb{E} \left| \left| \nabla F(X_{r\tau+s})  \right| \right|^{2}_{F}}_{V_{r}} \right]\\
    = \sum_{j=0}^{K/\tau-1}\sum_{r=0}^{j-1}  jV_{r}  + 2\sum_{j=0}^{K/\tau-1}\sum_{r=0}^{j-1}  V_{r}
    \end{gathered}
\end{equation*}
The first term is broken down as:
\begin{equation*}
    \begin{gathered}
    \sum_{j=0}^{K/\tau-1}\sum_{r=0}^{j-1}  jV_{r}  = \cancel{\sum_{r=0}^{-1}0.V_{r}} + \sum_{r=0}^{0}1.V_{r} + \sum_{r=0}^{1}2.V_{r}... + \sum_{r=0}^{K/\tau-2}(K/\tau - 1).V_{r}\\
    = \left[ 1.V_{0} \right] + \left[ 2.V_{0} + 2.V_{1} \right] + \left[ 3.V_{0} + 3.V_{1} + 3.V_{2} \right] ... \\
    = \left[ 1+2+3...+(K/\tau - 1) \right]V_{0} + \left[ 2+3...+(K/\tau - 1) \right]V_{1} + \left[ 3...+(K/\tau - 1) \right]V_{2} ... + \left[ K/\tau - 1 \right]V_{K/\tau - 2}\\
    \leq \left[ 1+2...+(K/\tau - 1) \right]V_{0} + \left[ 1+2...+(K/\tau - 1) \right]V_{1} + ... + \left[ 1+2...+(K/\tau - 1) \right]V_{K/\tau - 2}\\
    \leq \left[ 1+2...+(K/\tau - 1) \right] (V_{0} + V_{1} + V_{2} ... +V_{K/\tau - 2})\\
    \leq \frac{K/\tau(K/\tau - 1)}{2} (V_{0} + V_{1} + V_{2} ... +V_{K/\tau - 2})\\
    \leq \frac{K/\tau(K/\tau - 1)}{2} \sum_{r=0}^{K/\tau-2}V_{r}\\
    \leq \frac{K/\tau(K/\tau - 1)}{2} \sum_{r=0}^{K/\tau-1}V_{r}\\
    \leq \frac{K/\tau(K/\tau - 1)}{2} \sum_{r=0}^{K/\tau-1}\sum_{s=1}^{\tau}  \mathbb{E} \left| \left| \nabla F(X_{r\tau+s})  \right| \right|^{2}_{F}\\
    \end{gathered}
\end{equation*}
The second term can be written as:
\begin{equation*}
    \begin{gathered}
    \sum_{j=0}^{K/\tau-1}\sum_{r=0}^{j-1}  V_{r} = \cancel{\sum_{r=0}^{-1}  V_{r}} + \sum_{r=0}^{0}  V_{r} + \sum_{r=0}^{1}  V_{r} + ... \sum_{r=0}^{K/\tau-2}  V_{r}\\
    = \left[ V_{0} \right] + \left[ V_{0} + V_{1} \right] + \left[ V_{0} + V_{1} + V{2} \right] + ...\\
    = \left[ K/\tau-1 \right]V_{0} + \left[ K/\tau-2 \right]V_{1}  + ...\\
    \leq \left( K/\tau - 1 \right) (V_{0} + V_{1} + V_{2} ... +V_{K/\tau - 2})\\
    \leq \left( K/\tau - 1 \right) \sum_{r=0}^{K/\tau-2}V_{r}\\
    \leq \left( K/\tau - 1 \right) \sum_{r=0}^{K/\tau-1}V_{r}\\
    \leq \left( K/\tau - 1 \right) \sum_{r=0}^{K/\tau-1}\sum_{s=1}^{\tau}  \mathbb{E} \left| \left| \nabla F(X_{r\tau+s})  \right| \right|^{2}_{F}
    \end{gathered}
\end{equation*}
Then, we can write $S_{series} = (K/\tau - 1) \left[ 2 +\frac{ K}{2\tau}\right]$.
Thus, we get,
\begin{equation*}
    \begin{gathered}
    \sum_{j=0}^{K/\tau-1}\sum_{i=1}^{\tau}T2 \leq 2\eta^{2}\delta\tau^{2}  S_{series}\sum_{j=0}^{K/\tau-1}\sum_{s=1}^{\tau} \mathbb{E} \left| \left| \nabla F(X_{j\tau+s})  \right| \right|^{2}_{F} \\+  \eta^{2}\delta \tau(\tau-1)\sum_{j=0}^{K/\tau-1}(j+2) \sum_{s=1}^{\tau-1} \mathbb{E} \left|\left|  \nabla F(X_{j\tau + s}) \right|\right|_{F}^{2}\\
    \end{gathered}
\end{equation*}
Now, 
\begin{equation*}
    \begin{gathered}
    \sum_{j=0}^{K/\tau-1}(j+2) \sum_{s=1}^{\tau-1} \mathbb{E} \left|\left|  \nabla F(X_{j\tau + s}) \right|\right|_{F}^{2} \leq (2 + (K/\tau -1))\sum_{j=0}^{K/\tau-1} \sum_{s=1}^{\tau} \mathbb{E}\left|\left|  \nabla F(X_{j\tau + s}) \right|\right|_{F}^{2}\\
    \end{gathered}
\end{equation*}

Thus, we get,
\begin{equation*}
    \begin{gathered}
    \sum_{j=0}^{K/\tau-1}\sum_{i=1}^{\tau}T2 \leq 2\eta^{2}\delta\tau^{2}  S_{series}\sum_{j=0}^{K/\tau-1}\sum_{s=1}^{\tau} \mathbb{E} \left| \left| \nabla F(X_{j\tau+s})  \right| \right|^{2}_{F} \\+  \eta^{2}\delta \tau(\tau-1)(1 +  K/\tau)\sum_{j=0}^{K/\tau-1} \sum_{s=1}^{\tau}\mathbb{E}\left|\left|  \nabla F(X_{j\tau + s}) \right|\right|_{F}^{2}\\
    \end{gathered}
\end{equation*}

Replacing $j\tau+s$ with $K$, we get,
\begin{equation*}
    \begin{gathered}
    \sum_{j=0}^{K/\tau-1}\sum_{i=1}^{\tau}T2 \leq 2\eta^{2}\delta\tau^{2}  S_{series}\sum_{k=1}^{K} \mathbb{E} \left| \left| \nabla F(X_{k})  \right| \right|^{2}_{F} +  \eta^{2}\delta \tau(\tau-1)(1 + K/\tau)\sum_{k=1}^{K}\mathbb{E}\left|\left|  \nabla F(X_{k}) \right|\right|_{F}^{2}\\
    \implies \sum_{j=0}^{K/\tau-1}\sum_{i=1}^{\tau}T2 \leq \eta^{2}\delta\tau\left[ 2\tau S_{series} + (\tau-1)(1 + K/\tau)\right] \sum_{k=1}^{K}\mathbb{E}\left|\left|  \nabla F(X_{k}) \right|\right|_{F}^{2}\\
    \end{gathered}
\end{equation*}

\subsubsection{Final Result} \label{t2}
\begin{equation*}
    \begin{gathered}
    \mathbb{E} ||X_{k}(I-J)||^{2}_{F} = \leq \underbrace{\mathbb{E} \left|\left|X_{1}\Phi_{1,k-1}^{T}(I-J)\right|\right|^{2}_{F}}_{T0} + \underbrace{2\eta^{2} \mathbb{E} \left|\left|\sum_{r=0}^{j} (Y_{r}-Q_{r})\Phi_{r,j}^{T}(I-J)\right|\right|^{2}_{F}}_{\text{T1}} + \underbrace{2\eta^{2} \mathbb{E} \left|\left|\sum_{r=0}^{j} Q_{r}\Phi_{r,j}^{T}(I-J)\right|\right|^{2}_{F}}_{\text{T2}}
    \end{gathered}
\end{equation*}
Now taking summation over all periods, we get,
\begin{equation*}
    \begin{gathered}
    \frac{1}{Kcm}\sum_{k=1}^{K}\mathbb{E} ||X_{k}(I-J)||^{2}_{F}  \leq \frac{1}{Kcm}\sum_{k=1}^{K} (T0 + T1 + T2)\\
    \implies \frac{1}{Kcm}\sum_{k=1}^{K}\mathbb{E} ||X_{k}(I-J)||^{2}_{F} \leq \frac{1}{Kcm}\sum_{j=0}^{K/\tau-1}\sum_{i=1}^{\tau} (T0+ T1 + T2)\\
    \implies \frac{L^{2}}{Kcm}\sum_{k=1}^{K}\mathbb{E} ||X_{k}(I-J)||^{2}_{F} \leq  \frac{L^{2}K}{Kcm} \left( \delta\left|\left|X_{1}\right|\right|^{2}_{F} \right) + \frac{L^{2}}{Kcm} \left( \eta^{2}cm \sigma^{2}\delta K(K-1) \right) \\
    + \frac{L^{2}}{Kcm} \left(\eta^{2}\delta\tau\left[ 2\tau S_{series} + (\tau-1)(1 +  K/\tau)\right] \sum_{k=1}^{K}\mathbb{E}\left|\left|  \nabla F(X_{k}) \right|\right|_{F}^{2}  \right)\\
    \leq \underbrace{\frac{\delta L^{2}}{cm}\left|\left|X_{1}\right|\right|^{2}_{F}}_{\text{Contribution of T0}} 
    +  \underbrace{\eta^{2}\sigma^{2}L^{2}\delta (K-1)}_{\text{Contribution of T1}}  \\+\underbrace{\frac{\eta^{2}\delta\tau L^{2}}{Kcm} \left[ 2\tau S_{series} + (\tau-1)(1 + K/\tau)\right] \sum_{k=1}^{K}\mathbb{E}\left|\left|  \nabla F(X_{k}) \right|\right|_{F}^{2}}_{\text{Contribution of T2}}  \\
    \end{gathered}
\end{equation*}

Now,
\begin{equation*}
    \begin{gathered}
    \mathbb{E}||\nabla F(X_{k}) ||^{2}_{F} = \sum_{i=1}^{m}\mathbb{E}||\nabla F_{i}(x_{k}^{(i)}) ||^{2} 
    = \sum_{i \in C_{K}}\mathbb{E}||\nabla F_{i}(x_{k}^{(i)}) ||^{2}\\
    \leq 3 \sum_{i \in C_{K}} \mathbb{E}\left[||\nabla F_{i}(x_{k}^{(i)}) - \nabla F_{i}(u_{k})||^{2} + ||\nabla F_{i}(u_{k}) - \nabla F(u_{k})||^{2} + ||\nabla F(u_{k})||^{2}\right]\\
    \textcolor{magenta}{(\because ||a+b+c||^{2} \leq 3(||a||^{2} + ||b||^{2} + ||c||^{2}))}\\
    \leq 3\mathbb{E}||X_{k}(I-J)||^{2} + 3cm\mathbb{E}||\nabla F(u_{k})||^{2}\\
    \textcolor{magenta}{(\because \text{client gradients approximate the global gradients})}\\
    \end{gathered}
\end{equation*}

Thus,
\begin{equation*}
    \begin{gathered}
    \frac{L^{2}}{Kcm}\sum_{k=1}^{K}\mathbb{E} ||X_{k}(I-J)||^{2}_{F}  \leq \frac{\delta L^{2}}{cm}\left|\left|X_{1}\right|\right|^{2}_{F} + \eta^{2}\sigma^{2}L^{2}\delta (K-1)
    \\+\left[ \frac{\eta^{2}\delta\tau L^{2}}{Kcm} \left[ 2\tau S_{series} + (\tau-1)(1 + K/\tau)\right] \right]\sum_{k=1}^{K}\mathbb{E}\left|\left|  \nabla F(X_{k}) \right|\right|_{F}^{2}  \\
    \implies \frac{L^{2}}{Kcm}\sum_{k=1}^{K}\mathbb{E} ||X_{k}(I-J)||^{2}_{F}  \leq \frac{\delta L^{2}}{cm}\left|\left|X_{1}\right|\right|^{2}_{F} + \eta^{2}\sigma^{2}L^{2}\delta (K-1)
    \\+\frac{L^{2}}{Kcm}\left[ \underbrace{{\eta^{2}\delta\tau} \left[ 2\tau S_{series} + (\tau-1)(1+ K/\tau )\right]}_{P} \right]\sum_{k=1}^{K}\mathbb{E}\left|\left|  \nabla F(X_{k}) \right|\right|_{F}^{2} \\ 
    \implies \frac{L^{2}}{Kcm}\sum_{k=1}^{K}\mathbb{E} ||X_{k}(I-J)||^{2}_{F}  \leq \frac{\delta L^{2}}{cm}\left|\left|X_{1}\right|\right|^{2}_{F} + \eta^{2}\sigma^{2}L^{2}\delta (K-1)
    +\frac{3PL^{2}}{Kcm}\sum_{k=1}^{K}\mathbb{E}||X_{k}(I-J)||^{2} \\+\frac{3PL^{2}}{Kcm}\sum_{k=1}^{K}cm\mathbb{E}||\nabla F(u_{k})||^{2} \\
    \implies \left(1-  3P \right)\frac{L^{2}}{Kcm}\sum_{k=1}^{K}\mathbb{E} ||X_{k}(I-J)||^{2}_{F}  \leq \frac{\delta L^{2}}{cm}\left|\left|X_{1}\right|\right|^{2}_{F} + \eta^{2}\sigma^{2}L^{2}\delta (K-1)
    +\frac{3PL^{2}}{Kcm}\sum_{k=1}^{K}cm\mathbb{E}||\nabla F(u_{k})||^{2}\\
    \implies 
    \frac{L^{2}}{Kcm}\sum_{k=1}^{K}\mathbb{E} ||X_{k}(I-J)||^{2}_{F}  \leq \frac{1}{1-3P}\left[\frac{\delta L^{2}}{cm}\left|\left|X_{1}\right|\right|^{2}_{F} + \eta^{2}\sigma^{2}L^{2}\delta (K-1)
    +\frac{3PL^{2}}{Kcm}\sum_{k=1}^{K}cm\mathbb{E}||\nabla F(u_{k})||^{2}\right]\\
    \textcolor{magenta}{\left(\text{Assuming $P<\frac{1}{3}$}\right)}\\
    \end{gathered}
\end{equation*}

Going all the way back, we now substitute in Equation \ref{eqn9}.
\begin{equation*}
    \begin{gathered}
      \mathbb{E}\left[ \frac{1}{K}\sum_{k=1}^{K}||\nabla F(u_{k})||^{2} \right] \leq \frac{2[ F(u_{1}) - F_{inf}]}{\eta_{eff}K} + \frac{\eta_{eff}L\sigma^{2}}{cm} + \frac{L^{2}}{Kcm}\sum_{k=1}^{K}\mathbb{E}_{k}\left|\left| X_{k}(I - J)\right|\right|^{2}_{F} \\ - \left[ 1 - \eta_{eff}L  \right]\frac{1}{K}\sum_{k=1}^{K}\frac{\mathbb{E}||\nabla F(X_{k})||^{2}_{F}}{cm}\\
      \implies \mathbb{E}\left[ \frac{1}{K}\sum_{k=1}^{K}||\nabla F(u_{k})||^{2} \right] \leq \frac{2[ F(u_{1}) - F_{inf}]}{\eta_{eff}K} + \frac{\eta_{eff}L\sigma^{2}}{cm} +\\
    \frac{1}{1-3P}\left[ \frac{\delta L^{2}}{cm}\left|\left|X_{1}\right|\right|^{2}_{F} + \eta^{2}\sigma^{2}L^{2}\delta (K-1)
    +\frac{3PL^{2}}{K}\sum_{k=1}^{K}\mathbb{E}||\nabla F(u_{k})||^{2}\right] \\
    - \left[ 1 - \eta_{eff}L  \right]\frac{1}{K}\sum_{k=1}^{K}\frac{\mathbb{E}||\nabla F(X_{k})||^{2}_{F}}{cm}\\
    \end{gathered}
\end{equation*}
When learning rate satisfies
\begin{equation*}
    \begin{gathered}
         \eta_{eff}L - 1 \leq 0
    \end{gathered}
\end{equation*}
Then,
\begin{equation*}
    \begin{gathered}
      \mathbb{E}\left[ \frac{1}{K}\sum_{k=1}^{K}||\nabla F(u_{k})||^{2} \right] \leq \frac{2[ F(u_{1}) - F_{inf}]}{\eta_{eff}K} + \frac{\eta_{eff}L\sigma^{2}}{cm} \\
    + \frac{1}{1-3P}\left[ \frac{\delta L^{2}}{cm}\left|\left|X_{1}\right|\right|^{2}_{F} + \eta^{2}\sigma^{2}L^{2}\delta (K-1)
    +\frac{3PL^{2}}{K}\sum_{k=1}^{K}\mathbb{E}||\nabla F(u_{k})||^{2}\right]\\
    \implies \left( 1- \frac{3PL^{2}}{1-3P}\right)\mathbb{E}\left[ \frac{1}{K}\sum_{k=1}^{K}||\nabla F(u_{k})||^{2} \right] \leq \frac{2[ F(u_{1}) - F_{inf}]}{\eta_{eff}K} + \frac{\eta_{eff}L\sigma^{2}}{cm} +  \frac{\delta L^{2}}{cm(1-3P)}\left|\left|X_{1}\right|\right|^{2}_{F} + \frac{\eta^{2}\sigma^{2}L^{2}\delta (K-1)}{1-3P}
    \end{gathered}
\end{equation*}

\subsubsection{Case 1: $\delta=0$}\label{theorem1:res-sec1}
Consider the scenario where we have $W = J$ for all iterations. We have to rewrite Lemma 8 for this. We know the following inequality holds:
\begin{equation*}
    \begin{gathered}
    ||A||_{F} \leq \sqrt{m}||A||_{op}
    \end{gathered}
\end{equation*}
Thus, we have, 
\begin{equation}
    ||J^{k}-J||_{F} \leq \sqrt{m+v}||J^{k}-J||_{op}
\end{equation}
Now, following an analysis similar to \citet{Wang2021}, we find that 
\begin{equation}
    ||J^{k}-J||_{op} = \varsigma^{k}
\end{equation}
But we know that for matrix $J$, $\varsigma = 0$. Also, we know that frobenius norm is greater than zero. Thus, we can conclude that $||J^{k}-J||_{F} = \delta$ where $\delta = 0$. In reality, this would imply simply substituting the value of $\delta = 0$ at all points. Thus, we would have $P=0$. This would lead to the scenario that:
\begin{equation*}
    \begin{gathered}
      \mathbb{E}\left[ \frac{1}{K}\sum_{k=1}^{K}||\nabla F(u_{k})||^{2} \right] \leq \frac{2[ F(u_{1}) - F_{inf}]}{\eta_{eff}K} + \frac{\eta_{eff}L\sigma^{2}}{cm}
    \end{gathered}
\end{equation*}

\subsubsection{Case 2: $1\geq\delta>0$}\label{theorem1:res-sec2}
In this scenario, as $\delta > 0$, we have $P>0$. Let us look at the constant term we had got previously and simplify it.
\begin{equation*}
    \begin{gathered}
      \left[ 1 - \frac{3PL^{2}}{\left( 1-3P\right)} \right] = \left[ \frac{(1-3P) -3PL^{2}}{\left( 1-3P\right)} \right]  = \left[ \frac{1-3P\left( 1 + L^{2} \right)}{ 1-3P} \right]
    \end{gathered}
\end{equation*}

Let the following condition be satisfied, so that our calculations are simplified.
\begin{equation*}
    \begin{gathered}
     \frac{ 1-3P}{1-3P\left( 1 + L^{2} \right)} \leq 2 \\
     \implies 1-3P\left( 1 + L^{2} \right) \geq \frac{1}{2} (1-3P)\\
     \implies (1-3P) - 3PL^{2} \geq \frac{1}{2} (1-3P)\\
     \implies \frac{1}{2}(1-3P) \geq 3PL^{2}\\
     \implies (1-3P) \geq 6PL^{2}\\
     \implies P \leq \frac{1}{6L^{2}+3}\\
    \end{gathered}
\end{equation*}
 Furthermore, let $0 \leq P \leq \frac{1}{6} \implies \frac{1}{1-3P} \leq 2$. Then, we can say,
\begin{equation*}
    \begin{gathered}
    \mathbb{E}\left[ \frac{1}{K}\sum_{k=1}^{K}||\nabla F(u_{k})||^{2} \right] \leq 2\left[\frac{2[ F(u_{1}) - F_{inf}]}{\eta_{eff}K} + \frac{\eta_{eff}L\sigma^{2}}{cm} +  \frac{\delta L^{2}}{cm(1-3P)}\left|\left|X_{1}\right|\right|^{2}_{F} + \frac{\eta^{2}\sigma^{2}L^{2}\delta (K-1)}{1-3P}\right]\\
      \implies \mathbb{E}\left[ \frac{1}{K}\sum_{k=1}^{K}||\nabla F(u_{k})||^{2} \right] \leq 2\left[
    \begin{aligned}
        \frac{2[ F(u_{1}) - F_{inf}]}{\eta_{eff}K} + \frac{\eta_{eff}L\sigma^{2}}{cm} +  \frac{2\delta L^{2}}{cm}\left|\left|X_{1}\right|\right|^{2}_{F} + 2\eta^{2}\sigma^{2}L^{2}\delta (K-1)
    \end{aligned}
    \right]\\
    \implies \mathbb{E}\left[ \frac{1}{K}\sum_{k=1}^{K}||\nabla F(u_{k})||^{2} \right] \leq 4\left[\frac{2[ F(u_{1}) - F_{inf}]}{\eta_{eff}K} + \frac{\eta_{eff}L\sigma^{2}}{cm} +  \frac{\delta L^{2}}{cm}\left|\left|X_{1}\right|\right|^{2}_{F} + \eta^{2}\sigma^{2}L^{2}\delta (K-1)\right]\\
    \end{gathered}
\end{equation*}
Taking this criteria for P ensures that our previous assumption of $P<\frac{1}{3}$ also holds. This holds for Case 3 (Section \ref{theorem1:res-sec3}) as well.

If we further have the condition that $\delta \leq \frac{\tau}{K-1}$, then we can say
\begin{equation*}
    \begin{gathered}
    \mathbb{E}\left[ \frac{1}{K}\sum_{k=1}^{K}||\nabla F(u_{k})||^{2} \right] \leq 4\left[\frac{2[ F(u_{1}) - F_{inf}]}{\eta_{eff}K} + \frac{\eta_{eff}L\sigma^{2}}{cm} +  \frac{\delta L^{2}}{cm}\left|\left|X_{1}\right|\right|^{2}_{F} + \eta^{2}\sigma^{2}L^{2}\tau\right]\\
    \end{gathered}
\end{equation*}
This somewhat matches the result from \citep{Wang2021} (ignoring the model initialization error and constant factor). In fact, if we compare with their criterion:
\begin{equation*}
    \begin{gathered}
    \textcolor{blue}{\mathbb{E}\left[ \frac{1}{K}\sum_{k=1}^{K}||\nabla F(u_{k})||^{2} \right] \leq \frac{2[ F(u_{1}) - F_{inf}]}{\eta_{eff}K} + \frac{\eta_{eff}L\sigma^{2}}{cm} + \eta^{2}\sigma^{2}L^{2}\left(\frac{1+\varsigma^{2}}{1-\varsigma^{2}}\tau - 1\right)}
    \end{gathered}
\end{equation*}
We find that there is only one major difference: $\left(\frac{1+\varsigma^{2}}{1-\varsigma^{2}}\tau - 1\right)$ v/s $\delta(K-1)$. For our error bound to be smaller than theirs, we need $\left(\frac{1+\varsigma^{2}}{1-\varsigma^{2}}\tau - 1\right) > \delta(K-1)$. Furthermore, we have already assumed that $\delta (K-1)  \leq \tau$, thus if we have $\tau < \left(\frac{1+\varsigma^{2}}{1-\varsigma^{2}}\tau - 1\right)$ then we can say our error bound is smaller. For this situation to arise, we need to ensure $\tau > \frac{1-\varsigma^{2}}{2\varsigma^{2}}$. A thing to note here is that  $\varsigma > 0$ as it is zero in Case 1 (Section \ref{theorem1:res-sec1}).\\

When does this hold? Let us assume $\frac{1-\varsigma^{2}}{2\varsigma^{2}} = \theta \implies \varsigma = \frac{1}{2\theta + 1}$. Thus, when eigenvalues become very small, then we need to ensure that our communication period $\tau$ is sufficiently large. For example, as per above condition to have $\tau > 2$, we need $\varsigma = \frac{1}{5}$, and to have $\tau > 3$, we need $\varsigma = \frac{1}{7}$. If this condition does not hold, then it is not certain whose analysis gives a tighter bound. To be on the safe side, we proceed by assuming $\tau > \frac{1-\varsigma^{2}}{2\varsigma^{2}}$, so that our method always gives a better result, provided $\delta < \frac{\tau}{K-1}$.\\

If the latter bound does not hold, then we have two conditions needed for our bound to be smaller - $\left(\frac{1+\varsigma^{2}}{1-\varsigma^{2}}\tau - 1\right) > \delta(K-1)$ and $\delta(K-1) > \tau$. Then, we get
\begin{equation*}
    \begin{gathered}
    \delta(K-1) < \frac{1+\varsigma^{2}}{1-\varsigma^{2}}\tau - 1 < \frac{1+\varsigma^{2}}{1-\varsigma^{2}}\delta(K-1) - 1\\
    \implies 1 < \left(\frac{1+\varsigma^{2}}{1-\varsigma^{2}} - 1\right)\delta(K-1)\\
    \implies \frac{1-\varsigma^{2}}{2\varsigma^{2}} < \delta(K-1)
    \end{gathered}
\end{equation*}
Again, by ensuring $\delta(K-1) > \tau > \frac{1-\varsigma^{2}}{2\varsigma^{2}}$, we can ensure even in this scenario we get a better error bound. So, irrespective of the value of $\delta(K-1)$, if we ensure 
$\tau > \frac{1-\varsigma^{2}}{2\varsigma^{2}}$, we are assured in getting a lower error bound than \citep{Wang2021}.

\subsubsection{Case 3: $\delta>1$}\label{theorem1:res-sec3}
In this scenario, we can only get,
\begin{equation*}
    \begin{gathered}
    \mathbb{E}\left[ \frac{1}{K}\sum_{k=1}^{K}||\nabla F(u_{k})||^{2} \right] \leq 4\left[\frac{2[ F(u_{1}) - F_{inf}]}{\eta_{eff}K} + \frac{\eta_{eff}L\sigma^{2}}{cm} +  \frac{\delta L^{2}}{cm}\left|\left|X_{1}\right|\right|^{2}_{F} + \eta^{2}\sigma^{2}L^{2}\delta (K-1)\right]\\
    \end{gathered}
\end{equation*}
There is no clear optimization possible for this scenario.

\subsubsection{Lower Bound on the Fraction of Client's Selected $c$}
We know,
\begin{equation*}
    \begin{gathered}
    \mathbb{E}||\nabla F(X_{k}) ||^{2}_{F} \leq 3\mathbb{E}||X_{k}(I-J)||^{2} + 3cm\mathbb{E}||\nabla F(u_{k})||^{2}
    \end{gathered}
\end{equation*}
Taking a looser upper bound, we get,
\begin{equation*}
    \begin{gathered}
    \mathbb{E}||\nabla F(X_{k}) ||^{2}_{F} \leq 3\mathbb{E}||X_{k}(I-J)||^{2} + 3m\mathbb{E}||\nabla F(u_{k})||^{2}\\
    \end{gathered}
\end{equation*}
Now, using the looser upper bound we get,
\begin{equation*}
    \begin{gathered}
      \mathbb{E}\left[ \frac{1}{K}\sum_{k=1}^{K}||\nabla F(u_{k})||^{2} \right] \leq \frac{2[ F(u_{1}) - F_{inf}]}{\eta_{eff}K} + \frac{\eta_{eff}L\sigma^{2}}{cm} \\+ \frac{1}{1-3P}\left[ \frac{\delta L^{2}}{cm}\left|\left|X_{1}\right|\right|^{2}_{F} + \eta^{2}\sigma^{2}L^{2}\delta (K-1)
    +\frac{3PL^{2}}{Kc}\sum_{k=1}^{K}\mathbb{E}||\nabla F(u_{k})||^{2}\right]\\
    \implies \left( 1- \frac{3PL^{2}}{c(1-3P)}\right)\mathbb{E}\left[ \frac{1}{K}\sum_{k=1}^{K}||\nabla F(u_{k})||^{2} \right] \leq
    \left[
    \begin{aligned}
        \frac{2[ F(u_{1}) - F_{inf}]}{\eta_{eff}K} + \frac{\eta_{eff}L\sigma^{2}}{cm} \\+  \frac{\delta L^{2}}{cm(1-3P)}\left|\left|X_{1}\right|\right|^{2}_{F} + \frac{\eta^{2}\sigma^{2}L^{2}\delta (K-1)}{1-3P}
    \end{aligned} \right]
    \end{gathered}
\end{equation*}
If we have $\delta=0 \implies P = 0$, as discussed previously, we get (without any extra restriction on the value of c),
\begin{equation*}
    \begin{gathered}
      \mathbb{E}\left[ \frac{1}{K}\sum_{k=1}^{K}||\nabla F(u_{k})||^{2} \right] \leq \frac{2[ F(u_{1}) - F_{inf}]}{\eta_{eff}K} + \frac{\eta_{eff}L\sigma^{2}}{cm} \\
    \end{gathered}
\end{equation*}

When $\delta>0 \implies P>0$. In that case, we can simplify the constant term as
\begin{equation*}
    \begin{gathered}
      \left[ 1 - \frac{3PL^{2}}{c\left( 1-3P\right)} \right] = \left[ \frac{c(1-3P) -3PL^{2}}{c\left( 1-3P\right)} \right]  = \left[ \frac{1-3P\left( 1 + \frac{L^{2}}{c} \right)}{ 1-3P} \right] \\
    \end{gathered}
\end{equation*}

If the following condition is satisfied, we get a lower bound on the fraction of client's selected.
\begin{equation*}
    \begin{gathered}
     \frac{ 1-3P}{1-3P\left( 1 + \frac{L^{2}}{c} \right)} \leq 2 \\
     \implies 1-3P\left( 1 + \frac{L^{2}}{c} \right) \geq \frac{1}{2} (1-3P)\\
     \implies (1-3P) - \frac{3PL^{2}}{c} \geq \frac{1}{2} (1-3P)\\
     \implies \frac{1}{2}(1-3P) \geq \frac{3PL^{2}}{c}\\
     \implies (1-3P) \geq \frac{6PL^{2}}{c}\\
     \implies c \geq \frac{6PL^{2}}{1-3P}\\
     \implies c \geq 6PL^{2}\\
     \textcolor{magenta}{\left(\because \frac{1}{1-3P} \geq 1\right)}
    \end{gathered}
\end{equation*}

From this, we get the condition that $c \geq 6PL^{2}$, thus giving a criteria for minimum number of clients selected. As we already know that $P \leq \frac{1}{6L^{2}+3}$, thus satisfying the above criterion for $c$ doesn't violate $c\leq 1$.

\section{Proof of Theorem 2} \label{NIID-Theorem2}
In this section we will prove Theorem 2. We state all the assumptions for non-IID scenario  (Section \ref{niid:assumptions}). We thereafter lay the groundwork for proving Lemma 2 for the non-IID scenario by modifying Lemma 3 (Section \ref{niid:lemma3}), Lemma 4 (Section \ref{niid:lemma4}) and Lemma 5 (Section \ref{niid:lemma5}) to suit non-IID assumptions. In Section \ref{niid:lemma2} we prove Lemma 2, modified for the non-IID case. Thereafter, we reuse the other 2 lemmas (Lemma 7 (Section \ref{niid:lemma7}), Lemma 8 (Section \ref{niid:lemma8})) from the proof of Theorem 1 as they do not change for this scenario. We prove Theorem 2 in Section \ref{theorem2}.
\subsection{Assumptions} \label{niid:assumptions}
\begin{enumerate}
    \item \textbf{Smoothness}: $||\nabla F_{i}(x) - \nabla F_{i}(y)|| \leq L||x-y||$
    \item \textbf{Lower Bounded}: $F(x) \geq F_{ing}$
    \item \textbf{Unbiased Gradients}: $\mathbb{E}_{\xi|x}[g_{i}(x)] = \nabla F_{i}(x)$
    \item \textbf{Bounded Variance}: $\mathbb{E}_{\xi|x}||g_{i}(x)-\nabla F_{i}(x)||^{2} \leq  \sigma^{2}$\newline
    where $\sigma$ is some non-negative constant and in inverse proportion to batch size
    \item \textbf{Mixing Matrix}:
         $W_{k}^{T}1_{m+v} = 1_{m+v}$
    \item For all rounds, fraction of clients selected $c$ is fixed.
    \item \textbf{Bounded Dissimilarities}: $\frac{1}{m}\sum_{i=1}^{m}||\nabla F_{i}(x)-\nabla F(x)||^{2} \leq  \kappa^{2}$\newline
    where $\kappa$ is some non-negative constant
\end{enumerate}

\subsection{Lemma 2} \label{niid:lemma2_statement}
\textcolor{red}{\textbf{For algorithm $\mathcal{A}(\tau, W, v)$, under assumption 1-5, if learning rate satisfies $\eta_{eff}L\left(1+\frac{\beta}{m}\right) \leq 1$ and all local parameters are initialized at same point $u_{1}$, then the average-squared gradient after K iterations is bounded as follows:
\begin{equation}
    \mathbb{E} \left[ \frac{1}{K}\sum_{k=1}^{K}||\nabla F(u_{k}) ||^{2} \right] \leq \underbrace{\frac{2[F(u_{1}) - F_{inf}]}{\eta_{eff}K} + \frac{\eta_{eff}L\sigma^{2}}{cm}}_{\text{Fully Sync. SGD}} + \underbrace{\frac{L^{2}}{K}\sum_{k=1}^{K}\frac{\mathbb{E}||X_{k}(I-J)||^{2}_{F}}{cm}}_{\text{Network Error}}
\end{equation}}}
\subsection{Lemmas to prove for Lemma 2}
\subsubsection{Lemma 3} \label{niid:lemma3}
\textcolor{red}{\textbf{Under assumption 3 and 4, we have the following variance bound for averaged stochastic gradient:
\begin{equation}
    \mathbb{E}_{\equiv_{K}|X_{K}}[||\mathcal{G}_{k} - \mathcal{H}_{k}||^{2}] \leq  \frac{\sigma^{2}}{cm}
\end{equation}}}
\textit{Proof}:
\begin{equation*}
    \begin{gathered}
        \mathbb{E}_{\equiv_{K}|X_{K}}[||\mathcal{G}_{k} - \mathcal{H}_{k}||^{2}]\\
        = \mathbb{E}_{\equiv_{K}|X_{K}} \left| \left| \frac{1}{cm}\sum_{i \in C_{k}}[g_{i}(x_{k}^{(i)}) - \nabla F_{i}(x_{k}^{(i)})]\right| \right| ^{2}\\
        = \frac{1}{(cm)^{2}}\mathbb{E}_{\equiv_{K}|X_{K}} \left[ \sum_{i \in C_{k}}||g_{i}(x_{k}^{(i)}) - \nabla F_{i}(x_{k}^{(i)})||^{2} + \sum_{j,l \in C_{k}, j\neq l} \left< g_{j}(x_{k}^{(j)}) - \nabla F_{j}(x_{k}^{(j)}),g_{l}(x_{k}^{(l)}) - \nabla F_{l}(x_{k}^{(l)}) \right> \right]\\
        = \frac{1}{(cm)^{2}} \sum_{i \in C_{k}} \mathbb{E}_{\xi_{K}^{(i)}|X_{K}} ||g_{i}(x_{k}^{(i)}) - \nabla F_{i}(x_{k}^{(i)})||^{2} \\+ \frac{1}{(cm)^{2}}\sum_{j,l \in C_{k}, j\neq l} \left< \mathbb{E}_{\xi_{K}^{(j)}|X_{K}} [g_{j}(x_{k}^{(j)}) - \nabla F_{j}(x_{k}^{(j)})] , \mathbb{E}_{\xi_{k}^{(l)}|X_{K}} [g_{l}(x_{k}^{(l)}) - \nabla F_{l}(x_{k}^{(l)})] \right>\\
    \end{gathered}
\end{equation*}
This step was possible as $\equiv_{K}|X_{K}$ can be seen as over batches of all clients, so as we treat $\{\xi_{k}^{(i)}\}$ as independent random variables, for a particular client it should only depend on their own batch. We also use the equation $\mathbb{E}<X, Y> = <\mathbb{E}[X], \mathbb{E}[Y]> $ provided X and Y are independent. As we treat batches as independent, hence $\mathbb{E}_{\xi_{K}^{(j)}|X_{K}} [g_{j}(x_{k}^{(j)}) - \nabla F_{j}(x_{k}^{(j)})]$ is independent of $\mathbb{E}_{\xi_{k}^{(l)}|X_{K}} [g_{l}(x_{k}^{(l)}) - \nabla F_{l}(x_{k}^{(l)})]$.\\

Now by assumption 3, we have
\begin{equation*}
    \begin{gathered}
        \mathbb{E}_{\xi_{K}^{(j)}|X_{K}} [g_{j}(x_{k}^{(j)}) - \nabla F_{j}(x_{k}^{(j)})] = \mathbb{E}_{\xi_{K}^{(j)}|X_{K}} [g_{j}(x_{k}^{(j)})] - \nabla F_{j}(x_{k}^{(j)})\\
        \textcolor{magenta}{\text{($\because$ Since $\nabla F$ is not dependent on batches)}}\\
        \implies \nabla F_{j}(x_{k}^{(j)}) - \nabla F_{j}(x_{k}^{(j)}) = 0
    \end{gathered}
\end{equation*}

Now by assumption 4, we have
\begin{equation*}
    \begin{gathered}
        \mathbb{E}_{\xi_{K}^{(i)}|X_{K}} ||g_{i}(x_{k}^{(i)}) - \nabla F_{i}(x_{k}^{(i)})||^{2} \leq \sigma^{2}
    \end{gathered}
\end{equation*}

Thus, we get
\begin{equation*}
    \begin{gathered}
        \frac{1}{(cm)^{2}} \sum_{i \in C_{k}} \mathbb{E}_{\xi_{K}^{(i)}|X_{K}} ||g_{i}(x_{k}^{(i)}) - \nabla F_{i}(x_{k}^{(i)})||^{2} \\+ \frac{1}{(cm)^{2}}\sum_{j,l \in S_{k},j\neq l} \left< \mathbb{E}_{\xi_{K}^{(j)}|X_{K}} [g_{j}(x_{k}^{(j)}) - \nabla F_{j}(x_{k}^{(j)})] , \mathbb{E}_{\xi_{k}^{(l)}|X_{K}} [g_{l}(x_{k}^{(l)}) - \nabla F_{l}(x_{k}^{(l)})] \right> \\
        \leq \frac{1}{(cm)^{2}} \sum_{i \in C_{k}} \left[ \sigma^{2} \right]\\
        \implies \mathbb{E}_{\equiv_{K}|X_{K}}[||\mathcal{G}_{k} - \mathcal{H}_{k}||^{2}] \leq \frac{\sigma^{2}}{cm}\\
    \end{gathered}
\end{equation*}

\subsubsection{Lemma 4} \label{niid:lemma4}
\textcolor{red}{\textbf{Under assumption 3, the expected inner product between stochastic gradient and full batch gradient for averaged model can be expanded as:
\begin{equation}
    \mathbb{E}_{\equiv_{K}|X_{K}}\left[ \left< \nabla F(u_{k}), \mathcal{G}_{k}  \right> \right] \geq \frac{1}{2}||\nabla F(u_{k})||^{2} + \frac{1}{2}||\mathcal{H}_{k}||^{2} - \frac{L^{2}}{2cm}||X_{k}(I-J)||^{2}_{F}
\end{equation}}}
\textit{Proof}:
\begin{equation*}
    \begin{gathered}
    \mathbb{E}_{\equiv_{K}|X_{K}}\left[ \left< \nabla F(u_{k}), \mathcal{G}_{k}  \right> \right] = \mathbb{E}_{\equiv_{K}|X_{K}}\left[ \left< \nabla F(u_{k}), \frac{1}{cm} \sum_{i \in C_{k}} g_{i}(x_{k}^{(i)})  \right> \right]\\
    \implies \mathbb{E}_{\equiv_{K}|X_{K}}\left[ \left< \nabla F(u_{k}), \mathcal{G}_{k}  \right> \right] = \left< \mathbb{E}_{\equiv_{K}|X_{K}}[\nabla F(u_{k})],  \frac{1}{cm} \sum_{i \in C_{k}} \mathbb{E}_{\equiv_{K}|X_{K}} \left[g_{i}(x_{k}^{(i)})\right]  \right> \\
    \implies \mathbb{E}_{\equiv_{K}|X_{K}}\left[ \left< \nabla F(u_{k}), \mathcal{G}_{k}  \right> \right] = \left< \nabla F(u_{k}),  \frac{1}{cm} \sum_{i \in C_{k}} \nabla F_{i}(x_{k}^{(i)})  \right> \\
    \implies \mathbb{E}_{\equiv_{K}|X_{K}}\left[ \left< \nabla F(u_{k}), \mathcal{G}_{k}  \right> \right] = \left< \nabla F(u_{k}), \mathcal{H}_{k}  \right> \\
    \implies \mathbb{E}_{\equiv_{K}|X_{K}}\left[ \left< \nabla F(u_{k}), \mathcal{G}_{k}  \right> \right] = \frac{1}{2} \left[ ||\nabla F(u_{k})||^{2} + ||\mathcal{H}_{k}||^{2} - \left|\left|\frac{1}{cm} \sum_{i \in C_{k}}\nabla F_{i}(u_{k}) - \frac{1}{cm} \sum_{i \in C_{k}}\nabla F_{i}(x_{k}^{(i)})\right|\right|^{2}  \right] \\
    \textcolor{magenta}{\text{($\because 2<a,b> = ||a||^{2} + ||b||^{2} - ||a-b||^{2}$ )}}\\
    \end{gathered}
\end{equation*}
By Jensen's Inequality, we get $\left| \left| \frac{1}{cm} \sum_{i \in C_{k}} [\nabla F_{i}(u_{k}) - \nabla F_{i}(x_{k}^{(i)})]  \right| \right| \leq  \frac{1}{cm} \sum_{i \in C_{k}} \left| \left| \nabla F_{i}(u_{k}) - \nabla F_{i}(x_{k}^{(i)})  \right| \right|^{2}$\\

Thus, we get,
\begin{equation*}
    \begin{gathered}
    \mathbb{E}_{\equiv_{K}|X_{K}}\left[ \left< \nabla F(u_{k}), \mathcal{G}_{k}  \right> \right] \geq \frac{1}{2} \left[ ||\nabla F(u_{k})||^{2} + ||\mathcal{H}_{k}||^{2} - \frac{1}{cm} \sum_{i \in C_{k}}\left|\left|\nabla F_{i}(u_{k}) - \nabla F_{i}(x_{k}^{(i)})\right|\right|^{2}  \right]\\
    \geq \frac{1}{2} \left[ ||\nabla F(u_{k})||^{2} + ||\mathcal{H}_{k}||^{2} - \frac{L^{2}}{cm} \sum_{i \in C_{k}}\left|\left|\nabla u_{k} - \nabla x_{k}^{(i)}\right|\right|^{2}  \right]\\
    \textcolor{magenta}{\text{($\because$ Assumption 1)}}\\
    \geq \frac{1}{2} \left[ ||\nabla F(u_{k})||^{2} + ||\mathcal{H}_{k}||^{2} - \frac{L^{2}}{cm} \sum_{i \in C_{k}}\left|\left|\nabla X_{k}(I - J)\right|\right|^{2}_{F}  \right]\\
    \end{gathered}
\end{equation*}

\subsubsection{Lemma 5} \label{niid:lemma5}
\textcolor{red}{\textbf{Under assumption 3 and 4, the squared norm of averaged stochastic gradient is bounded as:
\begin{equation}
    \mathbb{E}_{\equiv_{K}|X_{K}}\left[ ||\mathcal{G}_{k}||^{2} \right] \leq \frac{\sigma^{2}}{cm} + ||\mathcal{H}_{k}||^{2}
\end{equation}}}
\textit{Proof}:
Since $\mathbb{E}_{\equiv_{K}|X_{K}}\left[ \mathcal{G}_{k} \right] = \mathcal{H}_{k}$, we have
\begin{equation*}
    \begin{gathered}
    \mathbb{E}_{\equiv_{K}|X_{K}}\left[ ||\mathcal{G}_{k}||^{2} \right] = \mathbb{E}_{\equiv_{K}|X_{K}}\left[ ||\mathcal{G}_{k} - \mathbb{E}[\mathcal{G}_{k}]||^{2} \right] + ||\mathbb{E}_{\equiv_{K}|X_{K}}[\mathcal{G}_{k}]||^{2}\\
    \textcolor{magenta}{\text{($\because \mathbb{E}[||a||^{2}] = \mathbb{E}[||a - \mathbb{E}[a]||]^{2} + ||\mathbb{E}[a]||^{2}$)}}\\
    \implies \mathbb{E}_{\equiv_{K}|X_{K}}\left[ ||\mathcal{G}_{k} - \mathbb{E}[\mathcal{G}_{k}]||^{2} \right] + ||\mathcal{H}_{k}||^{2}\\
    \leq \frac{\sigma^{2}}{cm} + ||\mathcal{H}_{k}||^{2}\\
    \end{gathered}
\end{equation*}
\subsection{Proof of Lemma 2} \label{niid:lemma2}
By Lipshitz continuous gradient assumption,
\begin{equation*}
    \begin{gathered}
    f(y) \leq f(x) + \nabla f(x)^{T}(y-x) + \frac{L}{2}||y-x||^{2}
    \end{gathered}
\end{equation*}
Here, let $f = F$, and $y = u_{k+1}$ and $x = u_{k}$. Then,
\begin{equation*}
    \begin{gathered}
    F(u_{k+1}) \leq F(u_{k}) + \nabla F(u_{k})^{T}(u_{k+1}-u_{k}) + \frac{L}{2}||u_{k+1}-u_{k}||^{2}
    \end{gathered}
\end{equation*}
By update rule,
\begin{equation*}
    \begin{gathered}
    u_{k+1}-u_{k} = -\eta_{eff} \left[ \frac{1}{cm}\sum_{i \in C_{k}} g_{i}(x_{k}^{(i)}) \right]
    \end{gathered}
\end{equation*}
Thus, we get,
\begin{equation*}
    \begin{gathered}
    F(u_{k+1}) \leq F(u_{k}) - \nabla F(u_{k})^{T}\left(\eta_{eff} \left[ \frac{1}{cm}\sum_{i \in C_{k}} g_{i}(x_{k}^{(i)}) \right]\right) + \frac{L}{2}\left|\left|-\eta_{eff} \left[ \frac{1}{cm}\sum_{i \in C_{k}} g_{i}(x_{k}^{(i)}) \right]\right|\right|^{2}\\
    \implies F(u_{k+1}) \leq F(u_{k}) - \eta_{eff}\nabla F(u_{k})^{T}\mathcal{G}_{k} + \frac{\eta_{eff}^{2}L}{2}\left|\left| \mathcal{G}_{k} \right|\right|^{2}\\
    \implies F(u_{k+1}) \leq F(u_{k}) - \eta_{eff}\left<\nabla F(u_{k}),\mathcal{G}_{k}\right> + \frac{\eta_{eff}^{2}L}{2}\left|\left| \mathcal{G}_{k} \right|\right|^{2}
    \end{gathered}
\end{equation*}
Taking expectations on both sides, we get
\begin{equation*}
    \begin{gathered}
     \mathbb{E}_{\equiv|X_{k}}[F(u_{k+1})] - \mathbb{E}_{\equiv|X_{k}}[F(u_{k})] \leq - \eta_{eff}\mathbb{E}_{\equiv|X_{k}}[\left<\nabla F(u_{k}),\mathcal{G}_{k}\right>] + \frac{\eta_{eff}^{2}L}{2} \mathbb{E}_{\equiv|X_{k}}[\left|\left| \mathcal{G}_{k} \right|\right|^{2}]\\
     \implies F(u_{k+1}) - F(u_{k}) \leq - \eta_{eff}\mathbb{E}_{\equiv|X_{k}}[\left<\nabla F(u_{k}),\mathcal{G}_{k}\right>] + \frac{\eta_{eff}^{2}L}{2} \mathbb{E}_{\equiv|X_{k}}[\left|\left| \mathcal{G}_{k} \right|\right|^{2}]\\
     \textcolor{magenta}{\text{($\because \mathbb{E}_{\equiv|X_{k}}[F(u_{k+1})] = F(u_{k+1})$ and $\mathbb{E}_{\equiv|X_{k}}[F(u_{k})] = F(u_{k})$ due to them being average of models)}}
    \end{gathered}
\end{equation*}
By using Lemmas 4 and 5, we get
\begin{equation*}
    \begin{gathered}
      F(u_{k+1}) - F(u_{k}) \leq - \eta_{eff}\mathbb{E}_{\equiv|X_{k}}[\left<\nabla F(u_{k}),\mathcal{G}_{k}\right>] + \frac{\eta_{eff}^{2}L}{2} \mathbb{E}_{\equiv|X_{k}}[\left|\left| \mathcal{G}_{k} \right|\right|^{2}]\\
      \leq - \frac{\eta_{eff}}{2} \left[ ||\nabla F(u_{k})||^{2} + ||\mathcal{H}_{k}||^{2} - \frac{L^{2}}{cm}||X_{k}(I-J)||^{2}_{F} \right] + \frac{\eta_{eff}^{2}L}{2} \left[ \frac{\sigma^{2}}{cm} + ||\mathcal{H}_{k}||^{2} \right]\\
      \leq - \frac{\eta_{eff}}{2}||\nabla F(u_{k})||^{2} - \frac{\eta_{eff}}{2}||\mathcal{H}_{k}||^{2} + \frac{\eta_{eff}L^{2}}{2cm}||X_{k}(I-J)||^{2}_{F} + \frac{\eta_{eff}^{2}L}{2}  ||\mathcal{H}_{k}||^{2} + \frac{\eta_{eff}^{2}L\sigma^{2}}{2cm}\\
      \leq - \frac{\eta_{eff}}{2}||\nabla F(u_{k})||^{2} - \frac{\eta_{eff}}{2}\left(1-\eta_{eff}L\right)||\mathcal{H}_{k}||^{2} + \frac{\eta_{eff}L^{2}}{2cm}||X_{k}(I-J)||^{2}_{F} + \frac{\eta_{eff}^{2}L\sigma^{2}}{2cm}\\
    \end{gathered}
\end{equation*}

If $\eta_{eff}L \leq 1$, then we get,
\begin{equation*}
    \begin{gathered}
      F(u_{k+1}) - F(u_{k}) \leq - \frac{\eta_{eff}}{2}||\nabla F(u_{k})||^{2} + \frac{\eta_{eff}L^{2}}{2cm}||X_{k}(I-J)||^{2}_{F} + \frac{\eta_{eff}^{2}L\sigma^{2}}{2cm}
    \end{gathered}
\end{equation*}
Rearranging,
\begin{equation*}
    \begin{gathered}
       \frac{\eta_{eff}}{2}||\nabla F(u_{k})||^{2} \leq F(u_{k}) - F(u_{k+1}) + \frac{\eta_{eff}L^{2}}{2cm}||X_{k}(I-J)||^{2}_{F} + \frac{\eta_{eff}^{2}L\sigma^{2}}{2cm}
    \end{gathered}
\end{equation*}
Dividing by $\eta_{eff}/2$, we get
\begin{equation*}
    \begin{gathered}
      ||\nabla F(u_{k})||^{2} \leq \frac{2[ F(u_{k}) - F(u_{k+1})]}{\eta_{eff}} + \frac{\eta_{eff}L\sigma^{2}}{cm} + \frac{L^{2}}{cm}||X_{k}(I-J)||^{2}_{F}
    \end{gathered}
\end{equation*}
Averaging over all iterates from $k=1$ to $k=K$ on both sides, we get
\begin{equation*}
    \begin{gathered}
      \frac{1}{K}\sum_{k=1}^{K}||\nabla F(u_{k})||^{2} \leq \frac{2[ \sum_{k=1}^{K}F(u_{k}) - \sum_{k=1}^{K}F(u_{k+1})]}{\eta_{eff}K} + \frac{\eta_{eff}L\sigma^{2}}{cmK}\sum_{k=1}^{K}1 + \frac{L^{2}}{Kcm}\sum_{k=1}^{K}||X_{k}(I-J)||^{2}_{F}
    \end{gathered}
\end{equation*}
Now,
\begin{equation*}
    \begin{gathered}
      2\sum_{k=1}^{K}[F(u_{k}) - F(u_{k+1})]   = F(u_{1}) - F(u_{k}) \leq F(u_{1}) - F_{inf}
    \end{gathered}
\end{equation*}
Thus, we get
\begin{equation*}
    \begin{gathered}
      \frac{1}{K}\sum_{k=1}^{K}||\nabla F(u_{k})||^{2} \leq \frac{2[ F(u_{1}) - F_{inf}]}{\eta_{eff}K} + \frac{\eta_{eff}L\sigma^{2}}{cm} + \frac{L^{2}}{Kcm}\sum_{k=1}^{K}||X_{k}(I-J)||^{2}_{F}
    \end{gathered}
\end{equation*}
Taking expectations on both sides, we get,
\begin{equation*}
    \begin{gathered}
      \mathbb{E}\left[ \frac{1}{K}\sum_{k=1}^{K}||\nabla F(u_{k})||^{2} \right] \leq \frac{2[ F(u_{1}) - F_{inf}]}{\eta_{eff}K} + \frac{\eta_{eff}L\sigma^{2}}{cm} + \frac{L^{2}}{Kcm}\sum_{k=1}^{K}\mathbb{E}||X_{k}(I-J)||^{2}_{F}
    \end{gathered}
\end{equation*}

\subsection{Other Lemmas to prove for Theorem 1}
\subsubsection{Lemma 6} \label{niid:lemma6}
\textcolor{red}{\textbf{For two matrices $A \in \mathbb{R}^{d\times m}$ and $B \in \mathbb{R}^{m\times m}$ we have:
\begin{equation}
    ||AB||_{F} \leq ||A||_{op}||B||_{F}
\end{equation}}}

\subsubsection{Lemma 7} \label{niid:lemma7}
\textcolor{red}{\textbf{For two matrices $A \in \mathbb{R}^{m\times n}$ and $B \in \mathbb{R}^{n\times m}$, we have:
\begin{equation}
    |Tr(AB)| \leq ||A||_{F} ||B||_{F}
\end{equation}}}

\subsubsection{Lemma 8} \label{niid:lemma8}
\textcolor{red}{\textbf{Let there be some $m\times m$ matrix $\Phi_{s,k}$ that satisfies assumption 5. Then,
\begin{equation}
    ||\Phi_{s,k}^{T}(I-J)||^{2}_{F} \leq \delta
\end{equation}
where $\delta \in [0, c(m - 1)]$}}\\

\subsection{Theorem 2} \label{theorem2}
As per Lemma 2 derivation, we achieved Equation \ref{eqn9}. We want to provide an upper bound to the network error term.\\

Similar to the IID case, we have
\begin{equation*}
    \begin{gathered}
    X_{k}(I-J) = X_{1}\Phi_{1,k-1}^{T}(I-J) - \eta \sum_{s=1}^{k-1} G_{s}\Phi_{s,k-1}^{T}(I-J)
    \end{gathered}
\end{equation*}
where $\Phi_{s,k-1}^{T} = \prod_{l=s}^{k-1}S_{l}^{T}$.\\
Taking the squared frobenius norm and expectations on both sides, we get,
\begin{equation*}
    \begin{gathered}
    \mathbb{E} ||X_{k}(I-J)||^{2}_{F} = \mathbb{E} \left|\left|X_{1}\Phi_{1,k-1}^{T}(I-J) - \eta\sum_{s=1}^{k-1} G_{s}\Phi_{s,k-1}^{T}(I-J)\right|\right|^{2}_{F}\\
    \implies \mathbb{E} ||X_{k}(I-J)||^{2}_{F} = \mathbb{E} \left|\left|X_{1}\Phi_{1,k-1}^{T}(I-J)\right|\right|^{2}_{F} +  \eta^{2} \mathbb{E} \left|\left|\sum_{s=1}^{k-1} G_{s}\Phi_{s,k-1}^{T}(I-J)\right|\right|^{2}_{F} \\- 2\left< X_{1}\Phi_{1,k-1}^{T}(I-J), \eta\sum_{s=1}^{k-1} G_{s}\Phi_{s,k-1}^{T}(I-J)  \right>\\
    \implies \mathbb{E} ||X_{k}(I-J)||^{2}_{F} \leq \mathbb{E} \left|\left|X_{1}\Phi_{1,k-1}^{T}(I-J)\right|\right|^{2}_{F} +  \eta^{2} \mathbb{E} \left|\left|\sum_{s=1}^{k-1} G_{s}\Phi_{s,k-1}^{T}(I-J)\right|\right|^{2}_{F}
    \end{gathered}
\end{equation*}

Now let $K = j\tau + i$, where $j$ denotes the index of communication rounds and $i$ denotes index of local updates. Then, similar to IID case, we get,
\begin{equation*}
    \begin{gathered}
    \mathbb{E} ||X_{k}(I-J)||^{2}_{F} \leq \mathbb{E} \left|\left|X_{1}\Phi_{1,k-1}^{T}(I-J)\right|\right|^{2}_{F} + \eta^{2} \mathbb{E} \left|\left|\sum_{r=0}^{j} Y_{r}\Phi_{r,j}^{T}(I-J)\right|\right|^{2}_{F}\\
    = \mathbb{E} \left|\left|X_{1}\Phi_{1,k-1}^{T}(I-J)\right|\right|^{2}_{F} + \eta^{2} \mathbb{E} \left|\left|\sum_{r=0}^{j} (Y_{r}-Q_{r})\Phi_{r,j}^{T}(I-J) + \sum_{r=0}^{j} Q_{r}\Phi_{r,j}^{T}(I-J)\right|\right|^{2}_{F}\\
    \leq \underbrace{\mathbb{E} \left|\left|X_{1}\Phi_{1,k-1}^{T}(I-J)\right|\right|^{2}_{F}}_{T0} + \underbrace{2\eta^{2} \mathbb{E} \left|\left|\sum_{r=0}^{j} (Y_{r}-Q_{r})\Phi_{r,j}^{T}(I-J)\right|\right|^{2}_{F}}_{\text{T1}} + \underbrace{2\eta^{2} \mathbb{E} \left|\left|\sum_{r=0}^{j} Q_{r}\Phi_{r,j}^{T}(I-J)\right|\right|^{2}_{F}}_{\text{T2}}\\
    \textcolor{magenta}{(\because ||a+b||^{2} \leq 2||a||^{2} + 2||b||^{2})}
    \end{gathered}
\end{equation*}
We now look to bound T0, T1 and T2. We will derive bounds for average of all iterates.
\subsubsection{Bounding T0} \label{t0}
\begin{equation*}
    \begin{gathered}
    T0 = \mathbb{E} \left|\left|X_{1}\Phi_{1,k-1}^{T}(I-J)\right|\right|^{2}_{F}\\
    = \mathbb{E} \left|\left|X_{1}(\Phi_{1,k-1}^{T}-J)\right|\right|^{2}_{F}\\
    \leq \mathbb{E} \left|\left|X_{1}\right|\right|^{2}_{op} \left|\left| \Phi_{1,k-1}^{T}-J\right|\right|^{2}_{F}\\
    \leq \mathbb{E} \left|\left|X_{1}\right|\right|^{2}_{F} \left|\left| \Phi_{1,k-1}^{T}-J\right|\right|^{2}_{F}\\
    \textcolor{magenta}{(\because ||A||_{op} \leq ||A||_{F})}\\
    \leq \mathbb{E} \left|\left|X_{1}\right|\right|^{2}_{F} \delta\\
    \textcolor{magenta}{(\because \text{Lemma 8})}\\
    \leq \delta\left|\left|X_{1}\right|\right|^{2}_{F}
    \end{gathered}
\end{equation*}

\subsubsection{Bounding T1} \label{t1}
\begin{equation*}
    \begin{gathered}
    T1 = 2\eta^{2} \mathbb{E} \left|\left|\sum_{r=0}^{j} (Y_{r}-Q_{r})\Phi_{r,j}^{T}(I-J)\right|\right|^{2}_{F} = 2\eta^{2} \sum_{r=0}^{j} \mathbb{E} \left|\left| (Y_{r}-Q_{r})\Phi_{r,j}^{T}(I-J)\right|\right|^{2}_{F} 
    \end{gathered} 
\end{equation*}
This is possible since cross terms are zero (similar logic as IID case). By Lemma 6, we have,
\begin{equation*}
    \begin{gathered}
    \left|\left| (Y_{r}-Q_{r})\Phi_{r,j}^{T}(I-J)\right|\right|^{2}_{F} \leq  \left|\left| Y_{r}-Q_{r}\right|\right|^{2}_{op} \left|\left| \Phi_{r,j}^{T}(I-J)\right|\right|^{2}_{F}\\
    \implies \left|\left| (Y_{r}-Q_{r})\Phi_{r,j}^{T}(I-J)\right|\right|^{2}_{F} \leq  \left|\left| Y_{r}-Q_{r}\right|\right|^{2}_{F} \left|\left| \Phi_{r,j}^{T}(I-J)\right|\right|^{2}_{F}\\
    \end{gathered}
\end{equation*}
---here
Thus, we have,
\begin{equation*}
    \begin{gathered}
    T1 \leq 2\eta^{2} \sum_{r=0}^{j} \mathbb{E} \left|\left| Y_{r}-Q_{r}\right|\right|^{2}_{F} \left|\left| \Phi_{r,j}^{T}(I-J)\right|\right|^{2}_{F}\\
    \implies T1 \leq 2\eta^{2} \sum_{r=0}^{j} \mathbb{E} \left|\left| Y_{r}-Q_{r}\right|\right|^{2}_{F} \delta\\
    \textcolor{magenta}{(\because \text{Lemma 8})}\\
    \implies T1 \leq 2\eta^{2} \sum_{r=0}^{j-1} \mathbb{E} \left|\left| Y_{r}-Q_{r}\right|\right|^{2}_{F} \delta + 2\eta^{2} \mathbb{E} \left|\left| Y_{j}-Q_{j}\right|\right|^{2}_{F}\delta\\
    \end{gathered}
\end{equation*}
Now, for any $0\leq r < j$,
\begin{equation*}
    \begin{gathered}
     \mathbb{E} \left|\left| Y_{r}-Q_{r}\right|\right|^{2}_{F} = \mathbb{E} \left[ \left|\left| \sum_{s=r\tau+1}^{(r+1)\tau}[G_{s}-\nabla F(X_{s})]\right|\right|^{2}_{F} \right] = \sum_{i = 1}^{m} \mathbb{E} \left[ \left|\left| \sum_{s=r\tau+1}^{(r+1)\tau}[g(x_{s}^{(i)})-\nabla F(x_{s}^{(i)})]\right|\right|^{2} \right]\\
     = \sum_{i = 1}^{m}  \left( \mathbb{E} \left[ \sum_{s=r\tau+1}^{(r+1)\tau} \left|\left| g(x_{s}^{(i)})-\nabla F(x_{s}^{(i)})\right|\right|^{2} \right] + \mathbb{E} \left[  \sum_{s,l \in S_{t}, s\neq l}\left< g(x_{s}^{(i)})-\nabla F(x_{s}^{(i)}), g(x_{l}^{(i)})-\nabla F(x_{l}^{(i)}) \right> \right] \right)
    \end{gathered}
\end{equation*}
Here, the cross terms are zero. We prove as follows:
\begin{equation*}
    \begin{gathered}
     \mathbb{E} \left< g(x_{s}^{(i)})-\nabla F(x_{s}^{(i)}), g(x_{l}^{(i)})-\nabla F(x_{l}^{(i)}) \right> =   \mathbb{E}_{x_{s}^{i}, \xi_{s}^{i}, x_{l}^{i}} \mathbb{E}_{\xi_{l}^{i} | x_{s}^{i}, \xi_{s}^{i}, x_{l}^{i}} \left[ \left< g(x_{s}^{(i)})-\nabla F(x_{s}^{(i)}), g(x_{l}^{(i)})-\nabla F(x_{l}^{(i)}) \right> \right]\\
     \textcolor{magenta}{(\because \mathbb{E}_{x}[\mathbb{E}_{y|x} (Z(x,y))] = \mathbb{E}(Z))}\\
     = \mathbb{E}_{x_{s}^{i}, \xi_{s}^{i}, x_{l}^{i}} \left[ \left< \mathbb{E}_{\xi_{l}^{i} | x_{s}^{i}, \xi_{s}^{i}, x_{l}^{i}} \left[ g(x_{s}^{(i)})-\nabla F(x_{s}^{(i)}) \right], \mathbb{E}_{\xi_{l}^{i} | x_{s}^{i}, \xi_{s}^{i}, x_{l}^{i}} \left[ g(x_{l}^{(i)})-\nabla F(x_{l}^{(i)}) \right] \right> \right]\\
     = \mathbb{E}_{x_{s}^{i}, \xi_{s}^{i}, x_{l}^{i}} \left[ \left<  g(x_{s}^{(i)})-\nabla F(x_{s}^{(i)}), \mathbb{E}_{\xi_{l}^{i} | x_{s}^{i}, \xi_{s}^{i}, x_{l}^{i}} \left[ g(x_{l}^{(i)})-\nabla F(x_{l}^{(i)}) \right] \right> \right]\\
     = \mathbb{E} \left[ \left<  g(x_{s}^{(i)})-\nabla F(x_{s}^{(i)}), \mathbb{E}_{\xi_{l}^{i} | x_{s}^{i}, \xi_{s}^{i}, x_{l}^{i}} \left[ g(x_{l}^{(i)})-\nabla F(x_{l}^{(i)}) \right] \right> \right]\\
     = \mathbb{E} \left[ \left<  g(x_{s}^{(i)})-\nabla F(x_{s}^{(i)}), 0 \right> \right]\\
     \textcolor{magenta}{(\because \mathbb{E}_{\xi_{l}^{i} | x_{s}^{i}, \xi_{s}^{i}, x_{l}^{i}} \left[ g(x_{l}^{(i)})-\nabla F(x_{l}^{(i)}) \right] = \mathbb{E}_{\xi_{l}^{i} | x_{s}^{i}, \xi_{s}^{i}, x_{l}^{i}} [g(x_{l}^{(i)})]-\nabla F(x_{l}^{(i)}) = \nabla F(x_{l}^{(i)}) - \nabla F(x_{l}^{(i)}) = 0)}
    \end{gathered}
\end{equation*}
Thus, we have,
\begin{equation*}
    \begin{gathered}
     \mathbb{E} \left|\left| Y_{r}-Q_{r}\right|\right|^{2}_{F} 
     = \sum_{i = 1}^{m}   \mathbb{E} \left[ \sum_{s=r\tau+1}^{(r+1)\tau} \left|\left| g(x_{s}^{(i)})-\nabla F(x_{s}^{(i)})\right|\right|^{2} \right] = \mathbb{E}    \left[ \sum_{i = 1}^{m} \sum_{s=r\tau+1}^{(r+1)\tau} \left|\left| g(x_{s}^{(i)})-\nabla F(x_{s}^{(i)})\right|\right|^{2} \right]\\
     = \mathbb{E}    \left[ \sum_{s=r\tau+1}^{(r+1)\tau} \sum_{i = 1}^{m} \left|\left| g(x_{s}^{(i)})-\nabla F(x_{s}^{(i)})\right|\right|^{2} \right]\\
     \textcolor{magenta}{\left( \because \sum_{x}\sum_{y}a = \sum_{y}\sum_{x}a \right)}\\
     =    \left[ \sum_{s=r\tau+1}^{(r+1)\tau} \sum_{i \in C_{k}} \mathbb{E} 
 \left|\left| g(x_{s}^{(i)})-\nabla F(x_{s}^{(i)})\right|\right|^{2} \right]\\
     \leq    \sum_{s=r\tau+1}^{(r+1)\tau} \sum_{i \in C_{k}} \left[\sigma^{2} \right]\\
     \textcolor{magenta}{\left( \because \text{Assumption 4} \right)}\\
     \leq    \sum_{s=r\tau+1}^{(r+1)\tau} \sum_{i \in C_{k}} \sigma^{2} \\
     \leq    \tau cm \sigma^{2} \\
    \end{gathered}
\end{equation*}
Similarly,
\begin{equation*}
    \begin{gathered}
     \mathbb{E} \left|\left| Y_{j}-Q_{j}\right|\right|^{2}_{F} 
     \leq  (i-1) cm \sigma^{2} \\
    \end{gathered}
\end{equation*}
Thus, we get,
\begin{equation*}
    \begin{gathered}
    T1 \leq 2\eta^{2} \sum_{r=0}^{j-1} \left[ \delta \left(  \tau cm \sigma^{2} \right) \right] + 2\eta^{2}(i-1) cm \sigma^{2} \delta\\
    \leq 2\eta^{2}cm \sigma^{2} \delta\left[ \tau j + i-1 \right]
    \end{gathered}
\end{equation*}
Now, we sum over all iterates in $j^{th}$ local update period (from $i=1$ to $i=\tau$).
\begin{equation*}
    \begin{gathered}
    \sum_{i=1}^{\tau} T1 \leq  2\eta^{2}cm \sigma^{2}\delta\left[ \tau j \sum_{i=1}^{\tau}1 + \sum_{i=1}^{\tau}i-\sum_{i=1}^{\tau}1 \right]\\
    \implies \sum_{i=1}^{\tau} T1 \leq  2\eta^{2}cm \sigma^{2}\delta\left[ \tau^{2}j + \frac{\tau(\tau+1)}{2}-\tau \right]\\
    \implies \sum_{i=1}^{\tau} T1 \leq  \eta^{2}cm \sigma^{2}\delta\tau\left[ 2\tau j + \tau-1 \right]\\    
    \end{gathered}
\end{equation*}

Then, summing over all periods $j=0$ to $j=K/\tau-1$ we get,
\begin{equation*}
    \begin{gathered}
     \sum_{j=0}^{K/\tau-1}\sum_{i=1}^{\tau} T1 \leq  \eta^{2}cm \sigma^{2}\delta \tau\left[ 2\tau\sum_{j=0}^{K/\tau-1}j  + (\tau-1)\sum_{j=0}^{K/\tau-1}1 \right] \\ 
     \implies \sum_{j=0}^{K/\tau-1}\sum_{i=1}^{\tau} T1 \leq  \eta^{2}cm \sigma^{2} \delta \tau\left[ 2\tau\frac{K/\tau (K/\tau - 1)}{2} + (\tau-1)K/\tau \right] \\
     \implies \sum_{j=0}^{K/\tau-1}\sum_{i=1}^{\tau} T1 \leq  \eta^{2}cm \sigma^{2}\delta \tau\left[ K\left(K/\tau-1\right) + (\tau-1)K/\tau \right] \\
     \implies \sum_{j=0}^{K/\tau-1}\sum_{i=1}^{\tau} T1 \leq  \eta^{2}cm \sigma^{2}\delta K(K-1)
    \end{gathered}
\end{equation*}

\subsubsection{Bounding T2} \label{t2}
\begin{equation*}
    \begin{gathered}
    T2 = 2\eta^{2} \mathbb{E} \left|\left|\sum_{r=0}^{j} Q_{r}\Phi_{r,j}^{T}(I-J)\right|\right|^{2}_{F}
    \end{gathered}
\end{equation*}
Since $||A||^{2}_{F} = Tr(A^{T}A)$, we have,
\begin{equation*}
    \begin{gathered}
    T2 = 2\eta^{2} \mathbb{E} Tr \left[ \left( \sum_{n=0}^{j} Q_{n}\Phi_{n,j}^{T}(I-J) \right)^{T} \left( \sum_{r=0}^{j} Q_{r}\Phi_{r,j}^{T}(I-J) \right) \right]
    \end{gathered}
\end{equation*}
Expanding the trace equation,
\begin{equation*}
    \begin{gathered}
    T2 = 2\eta^{2} \sum_{r=0}^{j} \sum_{n=0}^{j} \mathbb{E} Tr \left[ \left( \left[\Phi_{n,j}^{T}(I-J)\right]^{T}Q_{n}^{T} \right) \left( Q_{r}\Phi_{r,j}^{T}(I-J) \right) \right]
    \end{gathered}
\end{equation*}
We now split into two parts:
\begin{itemize}
    \item Diagonal Terms where $r=n$
    \item Off Diagonal Terms where $r\neq n$
\end{itemize}
Thus,
\begin{equation*}
    \begin{gathered}
    T2 = 2\eta^{2} \sum_{r=0}^{j} \sum_{n=0}^{j} \mathbb{E} Tr \left[ \left( \left[\Phi_{n,j}^{T}(I-J)\right]^{T}Q_{n}^{T} \right) \left( Q_{r}\Phi_{r,j}^{T}(I-J) \right) \right]
    \\
    = 2\eta^{2} \sum_{r=0}^{j} \mathbb{E} Tr \left[ \left[\Phi_{r,j}^{T}( I-J)\right]^{T}Q_{r}^{T}   Q_{r}\Phi_{r,j}^{T}(I-J) \right] + 2\eta^{2}  \sum_{n=0}^{j} \sum_{r=0, r\neq n}^{j} \mathbb{E} Tr \left[  \left[\Phi_{n,j}^{T}(I-J)\right]^{T}Q_{n}^{T}  Q_{r}\Phi_{r,j}^{T}(I-J)  \right]\\
    = 2\eta^{2} \sum_{r=0}^{j} \mathbb{E} || Q_{r}\Phi_{r,j}^{T}(I-J) ||^{2}_{F} + 2\eta^{2}  \sum_{n=0}^{j} \sum_{l=0, l\neq n}^{j} \mathbb{E} Tr \left[  \left[\Phi_{n,j}^{T}(I-J)\right]^{T}Q_{n}^{T}  Q_{l}\Phi_{l,j}^{T}(I-J)  \right]\\
    \textcolor{magenta}{(\because \text{By frobenius norm definition})}
    \end{gathered}
\end{equation*}
Now, by Lemma 7,
\begin{equation*}
    \begin{gathered}
     \left|Tr \left[  \left[\Phi_{n,j}^{T}(I-J)\right]^{T}Q_{n}^{T}  Q_{l}\Phi_{l,j}^{T}(I-J)  \right]\right|
     \leq  ||\left[\Phi_{n,j}^{T}(I-J)\right]^{T}Q_{n}^{T}||_{F}  ||Q_{l}\Phi_{l,j}^{T}(I-J)||_{F}\\
     \leq  ||\Phi_{n,j}^{T}(I-J)||_{op}||Q_{n}||_{F}  ||Q_{l}||_{op}||\Phi_{l,j}^{T}(I-J)||_{F}\\
     \textcolor{magenta}{(\because \text{Lemma 6 and }||A||_{F} = ||A^{T}||_{F})}\\
     \leq  ||\Phi_{n,j}^{T}(I-J)||_{F}||Q_{n}||_{F}  ||Q_{l}||_{op}||\Phi_{l,j}^{T}(I-J)||_{F}\\
     \textcolor{magenta}{(\because ||A||_{F} \geq ||A||_{op})}\\
     \leq  \sqrt{\delta}||Q_{n}||_{F}  ||Q_{l}||_{F}\sqrt{\delta}\\
     \textcolor{magenta}{(\because \text{Lemma 8 and 9})}\\
     \leq \frac{\delta}{2} [||Q_{n}||_{F}^{2} + ||Q_{l}||_{F}^{2}]\\
     \textcolor{magenta}{(\because \text{AM $\geq$ GM})}\\
    \end{gathered}
\end{equation*}
Thus, we have,
\begin{equation*}
    \begin{gathered}
    T2 \leq 2\eta^{2} \sum_{r=0}^{j} \mathbb{E} || Q_{r}\Phi_{r,j}^{T}(I-J) ||^{2}_{F} + \eta^{2}\delta  \sum_{n=0}^{j} \sum_{l=0, l\neq n}^{j}  \mathbb{E} \left[  ||Q_{n}||_{F}^{2} + ||Q_{l}||_{F}^{2}  \right]\\
    \leq 2\eta^{2} \sum_{r=0}^{j} \mathbb{E} || Q_{r}||^{2}_{op}||\Phi_{r,j}^{T}(I-J) ||^{2}_{F} + \eta^{2}\delta  \sum_{n=0}^{j} \sum_{l=0, l\neq n}^{j}  \mathbb{E} \left[  ||Q_{n}||_{F}^{2} + ||Q_{l}||_{F}^{2}  \right]\\
    \leq 2\eta^{2} \sum_{r=0}^{j} \mathbb{E} || Q_{r}||^{2}_{F}||\Phi_{r,j}^{T}(I-J) ||^{2}_{F} + \eta^{2}\delta  \sum_{n=0}^{j} \sum_{l=0, l\neq n}^{j}  \mathbb{E} \left[  ||Q_{n}||_{F}^{2} + ||Q_{l}||_{F}^{2}  \right]\\
    \leq 2\eta^{2}\delta \sum_{r=0}^{j} \mathbb{E} || Q_{r}||^{2}_{F} + \eta^{2}\delta  \sum_{n=0}^{j} \sum_{l=0, l\neq n}^{j}  \mathbb{E} \left[  ||Q_{n}||_{F}^{2} + ||Q_{l}||_{F}^{2}  \right]\\
    \end{gathered}
\end{equation*}
Now $||Q_{n}||$ and $||Q_{l}||$ are the same thing, so $\mathbb{E}||Q_{n}|| = \mathbb{E}||Q_{l}||$. Thus,
\begin{equation*}
    \begin{gathered}
    T2 \leq 2\eta^{2}\delta \sum_{r=0}^{j} \mathbb{E} || Q_{r}||^{2}_{F} + 2\eta^{2}\delta  \sum_{n=0}^{j} \sum_{l=0, l\neq n}^{j}  \mathbb{E} \left[  ||Q_{n}||_{F}^{2}  \right]\\
    \leq 2\eta^{2}\delta \sum_{r=0}^{j} \mathbb{E} || Q_{r}||^{2}_{F} + 2\eta^{2}\delta  \sum_{n=0}^{j}  \mathbb{E} \left[  ||Q_{n}||_{F}^{2}  \right]\sum_{l=0, l\neq n}^{j}1\\
    \leq 2\eta^{2}\delta \left[ \sum_{r=0}^{j} \mathbb{E} || Q_{r}||^{2}_{F} +  \sum_{n=0}^{j}  \mathbb{E}  ||Q_{n}||_{F}^{2}  \sum_{l=0, l\neq n}^{j}1 \right]\\
    \leq 2\eta^{2}\delta \left[ \sum_{r=0}^{j-1} \mathbb{E} || Q_{r}||^{2}_{F} +   \sum_{n=0}^{j-1}  \mathbb{E}  ||Q_{n}||_{F}^{2}  \sum_{l=0, l\neq n}^{j}1 +  \mathbb{E} || Q_{j}||^{2}_{F} +  \mathbb{E}  ||Q_{j}||_{F}^{2}  \sum_{l=0, l\neq n}^{j}1 \right]\\
    \leq 2\eta^{2}\delta \left[ \sum_{r=0}^{j-1} \mathbb{E} || Q_{r}||^{2}_{F} +  \sum_{n=0}^{j-1}  \mathbb{E}  ||Q_{n}||_{F}^{2}  \sum_{l=0}^{j}1 +  \mathbb{E} || Q_{j}||^{2}_{F} +  \mathbb{E}  ||Q_{j}||_{F}^{2}  \sum_{l=0}^{j}1 \right]\\
    \textcolor{magenta}{(\because \text{More terms on RHS})}\\
    \leq 2\eta^{2}\delta \left[ \sum_{r=0}^{j-1} \mathbb{E} || Q_{r}||^{2}_{F} +  \sum_{n=0}^{j-1}  \mathbb{E}  ||Q_{n}||_{F}^{2}  (j+1) +  \mathbb{E} || Q_{j}||^{2}_{F} +  \mathbb{E}  ||Q_{j}||_{F}^{2}  (j+1) \right]\\
    \leq 2\eta^{2}\delta  \sum_{r=0}^{j-1} (1+(j+1)) \mathbb{E} || Q_{r}||^{2}_{F} +  2\eta^{2}\delta(1+ (j+1)) \mathbb{E} || Q_{j}||^{2}_{F} \\
    \leq 2\eta^{2}\delta  \sum_{r=0}^{j-1} (j+2) \mathbb{E} || \sum_{s=1}^{\tau} \nabla F(X_{r\tau+s})||^{2}_{F} +  2\eta^{2}\delta(j+2) \mathbb{E} || \sum_{s=1}^{i-1} \nabla F(X_{j\tau+s}) ||^{2}_{F} \\
    \end{gathered}
\end{equation*}
Now, By Jensen's Inequality, 
\begin{equation*}
    \begin{gathered}
     \varphi(\mathbb{E}[X]) \leq \mathbb{E}[\varphi(X)] \implies \text{ Take }\varphi = ||\cdot ||^{2}_{F} \text{ and } X = \nabla F(X_{r\tau+s}) \text{, then }
     \\||E[X]||^{2}_{F} = \left| \left| \frac{1}{\tau} \sum_{s=1}^{\tau} \nabla F(X_{r\tau+s})  \right| \right|^{2}_{F} \leq  \frac{1}{\tau} \sum_{s=1}^{\tau} \left| \left| \nabla F(X_{r\tau+s})  \right| \right|^{2}_{F} \\
    \implies \left| \left| \sum_{s=1}^{\tau} \nabla F(X_{r\tau+s})  \right| \right|^{2}_{F} \leq  \tau \sum_{s=1}^{\tau} \left| \left| \nabla F(X_{r\tau+s})  \right| \right|^{2}_{F}\\
    \end{gathered}
\end{equation*}
For second term in a similar way, 
\begin{equation*}
    \begin{gathered}
     \left| \left| \sum_{s=1}^{\tau} \nabla F(X_{r\tau+s})  \right| \right|^{2}_{F} \leq  (i-1) \sum_{s=1}^{\tau} \left| \left| \nabla F(X_{r\tau+s})  \right| \right|^{2}_{F}\\
    \end{gathered}
\end{equation*}
Thus, we get,
\begin{equation*}
    \begin{gathered}
    T2 \leq 2\eta^{2}\delta\tau  \sum_{r=0}^{j-1} \left[ (j+2)  \sum_{s=1}^{\tau}  \mathbb{E} \left| \left| \nabla F(X_{r\tau+s})  \right| \right|^{2}_{F} \right] +  2\eta^{2}\delta(j+2)(i-1) \sum_{s=1}^{i-1} \mathbb{E}  \left| \left| \nabla F(X_{r\tau+s})  \right| \right|^{2}_{F}
    \end{gathered}
\end{equation*}

Next, we sum over all iterates in $j^{th}$ period.
\begin{equation*}
    \begin{gathered}
    \sum_{i=1}^{\tau}T2 \leq 2\eta^{2}\delta\tau  \sum_{r=0}^{j-1} \left[ (j+2)  \sum_{s=1}^{\tau}  \mathbb{E} \left| \left| \nabla F(X_{r\tau+s})  \right| \right|^{2}_{F} \right] \sum_{i=1}^{\tau}1 \\+  2\eta^{2}\delta(j+2)\sum_{i=1}^{\tau}(i-1) \sum_{s=1}^{i-1} \mathbb{E}  \left| \left| \nabla F(X_{r\tau+s})  \right| \right|^{2}_{F}\\
    \end{gathered}
\end{equation*}
Now,
\begin{equation*}
    \begin{gathered}
    \sum_{i=1}^{\tau}(1-i) \sum_{s=1}^{i-1} \mathbb{E} \left|\left|  \nabla F(X_{j\tau + s}) \right|\right|_{F}^{2} \leq \frac{\tau(\tau-1)}{2} \sum_{s=1}^{\tau-1} \mathbb{E} \left|\left|  \nabla F(X_{j\tau + s}) \right|\right|_{F}^{2}\\
    \end{gathered}
\end{equation*}
Thus, we get,
\begin{equation*}
    \begin{gathered}
    \sum_{i=1}^{\tau}T2 \leq 2\eta^{2}\delta\tau^{2}  \sum_{r=0}^{j-1} \left[ (j+2)  \sum_{s=1}^{\tau}  \mathbb{E} \left| \left| \nabla F(X_{r\tau+s})  \right| \right|^{2}_{F} \right] \\+  \eta^{2}\delta\tau(\tau-1)(j+2) \sum_{s=1}^{\tau-1} \mathbb{E} \left|\left|  \nabla F(X_{j\tau + s}) \right|\right|_{F}^{2}\\
    \end{gathered}
\end{equation*}
Now summing over all periods ($j=0$ to $j=K/\tau - 1$), we get
\begin{equation*}
    \begin{gathered}
    \sum_{j=0}^{K/\tau-1}\sum_{i=1}^{\tau}T2 \leq 2\eta^{2}\delta\tau^{2}  \sum_{j=0}^{K/\tau-1}\sum_{r=0}^{j-1} \left[ (j+2)  \sum_{s=1}^{\tau}  \mathbb{E} \left| \left| \nabla F(X_{r\tau+s})  \right| \right|^{2}_{F} \right] \\
    +  \eta^{2}\delta\tau(\tau-1)\sum_{j=0}^{K/\tau-1}(j+2) \sum_{s=1}^{\tau-1} \mathbb{E} \left|\left|  \nabla F(X_{j\tau + s}) \right|\right|_{F}^{2}\\
    \end{gathered}
\end{equation*}
Now, we have,
\begin{equation*}
    \begin{gathered}
    \sum_{j=0}^{K/\tau-1}\sum_{r=0}^{j-1} \left[( j+2)  \sum_{s=1}^{\tau}  \mathbb{E} \left| \left| \nabla F(X_{r\tau+s})  \right| \right|^{2}_{F} \right] \leq S_{series}\sum_{j=0}^{K/\tau-1}\sum_{s=1}^{\tau} \mathbb{E} \left| \left| \nabla F(X_{j\tau+s})  \right| \right|^{2}_{F}\\
    \end{gathered}
\end{equation*}
Here, $S_{series}$ is given by the following method:
\begin{equation*}
    \begin{gathered}
    \sum_{j=0}^{K/\tau-1}\sum_{r=0}^{j-1} \left[ (j+2)  \sum_{s=1}^{\tau}  \mathbb{E} \left| \left| \nabla F(X_{r\tau+s})  \right| \right|^{2}_{F} \right]  =  \sum_{j=0}^{K/\tau-1}\sum_{r=0}^{j-1} \left[ j  \underbrace{\sum_{s=1}^{\tau}  \mathbb{E} \left| \left| \nabla F(X_{r\tau+s})  \right| \right|^{2}_{F}}_{V_{r}} \right]  \\
    + 2\sum_{j=0}^{K/\tau-1}\sum_{r=0}^{j-1} \left[  \underbrace{\sum_{s=1}^{\tau}  \mathbb{E} \left| \left| \nabla F(X_{r\tau+s})  \right| \right|^{2}_{F}}_{V_{r}} \right]\\
    = \sum_{j=0}^{K/\tau-1}\sum_{r=0}^{j-1}  jV_{r}  + 2\sum_{j=0}^{K/\tau-1}\sum_{r=0}^{j-1}  V_{r}
    \end{gathered}
\end{equation*}
The first term is broken down as:
\begin{equation*}
    \begin{gathered}
    \sum_{j=0}^{K/\tau-1}\sum_{r=0}^{j-1}  jV_{r}  = \cancel{\sum_{r=0}^{-1}0.V_{r}} + \sum_{r=0}^{0}1.V_{r} + \sum_{r=0}^{1}2.V_{r}... + \sum_{r=0}^{K/\tau-2}(K/\tau - 1).V_{r}\\
    = \left[ 1.V_{0} \right] + \left[ 2.V_{0} + 2.V_{1} \right] + \left[ 3.V_{0} + 3.V_{1} + 3.V_{2} \right] ... \\
    = \left[ 1+2+3...+(K/\tau - 1) \right]V_{0} + \left[ 2+3...+(K/\tau - 1) \right]V_{1} + \left[ 3...+(K/\tau - 1) \right]V_{2} ... + \left[ K/\tau - 1 \right]V_{K/\tau - 2}\\
    \leq \left[ 1+2...+(K/\tau - 1) \right]V_{0} + \left[ 1+2...+(K/\tau - 1) \right]V_{1} + ... + \left[ 1+2...+(K/\tau - 1) \right]V_{K/\tau - 2}\\
    \leq \left[ 1+2...+(K/\tau - 1) \right] (V_{0} + V_{1} + V_{2} ... +V_{K/\tau - 2})\\
    \leq \frac{K/\tau(K/\tau - 1)}{2} (V_{0} + V_{1} + V_{2} ... +V_{K/\tau - 2})\\
    \leq \frac{K/\tau(K/\tau - 1)}{2} \sum_{r=0}^{K/\tau-2}V_{r}\\
    \leq \frac{K/\tau(K/\tau - 1)}{2} \sum_{r=0}^{K/\tau-1}V_{r}\\
    \leq \frac{K/\tau(K/\tau - 1)}{2} \sum_{r=0}^{K/\tau-1}\sum_{s=1}^{\tau}  \mathbb{E} \left| \left| \nabla F(X_{r\tau+s})  \right| \right|^{2}_{F}\\
    \end{gathered}
\end{equation*}
The second term can be written as:
\begin{equation*}
    \begin{gathered}
    \sum_{j=0}^{K/\tau-1}\sum_{r=0}^{j-1}  V_{r} = \cancel{\sum_{r=0}^{-1}  V_{r}} + \sum_{r=0}^{0}  V_{r} + \sum_{r=0}^{1}  V_{r} + ... \sum_{r=0}^{K/\tau-2}  V_{r}\\
    = \left[ V_{0} \right] + \left[ V_{0} + V_{1} \right] + \left[ V_{0} + V_{1} + V{2} \right] + ...\\
    = \left[ K/\tau-1 \right]V_{0} + \left[ K/\tau-2 \right]V_{1}  + ...\\
    \leq \left( K/\tau - 1 \right) (V_{0} + V_{1} + V_{2} ... +V_{K/\tau - 2})\\
    \leq \left( K/\tau - 1 \right) \sum_{r=0}^{K/\tau-2}V_{r}\\
    \leq \left( K/\tau - 1 \right) \sum_{r=0}^{K/\tau-1}V_{r}\\
    \leq \left( K/\tau - 1 \right) \sum_{r=0}^{K/\tau-1}\sum_{s=1}^{\tau}  \mathbb{E} \left| \left| \nabla F(X_{r\tau+s})  \right| \right|^{2}_{F}
    \end{gathered}
\end{equation*}
Then, we can write $S_{series} = (K/\tau - 1) \left[ 2 +\frac{ K}{2\tau}\right]$.
Thus, we get,
\begin{equation*}
    \begin{gathered}
    \sum_{j=0}^{K/\tau-1}\sum_{i=1}^{\tau}T2 \leq 2\eta^{2}\delta\tau^{2}  S_{series}\sum_{j=0}^{K/\tau-1}\sum_{s=1}^{\tau} \mathbb{E} \left| \left| \nabla F(X_{j\tau+s})  \right| \right|^{2}_{F} \\+  \eta^{2}\delta \tau(\tau-1)\sum_{j=0}^{K/\tau-1}(j+2) \sum_{s=1}^{\tau-1} \mathbb{E} \left|\left|  \nabla F(X_{j\tau + s}) \right|\right|_{F}^{2}\\
    \end{gathered}
\end{equation*}
Now, 
\begin{equation*}
    \begin{gathered}
    \sum_{j=0}^{K/\tau-1}(j+2) \sum_{s=1}^{\tau-1} \mathbb{E} \left|\left|  \nabla F(X_{j\tau + s}) \right|\right|_{F}^{2} \leq (2 + (K/\tau -1))\sum_{j=0}^{K/\tau-1} \sum_{s=1}^{\tau} \mathbb{E}\left|\left|  \nabla F(X_{j\tau + s}) \right|\right|_{F}^{2}\\
    \end{gathered}
\end{equation*}

Thus, we get,
\begin{equation*}
    \begin{gathered}
    \sum_{j=0}^{K/\tau-1}\sum_{i=1}^{\tau}T2 \leq 2\eta^{2}\delta\tau^{2}  S_{series}\sum_{j=0}^{K/\tau-1}\sum_{s=1}^{\tau} \mathbb{E} \left| \left| \nabla F(X_{j\tau+s})  \right| \right|^{2}_{F} \\+  \eta^{2}\delta \tau(\tau-1)(1 +  K/\tau)\sum_{j=0}^{K/\tau-1} \sum_{s=1}^{\tau}\mathbb{E}\left|\left|  \nabla F(X_{j\tau + s}) \right|\right|_{F}^{2}\\
    \end{gathered}
\end{equation*}

Replacing $j\tau+s$ with $K$, we get,
\begin{equation*}
    \begin{gathered}
    \sum_{j=0}^{K/\tau-1}\sum_{i=1}^{\tau}T2 \leq 2\eta^{2}\delta\tau^{2}  S_{series}\sum_{k=1}^{K} \mathbb{E} \left| \left| \nabla F(X_{k})  \right| \right|^{2}_{F} +  \eta^{2}\delta \tau(\tau-1)(1 + K/\tau)\sum_{k=1}^{K}\mathbb{E}\left|\left|  \nabla F(X_{k}) \right|\right|_{F}^{2}\\
    \implies \sum_{j=0}^{K/\tau-1}\sum_{i=1}^{\tau}T2 \leq \eta^{2}\delta\tau\left[ \underbrace{2\tau S_{series} + (\tau-1)(1 + K/\tau)}_{P}\right] \sum_{k=1}^{K}\mathbb{E}\left|\left|  \nabla F(X_{k}) \right|\right|_{F}^{2}\\
    \end{gathered}
\end{equation*}

\subsubsection{Final Result} \label{t2}
\begin{equation*}
    \begin{gathered}
    \mathbb{E} ||X_{k}(I-J)||^{2}_{F} = \leq \underbrace{\mathbb{E} \left|\left|X_{1}\Phi_{1,k-1}^{T}(I-J)\right|\right|^{2}_{F}}_{T0} + \underbrace{2\eta^{2} \mathbb{E} \left|\left|\sum_{r=0}^{j} (Y_{r}-Q_{r})\Phi_{r,j}^{T}(I-J)\right|\right|^{2}_{F}}_{\text{T1}} + \underbrace{2\eta^{2} \mathbb{E} \left|\left|\sum_{r=0}^{j} Q_{r}\Phi_{r,j}^{T}(I-J)\right|\right|^{2}_{F}}_{\text{T2}}
    \end{gathered}
\end{equation*}
Now taking summation over all periods, we get,
\begin{equation*}
    \begin{gathered}
    \frac{1}{Kcm}\sum_{k=1}^{K}\mathbb{E} ||X_{k}(I-J)||^{2}_{F}  \leq \frac{1}{Kcm}\sum_{k=1}^{K} (T0 + T1 + T2)\\\\
    \implies \frac{1}{Kcm}\sum_{k=1}^{K}\mathbb{E} ||X_{k}(I-J)||^{2}_{F} \leq \frac{1}{Kcm}\sum_{j=0}^{K/\tau-1}\sum_{i=1}^{\tau} (T0+ T1 + T2)\\
    \implies \frac{L^{2}}{Kcm}\sum_{k=1}^{K}\mathbb{E} ||X_{k}(I-J)||^{2}_{F} \leq  \frac{L^{2}K}{Kcm} \left( \delta\left|\left|X_{1}\right|\right|^{2}_{F} \right) + \frac{L^{2}}{Kcm} \left( \eta^{2}cm \sigma^{2}\delta K(K-1)  \right) \\
    + \frac{L^{2}}{Kcm} \left( P \sum_{k=1}^{K}\mathbb{E}\left|\left|  \nabla F(X_{k}) \right|\right|_{F}^{2}  \right)\\
    \leq \underbrace{\frac{\delta L^{2}}{cm}\left|\left|X_{1}\right|\right|^{2}_{F}}_{\text{Contribution of T0}} 
    +  \underbrace{\eta^{2}\sigma^{2}L^{2}\delta (K-1) }_{\text{Contribution of T1}} 
    +\underbrace{\frac{PL^{2}}{Kcm}\sum_{k=1}^{K}\mathbb{E}\left|\left|  \nabla F(X_{k}) \right|\right|_{F}^{2}}_{\text{Contribution of T2}}  \\
    \end{gathered}
\end{equation*}
Now,
\begin{equation*}
    \begin{gathered}
    ||\nabla F(X_{k}) ||^{2}_{F} = \sum_{i=1}^{m}||\nabla F_{i}(x_{k}^{(i)}) ||^{2} 
    = \sum_{i \in C_{K}}||\nabla F_{i}(x_{k}^{(i)}) ||^{2}\\
    \leq 3 \sum_{i \in C_{K}} \left[||\nabla F_{i}(x_{k}^{(i)}) - \nabla F_{i}(u_{k})||^{2} + ||\nabla F_{i}(u_{k}) - \nabla F(u_{k})||^{2} + ||\nabla F(u_{k})||^{2}\right]\\
    \textcolor{magenta}{(\because ||a+b+c||^{2} \leq 3(||a||^{2} + ||b||^{2} + ||c||^{2}))}\\
    \leq 3||X_{k}(I-J)||^{2} + 3cm\kappa^{2} + 3cm||\nabla F(u_{k})||^{2}\\
    \end{gathered}
\end{equation*}

Thus, we get,
\begin{equation*}
    \begin{gathered}
    \frac{L^{2}}{Kcm}\sum_{k=1}^{K}\mathbb{E} ||X_{k}(I-J)||^{2}_{F}  \leq  \frac{\delta L^{2}}{cm}\left|\left|X_{1}\right|\right|^{2}_{F}
    +  \eta^{2}\sigma^{2}L^{2}\delta (K-1) \\
    +\frac{3PL^{2}}{Kcm}\sum_{k=1}^{K}\mathbb{E}||X_{k}(I-J)||^{2} + \frac{3PL^{2}cm\kappa^{2}}{Kcm}\sum_{k=1}^{K}1 + \frac{3PL^{2}cm}{Kcm}\sum_{k=1}^{K}\mathbb{E}||\nabla F(u_{k})||^{2}  \\
    \implies \frac{L^{2}}{Kcm} \left( 1-3P\right)\sum_{k=1}^{K}\mathbb{E} ||X_{k}(I-J)||^{2}_{F}  \leq  \frac{\delta L^{2}}{cm}\left|\left|X_{1}\right|\right|^{2}_{F}
    +  \eta^{2}\sigma^{2}L^{2}\delta (K-1) \\
     + {3PL^{2}\kappa^{2}} + \frac{3PL^{2}}{K}\sum_{k=1}^{K}\mathbb{E}||\nabla F(u_{k})||^{2}  \\
     \implies \frac{L^{2}}{Kcm} \sum_{k=1}^{K}\mathbb{E} ||X_{k}(I-J)||^{2}_{F}  \leq  \frac{\delta L^{2}}{cm\left( 1-3P\right)}\left|\left|X_{1}\right|\right|^{2}_{F}
    +  \frac{\eta^{2}\sigma^{2}L^{2}\delta (K-1)}{\left( 1-3P\right)} \\
     + \frac{3PL^{2}\kappa^{2}}{\left( 1-3P\right)} + \frac{3PL^{2}}{\left( 1-3P\right)}\frac{1}{K}\sum_{k=1}^{K}\mathbb{E}||\nabla F(u_{k})||^{2}  \\
    \end{gathered}
\end{equation*}

Going all the way back, we now substitute in Equation \ref{eqn9}.
\begin{equation*}
    \begin{gathered}
      \mathbb{E}\left[ \frac{1}{K}\sum_{k=1}^{K}||\nabla F(u_{k})||^{2} \right] \leq \frac{2[ F(u_{1}) - F_{inf}]}{\eta_{eff}K} + \frac{\eta_{eff}L\sigma^{2}}{cm} + \frac{L^{2}}{Kcm}\sum_{k=1}^{K}\left|\left| X_{k}(I - J)\right|\right|^{2}_{F} \\
      \implies \mathbb{E}\left[ \frac{1}{K}\sum_{k=1}^{K}||\nabla F(u_{k})||^{2} \right] \leq \frac{2[ F(u_{1}) - F_{inf}]}{\eta_{eff}K} + \frac{\eta_{eff}L\sigma^{2}}{cm} + \frac{\delta L^{2}}{cm\left( 1-3P\right)}\left|\left|X_{1}\right|\right|^{2}_{F}
    +  \frac{\eta^{2}\sigma^{2}L^{2}\delta (K-1)}{\left( 1-3P\right)} \\
     + \frac{3PL^{2}\kappa^{2}}{\left( 1-3P\right)} + \frac{3PL^{2}}{\left( 1-3P\right)}\frac{1}{K}\sum_{k=1}^{K}\mathbb{E}||\nabla F(u_{k})||^{2}\\
     \implies \left[ 1 - \frac{3PL^{2}}{\left( 1-3P\right)} \right] \mathbb{E}\left[ \frac{1}{K}\sum_{k=1}^{K}||\nabla F(u_{k})||^{2} \right] \leq \frac{2[ F(u_{1}) - F_{inf}]}{\eta_{eff}K} + \frac{\eta_{eff}L\sigma^{2}}{cm} + \frac{\delta L^{2}}{cm\left( 1-3P\right)}\left|\left|X_{1}\right|\right|^{2}_{F}
    \\+  \frac{\eta^{2}\sigma^{2}L^{2}\delta (K-1)}{\left( 1-3P\right)} + \frac{3PL^{2}\kappa^{2}}{\left( 1-3P\right)}
    \end{gathered}
\end{equation*}\

\subsubsection{Case 1: $\delta=0$}
In such a scenario, we get,
\begin{equation*}
    \begin{gathered}
    \mathbb{E}\left[ \frac{1}{K}\sum_{k=1}^{K}||\nabla F(u_{k})||^{2} \right] \leq \frac{2[ F(u_{1}) - F_{inf}]}{\eta_{eff}K} + \frac{\eta_{eff}L\sigma^{2}}{cm}
    \end{gathered}
\end{equation*}

\subsubsection{Case 2: $1\geq\delta>0$}
Similar to the IID Scenario, let us look at the constant term and simplify it.
\begin{equation*}
    \begin{gathered}
      \left[ 1 - \frac{3PL^{2}}{\left( 1-3P\right)} \right] = \left[ \frac{(1-3P) -3PL^{2}}{\left( 1-3P\right)} \right]  = \left[ \frac{1-3P\left( 1 + L^{2} \right)}{ 1-3P} \right]
    \end{gathered}
\end{equation*}

Let the following condition be satisfied, so that our calculations are simplified.
\begin{equation*}
    \begin{gathered}
     \frac{ 1-3P}{1-3P\left( 1 + L^{2} \right)} \leq 2 \\
     \implies 1-3P\left( 1 + L^{2} \right) \geq \frac{1}{2} (1-3P)\\
     \implies (1-3P) - 3PL^{2} \geq \frac{1}{2} (1-3P)\\
     \implies \frac{1}{2}(1-3P) \geq 3PL^{2}\\
     \implies (1-3P) \geq 6PL^{2}\\
     \implies P \leq \frac{1}{6L^{2}+3}\\
    \end{gathered}
\end{equation*}

Eventually, we get,
\begin{equation*}
    \begin{gathered}
    \mathbb{E}\left[ \frac{1}{K}\sum_{k=1}^{K}||\nabla F(u_{k})||^{2} \right] \leq 2\left[ \frac{2[ F(u_{1}) - F_{inf}]}{\eta_{eff}K} + \frac{\eta_{eff}L\sigma^{2}}{cm} + \frac{\delta L^{2}}{cm\left( 1-3P\right)}\left|\left|X_{1}\right|\right|^{2}_{F}
    +  \frac{\eta^{2}\sigma^{2}L^{2}\delta (K-1)}{\left( 1-3P\right)} + \frac{3PL^{2}\kappa^{2}}{\left( 1-3P\right)} \right]\\
    \end{gathered}
\end{equation*}
Let $0\leq P \leq \frac{1}{6} \implies 1 \leq \frac{1}{1-3P} \leq 2$. Then we get,
\begin{equation*}
    \begin{gathered}
    \mathbb{E}\left[ \frac{1}{K}\sum_{k=1}^{K}||\nabla F(u_{k})||^{2} \right] \leq 2\left[ \frac{2[ F(u_{1}) - F_{inf}]}{\eta_{eff}K} + \frac{\eta_{eff}L\sigma^{2}}{cm} + \frac{2\delta L^{2}}{cm}\left|\left|X_{1}\right|\right|^{2}_{F}
    +  2\eta^{2}\sigma^{2}L^{2}\delta (K-1) + 6PL^{2}\kappa^{2} \right]\\
    \implies \mathbb{E}\left[ \frac{1}{K}\sum_{k=1}^{K}||\nabla F(u_{k})||^{2} \right] \leq 4\left[ \frac{2[ F(u_{1}) - F_{inf}]}{\eta_{eff}K} + \frac{\eta_{eff}L\sigma^{2}}{cm} + \frac{\delta L^{2}}{cm}\left|\left|X_{1}\right|\right|^{2}_{F}
    +  \eta^{2}\sigma^{2}L^{2}\delta (K-1) + 3PL^{2}\kappa^{2} \right]\\
    \implies \mathbb{E}\left[ \frac{1}{K}\sum_{k=1}^{K}||\nabla F(u_{k})||^{2} \right] \leq  \epsilon_{IID} +  12PL^{2}\kappa^{2}
    \end{gathered}
\end{equation*}
Also, since $P < \min \left( \frac{1}{6}, \frac{1}{6L^{2}+3} \right) \implies P < \frac{1}{B} \implies 12PL^{2} < \frac{12L^{2}}{B}$. So we can assume that this part of our equation is a constant.\\

Taking this criteria for P ensures that our previous assumption of $P<\frac{1}{3}$ also holds. This holds for Case 3 (Section \ref{theorem2:res-sec3}) as well.

If we further have the condition that $\delta \leq \frac{\tau}{K-1}$, then we can say
\begin{equation*}
    \begin{gathered}
    \mathbb{E}\left[ \frac{1}{K}\sum_{k=1}^{K}||\nabla F(u_{k})||^{2} \right] \leq 4\left[\frac{2[ F(u_{1}) - F_{inf}]}{\eta_{eff}K} + \frac{\eta_{eff}L\sigma^{2}}{cm} +  \frac{\delta L^{2}}{cm}\left|\left|X_{1}\right|\right|^{2}_{F} + \eta^{2}\sigma^{2}L^{2}\tau + 3PL^{2}K^{2}\right]\\
    \end{gathered}
\end{equation*}
This somewhat matches the result from \citep{Wang2021} (ignoring the model initialization error and constant factor). In fact, if we compare with their criterion:
\begin{equation*}
    \begin{gathered}
    \textcolor{blue}{\mathbb{E}\left[ \frac{1}{K}\sum_{k=1}^{K}||\nabla F(u_{k})||^{2} \right] \leq 2\left[\frac{2[ F(u_{1}) - F_{inf}]}{\eta_{eff}K} + \frac{\eta_{eff}L\sigma^{2}}{cm} + \eta^{2}\sigma^{2}L^{2}\left(\frac{1+\varsigma^{2}}{1-\varsigma^{2}}\tau - 1\right) + C_{2}\kappa^{2}\right]}
    \end{gathered}
\end{equation*}
Following a similar analysis as the IID case, we see that our error bounds improves on \citet{Wang2021} as long as $\tau > \frac{1-\varsigma^{2}}{2\varsigma^{2}}$, irrespective of the value of $\delta(K-1)$. Another thing to note is that $C_{2} = \frac{3\eta^{2}L^{2}\tau^{2}C_{1}}{1-\varsigma} \leq \frac{1}{4}$ can be treated as a simple constant. In our analysis too the constant for $\kappa^{2}$ is only dependent on the Lipschitz constant (when $P<\frac{1}{6} \implies 3PL^{2}<\frac{L^{2}}{2}$) or some other real number (when $P<\frac{1}{6L^{2}+3} \implies P<\frac{1}{6L^{2}} \implies 3PL^{2}<\frac{1}{2}$). So, these do not contribute to the error bound beyond a constant factor, and hence we do not consider them when calculating when our error bound is tighter than \citet{Wang2021}.

\subsubsection{Case 3: $\delta>1$}\label{theorem2:res-sec3}
We simply get,
\begin{equation*}
    \begin{gathered}
    \mathbb{E}\left[ \frac{1}{K}\sum_{k=1}^{K}||\nabla F(u_{k})||^{2} \right] \leq 4\left[ \frac{2[ F(u_{1}) - F_{inf}]}{\eta_{eff}K} + \frac{\eta_{eff}L\sigma^{2}}{cm} + \frac{\delta L^{2}}{cm}\left|\left|X_{1}\right|\right|^{2}_{F}
    +  \eta^{2}\sigma^{2}L^{2}\delta (K-1) + 3PL^{2}\kappa^{2} \right]\\
    \implies \mathbb{E}\left[ \frac{1}{K}\sum_{k=1}^{K}||\nabla F(u_{k})||^{2} \right] \leq  \epsilon_{IID} +  12PL^{2}\kappa^{2}
    \end{gathered}
\end{equation*}
No optimization is possible.

\subsection{Lower Bound on the Fraction of Client's Selected $c$}

We know that,
\begin{equation*}
    \begin{gathered}
    ||\nabla F(X_{k}) ||^{2}_{F} 
    \leq 3||X_{k}(I-J)||^{2} + 3cm\kappa^{2} + 3cm||\nabla F(u_{k})||^{2}\\
    \implies ||\nabla F(X_{k}) ||^{2}_{F} \leq 3||X_{k}(I-J)||^{2} + 3m\kappa^{2} + 3m||\nabla F(u_{k})||^{2}\\
    \end{gathered}
\end{equation*}

If we use the looser upper bound we get,
\begin{equation*}
    \begin{gathered}
    \frac{L^{2}}{Kcm} \sum_{k=1}^{K}\mathbb{E} ||X_{k}(I-J)||^{2}_{F}  \leq  \frac{\delta L^{2}}{cm\left( 1-3P\right)}\left|\left|X_{1}\right|\right|^{2}_{F}
    +  \frac{\eta^{2}\sigma^{2}L^{2}\delta (K-1)}{c\left( 1-3P\right)} \\
     + \frac{3PL^{2}\kappa^{2}}{c\left( 1-3P\right)} + \frac{3PL^{2}}{\left( 1-3P\right)}\frac{1}{K}\sum_{k=1}^{K}\mathbb{E}||\nabla F(u_{k})||^{2}  \\
    \end{gathered}
\end{equation*}

Going all the way back, we now substitute in Equation \ref{eqn9}.
\begin{equation*}
    \begin{gathered}
      \left[ 1 - \frac{3PL^{2}}{c\left( 1-3P\right)} \right] \mathbb{E}\left[ \frac{1}{K}\sum_{k=1}^{K}||\nabla F(u_{k})||^{2} \right] \leq \frac{2[ F(u_{1}) - F_{inf}]}{\eta_{eff}K} + \frac{\eta_{eff}L\sigma^{2}}{cm} + \frac{\delta L^{2}}{cm\left( 1-3P\right)}\left|\left|X_{1}\right|\right|^{2}_{F}
    \\+  \frac{\eta^{2}\sigma^{2}L^{2}\delta (K-1)}{\left( 1-3P\right)} + \frac{3PL^{2}\kappa^{2}}{c\left( 1-3P\right)} \\
    \end{gathered}
\end{equation*}

We can simplify the constant term as
\begin{equation*}
    \begin{gathered}
      \left[ 1 - \frac{3PL^{2}}{c\left( 1-3P\right)} \right] = \left[ \frac{c(1-3P) -3PL^{2}}{c\left( 1-3P\right)} \right]  = \left[ \frac{1-3P\left( 1 + \frac{L^{2}}{c} \right)}{ 1-3P} \right] \\
    \end{gathered}
\end{equation*}

If the following condition is satisfied, we get a lower bound on the fraction of client's selected.
\begin{equation*}
    \begin{gathered}
     \frac{ 1-3P}{1-3P\left( 1 + \frac{L^{2}}{c} \right)} \leq 2 \\
     \implies 1-3P\left( 1 + \frac{L^{2}}{c} \right) \geq \frac{1}{2} (1-3P)\\
     \implies (1-3P) - \frac{3PL^{2}}{c} \geq \frac{1}{2} (1-3P)\\
     \implies \frac{1}{2}(1-3P) \geq \frac{3PL^{2}}{c}\\
     \implies (1-3P) \geq \frac{6PL^{2}}{c}\\
     \implies c \geq \frac{6PL^{2}}{1-3P}\\
     \implies c \geq 6PL^{2}\\
     \textcolor{magenta}{\left(\because \frac{1}{1-3P} \geq 1\right)}
    \end{gathered}
\end{equation*}
This too leads to the same convergence error bound as was calculated previously. Another thing to note is that when $\delta=0$, then, similar to the IID case, we get no bounds on the value of c.



\section{Writing requisite condition for $K$ with Auxillary Variables} \label{supp:k-cond}

Based on Theorem 1 in the IID Case and and Theorem 2 in the non-IID case, we get,
\begin{equation*}
    \begin{gathered}
      P = \eta^{2}\delta\tau\left[ 2(K - \tau) \left[ 2 +\frac{K}{2\tau}\right] + (\tau-1)(1 + K/\tau)\right] \leq \min \left(\frac{1}{6} , \frac{1}{6L^{2}+3},  \frac{c}{6L^{2}} \right)\\
      \implies P = \eta^{2}\delta\tau\left[ 2(K - \tau) \left[ 2 +\frac{K}{2\tau}\right] + (\tau-1)(1 + K/\tau)\right] \leq \min \left(\frac{1}{6} , \frac{1}{6L^{2}+3},  \frac{1}{6L^{2}/c} \right)
    \end{gathered}
\end{equation*}

Let us generalize this as:
\begin{equation*}
    \begin{gathered}
      P = \eta^{2}\delta\tau\left[ 2(K - \tau) \left[ 2 +\frac{K}{2\tau}\right] + (\tau-1)(1 + K/\tau)\right] \leq \frac{1}{B}\\
      \implies \eta^{2}\delta\tau\left[ 2 K  \left[ 2 + \frac{K}{2\tau}\right] - 2\tau \left[ 2 + \frac{K}{2\tau}\right] + \tau + K - 1 - \frac{K}{\tau} \right] \leq \frac{1}{B}\\
      \implies \eta^{2}\delta\tau\left[ 4 K   + \frac{K^{2}}{\tau} - 2\tau \left[ \cancel{2} + \frac{K}{2\tau} - \cancel{\frac{1}{2}}\right]  + K - 1 - \frac{K}{\tau} \right] \leq \frac{1}{B}\\
      \implies \eta^{2}\delta\tau\left[ 4 K   + \frac{K^{2}}{\tau} - \frac{3}{2}\tau - K  + K - 1 - \frac{K}{\tau}\right] \leq \frac{1}{B}\\
      \implies \eta^{2}\delta\tau\left[ 4 K   + \frac{K^{2}}{\tau} - \frac{3}{2}\tau - 1 - \frac{K}{\tau}\right] \leq \frac{1}{B}\\
      \implies \eta^{2}\delta\tau\left[ K\left( 4   + \frac{K}{\tau} - \frac{1}{\tau}\right) - \frac{3}{2}\tau - 1 \right] \leq \frac{1}{B}\\
      \implies B\eta^{2}\delta\tau\left[ K\left( 4   + \frac{K}{\tau} - \frac{1}{\tau}\right) - \frac{3}{2}\tau - 1 \right] \leq 1\\
      \implies B\eta^{2}\delta\tau K\left( 4   + \frac{K}{\tau} - \frac{1}{\tau}\right) - B\eta^{2}\delta\tau\left[ \frac{3}{2}\tau + 1\right]  \leq 1\\
      \implies B\eta^{2}\delta\tau K\left( 4   + \frac{K}{\tau} - \frac{1}{\tau}\right)  \leq B\eta^{2}\delta\tau\left[\frac{3}{2}\tau + 1\right] + 1\\
    \end{gathered}
\end{equation*}
Putting $\eta = \frac{m+v}{Lcm}\sqrt{\frac{cm}{K}}$, we get,
\begin{equation*}
    \begin{gathered}
      B\frac{(m+v)^{2}}{L^{2}cmK}\delta\tau K\left( 4   + \frac{K}{\tau} - \frac{1}{\tau}\right)  \leq B\frac{(m+v)^{2}}{L^{2}cmK}\delta\tau\left[\frac{3}{2}\tau + \right] + 1\\
      \implies B\frac{(m+v)^{2}}{\cancel{L^{2}cmK}}\delta\tau K\left( 4   + \frac{K}{\tau} - \frac{1}{\tau}\right)  \leq B\frac{(m+v)^{2}\delta\tau\left[\frac{3}{2}\tau + 1\right] + L^{2}cmK}{\cancel{L^{2}cmK}}\\
      \implies B(m+v)^{2}\delta\tau K\left( 4   + \frac{K}{\tau} - \frac{1}{\tau}\right)  \leq B(m+v)^{2}\delta\tau\left[\frac{3}{2}\tau + 1\right] + L^{2}cmK\\
    \end{gathered}
\end{equation*}

\subsection{$0 \leq \delta \leq 1$}
Assuming $1 \geq \delta \geq 0$, we can say $B(m+v)^{2}\delta\tau\left[\frac{3}{2}\tau + 1\right] \leq B(m+v)^{2}\tau\left[\frac{3}{2}\tau + 1\right]$. Thus,
\begin{equation*}
    \begin{gathered}
    B(m+v)^{2}\delta\tau K\left( 4   + \frac{K}{\tau} - \frac{1}{\tau}\right)  \leq B(m+v)^{2}\tau\left[\frac{3}{2}\tau + 1\right] + L^{2}cmK\\
      \cancel{B(m+v)^{2}\delta}\tau K\left( 4   + \frac{K}{\tau} - \frac{1}{\tau} - \frac{L^{2}cm}{B(m+v)^{2}\delta\tau} \right)  \leq \cancel{B(m+v)^{2}}\tau\left[\frac{3}{2}\tau + 1\right]\\
      \implies K\delta\tau\left( 4   + \frac{K}{\tau} - \frac{1}{\tau} - \frac{L^{2}cm}{B(m+v)^{2}\delta\tau} \right)  \leq \tau\left[\frac{3}{2}\tau + 1\right]\\
    \end{gathered}
\end{equation*}
Now, we know that $\tau<K$, and thus we can say,
\begin{equation*}
    \begin{gathered}
      \cancel{K}\delta\tau\left( 4   + \frac{K}{\tau} - \frac{1}{\tau} - \frac{L^{2}cm}{B(m+v)^{2}\delta\tau}\right)  \leq \cancel{K}\left[\frac{3}{2}K + 1\right]\\
       \implies 4\delta\tau   + K\delta - \delta - \frac{L^{2}cm}{B(m+v)^{2}} \leq \frac{3}{2} K + 1\\
       \implies 4\delta\tau - (1+\delta) \leq  \frac{K(3-2\delta)}{2} + \frac{L^{2}cm}{B(m+v)^{2}}\\
        \implies 4\delta\tau - \delta \leq  \frac{K(3-2\delta)}{2} + \frac{L^{2}cm}{B(m+v)^{2}}\\
       \implies 4\delta\tau-\delta   \leq K\left(\frac{3-2\delta}{2} + \frac{L^{2}cm}{B(m+v)^{2}} \right)\\
       \implies K\geq \frac{\delta(4\tau-1)}{\frac{3-2\delta}{2} + \frac{L^{2}cm}{B(m+v)^{2}}} \\
    \end{gathered}
\end{equation*}
If we assume $\frac{L^{2}cm}{B(m+v)^{2}} \leq \frac{1}{2} \implies \frac{3}{2} - \delta +\frac{L^{2}cm}{B(m+v)^{2}} \leq \frac{1}{2} + \frac{3}{2} - \delta  \implies \frac{3}{2} - \delta +\frac{L^{2}cm}{B(m+v)^{2}} \leq 2 - \delta \leq 2 \implies \frac{\delta(4\tau-1)}{\frac{3}{2} - \delta + \frac{L^{2}cm}{B(m+v)^{2}\delta}} \geq \delta(2\tau-\frac{1}{2})$, then we get,
\begin{equation*}
    \begin{gathered}
      K \geq \delta(2\tau-\frac{1}{2}) \\
       \implies K \geq \mathcal{O}(\delta\tau)\\
    \end{gathered}
\end{equation*}
This holds if we assume $\frac{L^{2}cm}{B(m+v)^{2}} \leq \frac{1}{2}$, which is not an unreasonable assumption to make, as it implies $\frac{B(m+v)^{2}}{L^{2}cm} \geq 2$. This can be easily achieved since $B>1$ and, thus, the fraction is $\mathcal{O}(m)$ which can easily exceed the value 2. Also, the criterion for $K$ is also easily achieved, especially considering $K>\tau$ and $\delta < 1$.

\subsection{$\delta \geq 1$}
Assuming $\delta \geq 1$, we can say,
\begin{equation*}
    \begin{gathered}
    B(m+v)^{2}\delta\tau K\left( 4   + \frac{K}{\tau} - \frac{1}{\tau}\right)  \leq B(m+v)^{2}\tau\left[\frac{3}{2}\tau + 1\right] + L^{2}cmK\\
      \cancel{B(m+v)^{2}\delta}\tau K\left( 4   + \frac{K}{\tau} - \frac{1}{\tau} - \frac{L^{2}cm}{B(m+v)^{2}\delta\tau} \right)  \leq \cancel{B(m+v)^{2}\delta}\tau\left[\frac{3}{2}\tau + 1\right]\\
      \implies K\tau\left( 4   + \frac{K}{\tau} - \frac{1}{\tau} - \frac{L^{2}cm}{B(m+v)^{2}\delta\tau} \right)  \leq \tau\left[\frac{3}{2}\tau + 1\right]\\
    \end{gathered}
\end{equation*}
Now, we know that $\tau<K$, and thus we can say,
\begin{equation*}
    \begin{gathered}
      \cancel{K}\tau\left( 4   + \frac{K}{\tau} - \frac{1}{\tau} - \frac{L^{2}cm}{B(m+v)^{2}\delta\tau}\right)  \leq \cancel{K}\left[\frac{3}{2}K + 1\right]\\
       \implies 4\tau   + K - 1 - \frac{L^{2}cm}{B(m+v)^{2}\delta} \leq \frac{3}{2} K + 1\\
       \implies 4\tau - 2 \leq  \frac{K}{2} + \frac{L^{2}cm}{B(m+v)^{2}\delta}\\
       \implies 4\tau-2   \leq K\left(\frac{1}{2} + \frac{L^{2}cm}{B(m+v)^{2}\delta} \right)\\
       \implies K\geq \frac{2(2\tau-1)}{\frac{1}{2} + \frac{L^{2}cm}{B(m+v)^{2}\delta}} \\
    \end{gathered}
\end{equation*}
If we assume $\frac{L^{2}cm}{B(m+v)^{2}\delta} \leq \frac{1}{2} \implies \frac{2(2\tau-1)}{\frac{1}{2} + \frac{L^{2}cm}{B(m+v)^{2}\delta}} \geq 2(2\tau-1)$, then we get,
\begin{equation*}
    \begin{gathered}
      K \geq 2(2\tau-1) \\
       \implies K \geq \mathcal{O}(\tau)\\
    \end{gathered}
\end{equation*}
This holds if we assume $\frac{L^{2}cm}{B(m+v)^{2}\delta} \leq \frac{1}{2}$, which is not an unreasonable assumption to make, as it implies $\frac{B(m+v)^{2}\delta}{L^{2}cm} \geq 2$. This can be easily achieved since $\delta > 1$, and $\mathcal{O}(1)$ at best, implying the fraction is at least $\mathcal{O}(m)$ and at most $\mathcal{O}(m^{2})$, which can easily exceed the value 2. Thus, irrespective of value of $\delta$, we can say $K \geq \mathcal{O}(\tau)$.


\section{Corollary 1: Proof}\label{Corollary1}

We have the following condition:
\begin{equation*}
    \begin{gathered}
        \eta_{eff}L \leq 1
    \end{gathered}
\end{equation*}

Putting $\eta = \frac{m+v}{Lcm}\sqrt{\frac{cm}{K^{2}}} \implies \eta_{eff} = \frac{1}{L}\sqrt{\frac{cm}{K^{2}}}$, we get,
\begin{equation*}
    \begin{gathered}
         \sqrt{\frac{cm}{K^{2}}} \leq  1\\
    \end{gathered}
\end{equation*}
Then,
\begin{equation*}
    \begin{gathered}
          \frac{cm}{K^{2}} \leq 1\\
          \implies K \geq \sqrt{cm}
    \end{gathered}
\end{equation*}

Also, from $P$'s conditions (see previous), we get
\begin{equation*}
    \begin{gathered}
      K \geq \mathcal{O}(\tau)
    \end{gathered}
\end{equation*}

Putting the value of $\eta$ in the convergence error, we get,
\begin{equation*}
    \begin{gathered}
        \mathbb{E}\left[ \frac{1}{K}\sum_{k=1}^{K}||\nabla F(u_{k})||^{2} \right] \leq 4\left[\frac{2[ F(u_{1}) - F_{inf}]}{\eta_{eff}K} + \frac{\eta_{eff}L\sigma^{2}}{cm} + \frac{\delta L^{2}}{cm}\left|\left|X_{1}\right|\right|^{2}_{F} 
        +  \eta^{2}\sigma^{2}L^{2}\delta(K-1)\right]\\
        \implies \mathbb{E}\left[ \frac{1}{K}\sum_{k=1}^{K}||\nabla F(u_{k})||^{2} \right] \leq 4\left[
        \begin{aligned}
            \frac{2[ F(u_{1}) - F_{inf}]}{\eta_{eff}K} + \frac{\eta_{eff}L\sigma^{2}}{cm} + \frac{\delta L^{2}}{cm}\left|\left|X_{1}\right|\right|^{2}_{F} 
            +  \eta^{2}_{eff}L^{2}\sigma^{2}\delta\left[\frac{1}{c^{2}}\left( 1+ \frac{v}{m} \right)^{2}(K-1)\right] 
        \end{aligned}
        \right]\\
        \implies \mathbb{E}\left[ \frac{1}{K}\sum_{k=1}^{K}||\nabla F(u_{k})||^{2} \right] \leq 4\left[
        \begin{aligned}
            \frac{2L[ F(u_{1}) - F_{inf}]}{\sqrt{cm}} + \frac{\sigma^{2}}{\sqrt{cmK^{2}}}+ \frac{\delta L^{2}}{cm}\left|\left|X_{1}\right|\right|^{2}_{F} 
            +  \frac{cm}{K^{2}}\sigma^{2}\delta\left[\frac{1}{c^{2}}\left( 1+ \frac{v}{m} \right)^{2} (K-1)\right] 
        \end{aligned}
        \right]\\
        \implies \mathbb{E}\left[ \frac{1}{K}\sum_{k=1}^{K}||\nabla F(u_{k})||^{2} \right] \leq 4\left[
        \begin{aligned}
            \frac{2L[ F(u_{1}) - F_{inf}]}{\sqrt{cm}} + \frac{\sigma^{2}}{\sqrt{cmK^{2}}}+ \frac{\delta L^{2}}{cm}\left|\left|X_{1}\right|\right|^{2}_{F} 
            +  \frac{m}{K^{2}c}\sigma^{2}\delta\left[\left( 1+ \frac{v}{m} \right)^{2} 
            (K-1)\right] 
        \end{aligned}
        \right]
    \end{gathered}
\end{equation*}
Now, for the last term,
\begin{equation*}
    \begin{gathered}
         \left( 1+ \frac{v}{m} \right)^{2} (K - 1)\\\
         = \left( 1+ \left(\frac{v}{m}\right)^{2} + 2\frac{v}{m} \right) (K - 1)\\
         = \left( 1+ \left(\frac{v}{m}\right)^{2} + 2\frac{v}{m} \right) \mathcal{O}\left( K \right)\\
         =  \mathcal{O}\left( K \right)+ \left(\frac{v}{m}\right)^{2}\mathcal{O}\left( K \right) + 2\frac{v}{m}  \mathcal{O}\left(K \right)\\
         =  \mathcal{O}\left( K \right)+ \mathcal{O}\left( \frac{v^{2}K}{m^{2}} \right) + \mathcal{O}\left(\frac{vK}{m} \right)\\
    \end{gathered}
\end{equation*}
Including the factors before,
\begin{equation*}
    \begin{gathered}
         \frac{m}{K^{2}c}\sigma^{2}\delta\mathcal{O}\left( K \right)+ \frac{m}{K^{2}c}\sigma^{2}\delta\mathcal{O}\left( \frac{v^{2}K}{m^{2}} \right) + \frac{m}{K^{2}c}\sigma^{2}\delta\mathcal{O}\left(\frac{vK}{m} \right)\\
         = \mathcal{O}\left( \frac{\delta m}{Kc} \right)+ \mathcal{O}\left( \frac{v^{2}m\delta}{m^{2}cK} \right) + \mathcal{O}\left(\frac{v\delta}{cK} \right)\\
         = \mathcal{O}\left( \frac{\delta m}{cK} \right)
    \end{gathered}
\end{equation*}
Thus, we get the following final complexity,
\begin{equation*}
    \begin{gathered}
        \mathbb{E}\left[ \frac{1}{K}\sum_{k=1}^{K}||\nabla F(u_{k})||^{2} \right] \leq \left[\mathcal{O}\left( \frac{1}{\sqrt{cm}} \right) + \frac{\delta L^{2}}{cm}\left|\left|X_{1}\right|\right|^{2}_{F} 
        +  \mathcal{O}\left( \frac{m\delta}{cK} \right) \right]
    \end{gathered}
\end{equation*}
Assuming that we initialize all initial models as zero, then we get $||X_{1}||^{2}_{F} = 0$
\begin{equation*}
    \begin{gathered}
    \mathbb{E}\left[ \frac{1}{K}\sum_{k=1}^{K}||\nabla F(u_{k})||^{2} \right] \leq \mathcal{O}\left( \frac{1}{\sqrt{cm}} \right)
    +  \mathcal{O}\left( \frac{m\delta}{cK} \right)
    \end{gathered}
\end{equation*}

To get a convergence rate better than $\mathcal{O}\left( \frac{1}{\sqrt{cm}} \right) $, we need $\frac{1}{\sqrt{cm}} > \frac{m\delta}{cK} \implies K > \mathcal{O}\left(\delta m\sqrt{\frac{m}{c}}\right) $. We are unable to achieve a convergence rate of $\mathcal{O}\left( \frac{1}{\sqrt{cmK}} \right) $ under any situation, however. Thus, eventually the criteria becomes $K > \mathcal{O}\left(\max(\delta m\sqrt{\frac{m}{c}}, \tau)\right) $.

\section{PSASGD: Proof}

We can infer from Lemma 8 that $\delta = 0$ for this scenario. We can also rewrite Lemma 8 to prove this in another way. We know the following inequality holds:
\begin{equation*}
    \begin{gathered}
    ||A||_{F} \leq \sqrt{m}||A||_{op}
    \end{gathered}
\end{equation*}
Thus, we have, 
\begin{equation}
    ||J^{k}-J||_{F} \leq \sqrt{m+v}||J^{k}-J||_{op}
\end{equation}
Now, following an analysis similar to Wang and Joshi, we find that 
\begin{equation}
    ||J^{k}-J||_{op} = \varsigma^{k}
\end{equation}
But we know that for matrix $J$, $\varsigma = 0$. Also, we know that frobenius norm is greater than zero. Thus, we can conclude that $||J^{k}-J||_{F} = \delta$ where $\delta = 0$. In reality, this would imply simply substituting the value of $\delta = 0$ at all points. This would lead to the scenario that:

\begin{equation*}
    \begin{gathered}
      \mathbb{E}\left[ \frac{1}{K}\sum_{k=1}^{K}||\nabla F(u_{k})||^{2} \right] \leq 4\left[\frac{2[ F(u_{1}) - F_{inf}]}{\eta_{eff} K} + \frac{\eta_{eff} L\sigma^{2}}{cm}\right]
    \end{gathered}
\end{equation*}
The condition for learning rate then becomes,
\begin{equation*}
    \begin{gathered}
         \eta_{eff} L \leq 1
    \end{gathered}
\end{equation*}
If we set the learning rate as $\eta = \frac{1}{Lc}\sqrt{\frac{cm}{K}} \implies \eta_{eff} = \frac{1}{L}\sqrt{\frac{cm}{K}} $, then we get,
\begin{equation*}
    \begin{gathered}
      \mathbb{E}\left[ \frac{1}{K}\sum_{k=1}^{K}||\nabla F(u_{k})||^{2} \right] \leq 4\left[\frac{2L[ F(u_{1}) - F_{inf}]}{\sqrt{cmK}} + \frac{\sigma^{2}}{\sqrt{cmK}}\right]
    = \mathcal{O}\left( \frac{1}{\sqrt{cmK}} \right) 
    \end{gathered}
\end{equation*}
Also, we have the condition that
\begin{equation*}
    \begin{gathered}
         \sqrt{\frac{cm}{K}} \leq 1
         \implies \frac{cm}{K}\leq 1 
         \implies K \geq cm
    \end{gathered}
\end{equation*}

\section{IID Scenario with $\eta = \frac{1}{Lc}\sqrt{\frac{cm}{K}}$: Proof for D-PSGD with Dynamic Matrices (Replace $\tau = 1$)}

We have, assuming all models are initialized as zero,
\begin{equation*}
    \begin{gathered}
      \mathbb{E}\left[ \frac{1}{K}\sum_{k=1}^{K}||\nabla F(u_{k})||^{2} \right] \leq 4\left[\frac{2[ F(u_{1}) - F_{inf}]}{\eta K} + \frac{\eta L\sigma^{2}}{cm} 
    +   \eta^{2}\sigma^{2}L^{2}\delta (K-1)\right]\\
    \end{gathered}
\end{equation*}
The condition for learning rate then becomes,
\begin{equation*}
    \begin{gathered}
         \eta_{eff} L \leq 1
    \end{gathered}
\end{equation*}
In this scenario, following the analysis of \citet{Wang2021}, we have $\delta \leq 1$. If we have $\tau = 1 > \frac{1-\varsigma^{2}}{2\varsigma^{2}} \implies \varsigma > \frac{1}{\sqrt{3}}$ and set the learning rate as $\eta = \frac{1}{Lc}\sqrt{\frac{cm}{K}}$, then we get,
\begin{equation*}
    \begin{gathered}
      \mathbb{E}\left[ \frac{1}{K}\sum_{k=1}^{K}||\nabla F(u_{k})||^{2} \right] \leq 4\left[\frac{2L[ F(u_{1}) - F_{inf}]}{\sqrt{cmK}} + \frac{\sigma^{2}}{\sqrt{cmK}} 
    +  \frac{\sigma^{2}\delta (K-1) cm}{Kc^{2}}\right]\\
    \implies \mathbb{E}\left[ \frac{1}{K}\sum_{k=1}^{K}||\nabla F(u_{k})||^{2} \right] \leq 4\left[\frac{2L[ F(u_{1}) - F_{inf}]}{\sqrt{cmK}} + \frac{\sigma^{2}}{\sqrt{cmK}} 
    +  \frac{\sigma^{2}\left(\frac{1+\varsigma^{2}}{1-\varsigma^{2}} - 1\right)m}{Kc}\right]\\
    \implies \mathbb{E}\left[ \frac{1}{K}\sum_{k=1}^{K}||\nabla F(u_{k})||^{2} \right] \leq \mathcal{O}\left( \frac{1}{\sqrt{cmK}} \right) + \mathcal{O}\left( \frac{m}{Kc} \right)
    \end{gathered}
\end{equation*}
This matches the convergence bounds provided by \citet{Wang2021}. Furthermore, we get a convergence rate matching $\mathcal{O}\left( \frac{1}{\sqrt{cmK}} \right)$ when $\frac{1}{\sqrt{cmK}} > \frac{m}{Kc} \implies K > \frac{m^{3}}{c}$. In general, if $\tau > 1$ for a more general form of D-PSGD, this criterion becomes $K > \frac{m^{3}\tau^{2}}{c}$.\\

When applied to the learning rate, we get, 
\begin{equation*}
    \begin{gathered}
         \sqrt{\frac{cm}{K}} \leq 1\\
         \implies K  \geq  cm = \mathcal{O}(cm)
    \end{gathered}
\end{equation*}

The final upper bound we eventually get is $K > \frac{m^{3}\tau^{2}}{c}$, which matches the criterion of \citet{Wang2021}. 






\section{Experimental Details}

\subsection{Local-SGD algorithm}

The code for experiments have been provided in the supplementary materials. We describe the algorithm for centralized local-SGD that has been used for all experiments below:
\begin{algorithm}[H]
    \caption{Centralized Local-SGD}\label{alg:local-sgd-cent}
    \begin{algorithmic}
        \Require $K$ total epochs, $\tau$ communication period, $\eta$ learning rate, $S$ set of selected clients of size $cm$ where $m$ is total number of clients and $c$ is fraction of clients selected
        \Procedure{LocalSGD}{$K,\tau,\eta,S$}
            \State $x = 0$\Comment{Keeps track of how many epochs finished}
            \State $y = \max(\text{Batches of clients})$\Comment{Stores size of largest number of batches at any client}
            \State $batch\_count = 0$
            \For{$x<K$}
                \For{Client $i$ in selected set $S$}
                    \For{Batch $b_{i} \in \mathcal{D}_{i}$ of client $i$}\Comment{Ignore if $b_{i}$ doesn't exist}
                        \State Train on $b_{i}$
                    \EndFor
                \EndFor
                \State $batch\_count++$
                \State $y--$
                \If{$batch\_count\%\tau==0$}
                    \State All selected clients send their weights to central server
                    \State Aggregate the models using mixing matrix
                    \State Update global model
                    \State Communicate global model to all clients via ALLREDUCE
                \EndIf
                \If{$y==0$} \Comment{We have finished all batches $\implies$ Next epoch starts}
                    \State $x++$
                \EndIf
            \EndFor
        \EndProcedure
    \end{algorithmic}
\end{algorithm}

One thing to note here is the departure from regular federated learning literature. In regular federated learning literature, it is common to do the training as follows:
\begin{algorithmic}
    \For{rounds} 
        \For{epochs}
            \State \vdots
        \EndFor
    \EndFor
\end{algorithmic}
Here, we are able to aggregate the models after several \textbf{epochs}, which implies we run through the batches of each client several times before aggregation. One ``round'' encompasses several epochs. In our implementation of local-SGD, we deviate from this by allowing aggregation in between epochs as well - here, a ``round'' refers to one communication period, and hence one epoch may cover several rounds.\\

The algorithm for decentralized local-SGD is quite similar:
\begin{algorithm}[H]
    \caption{Decentralized Local-SGD}\label{alg:local-sgd-dec}
    \begin{algorithmic}
        \Require $K$ total epochs, $\tau$ communication period, $\eta$ learning rate, $S$ set of selected clients of size $cm$ where $m$ is total number of clients and $c$ is fraction of clients selected
        \Procedure{D-LocalSGD}{$K,\tau,\eta,S$}
            \State $x = 0$\Comment{Keeps track of how many epochs finished}
            \State $y = \max(\text{Batches of clients})$\Comment{Stores size of largest number of batches at any client}
            \State $batch\_count = 0$
            \For{$x<K$}
                \For{Client $i$ in selected set $S$}
                    \For{Batch $b_{i} \in \mathcal{D}_{i}$ of client $i$}\Comment{Ignore if $b_{i}$ doesn't exist}
                        \State Train on $b_{i}$
                    \EndFor
                \EndFor
                \State $batch\_count++$
                \State $y--$
                \If{$batch\_count\%\tau==0$}
                    \State All selected clients send their weights to each other via ALLREDUCE
                    \State Aggregate the models
                    \State Communicate aggregated model to all remaining clients
                \EndIf
                \If{$y==0$} \Comment{We have finished all batches $\implies$ Next epoch starts}
                    \State $x++$
                \EndIf
            \EndFor
        \EndProcedure
    \end{algorithmic}
\end{algorithm}

\subsection{Experimental Setup}

We train our models on TU104GL with Cuda Version 12.2 and python version 3.8.20. We have a single GPU setup, and hence have to resort to simulation of a federated learning setup. We use a VGG16 model from the torchvision package with default weights as ``VGG16\_Weights.DEFAULT'' except for the weight initialization experiments. We seed at every possible instance to ensure experiments are entirely replicable (Seed value of 42). Further, we train on a fraction of the entire dataset due to resource constraints. This fraction was fixed as 60\% of the entire dataset for all experiments. The communication period ($\tau$) refers to how many batches each client has to train on before aggregation. We now describe each experiment in detail in the following sections.

\subsubsection{Comparison with $\tau$}

For this experiment, we use the following hyperparameters:
\begin{itemize}
    \item Learning Rate: 0.0004/0.004
    \item Batch Size: 128
    \item Epochs: 20
    \item Clients: 8
    \item Clients Selected: 7
\end{itemize}
We vary the value of communication period as $\tau=24, 104, 154, 204$.

\subsubsection{Comparison with Client Selection}


For this experiment, we use the following hyperparameters:
\begin{itemize}
    \item Learning Rate: 0.0004/0.004
    \item Batch Size: 128
    \item Epochs: 20
    \item Clients: 8
    \item Communication Period: 60
\end{itemize}
We vary the number of selected clients as $cm=1,3,5,7$.

\subsubsection{Comparison with Model Initialization}

For this experiment, we use the following hyperparameters:
\begin{itemize}
    \item Learning Rate: 0.0004/0.004
    \item Batch Size: 128
    \item Epochs: 20
    \item Clients: 8
    \item Communication Period: 60
    \item Clients Selected: 5
\end{itemize}

\begin{figure}[t]
    \centering
    \begin{subfigure}[t]{.3\textwidth}
        \includegraphics[width=\textwidth]{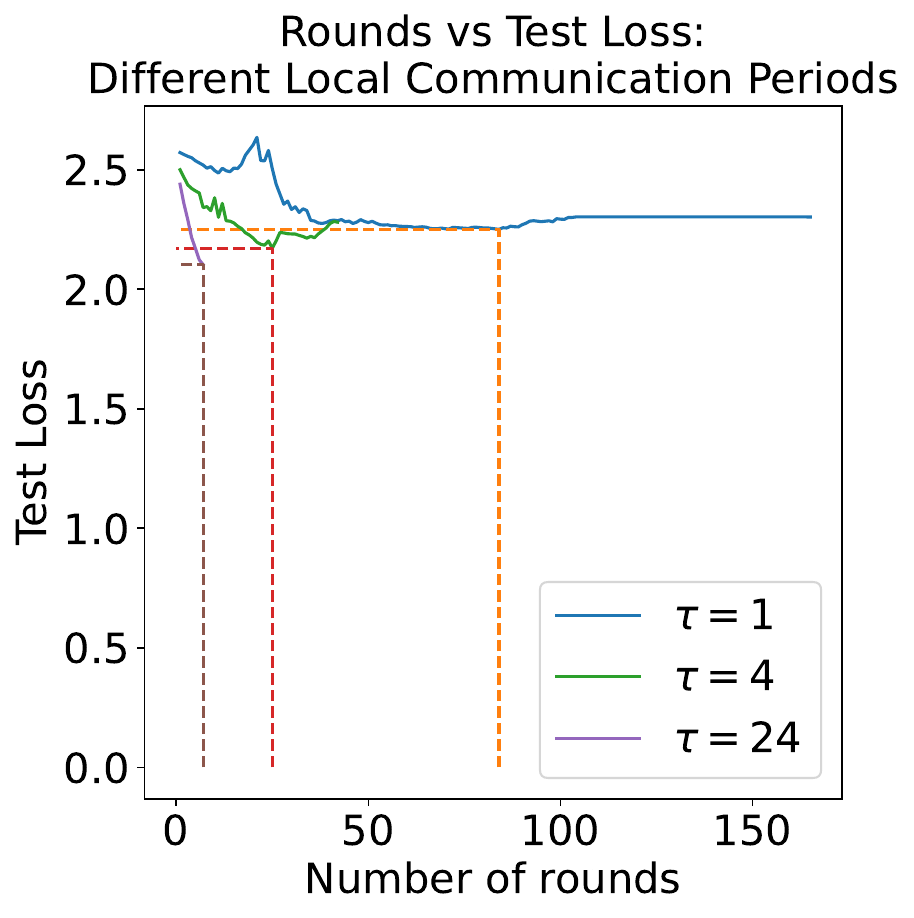}
        \caption{IID Comparison of Communication Period}\label{fig:tau_iid_fsync}
    \end{subfigure}\hfill
    \begin{subfigure}[t]{.3\textwidth}
        \includegraphics[width=\textwidth]{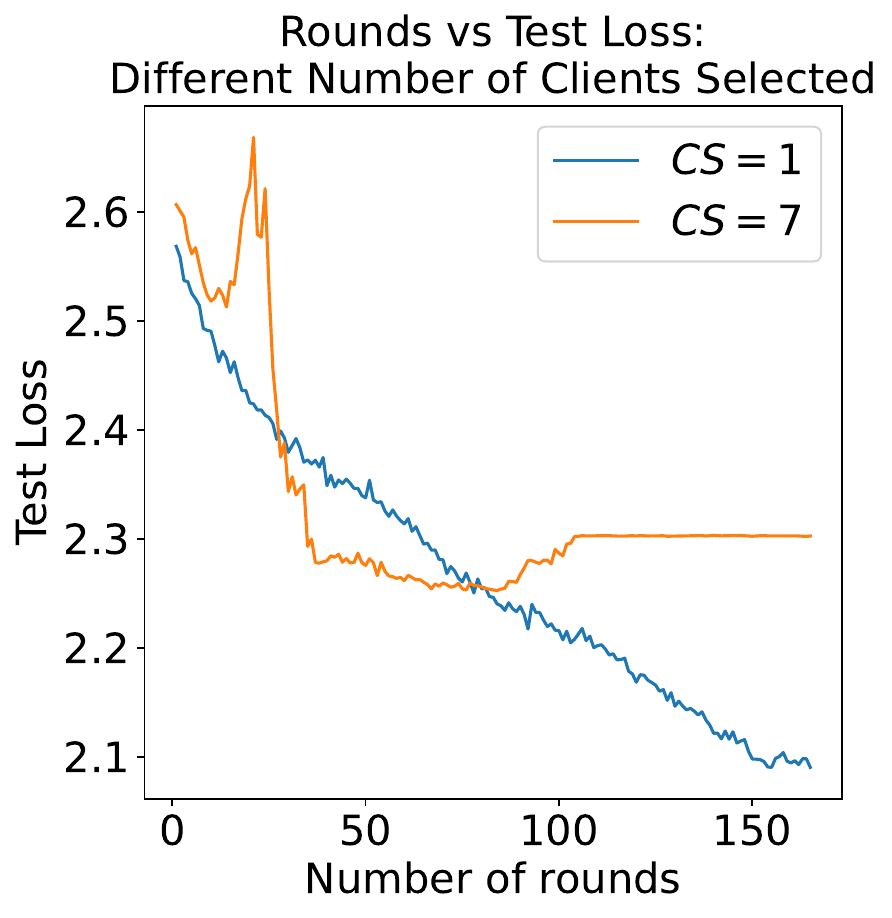}
        \caption{IID Comparison of Clients Selected}\label{fig:cs_iid_fsync}
    \end{subfigure}\hfill
    \begin{subfigure}[t]{.3\textwidth}
        \includegraphics[width=1.05\textwidth]{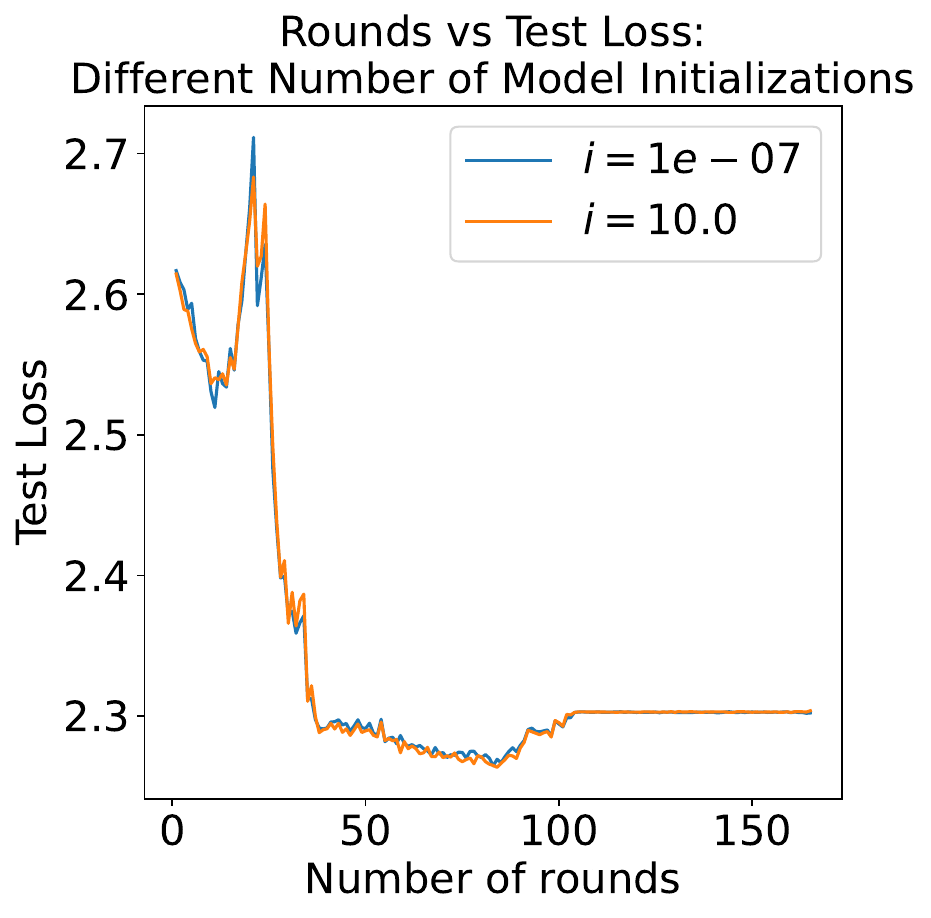}
        \caption{IID Comparison of Different Weight Initializations}\label{fig:init_iid_fsync}
    \end{subfigure}\hfill
    \vspace{7pt}
    \begin{subfigure}[t]{.3\textwidth}
        \includegraphics[width=\textwidth]{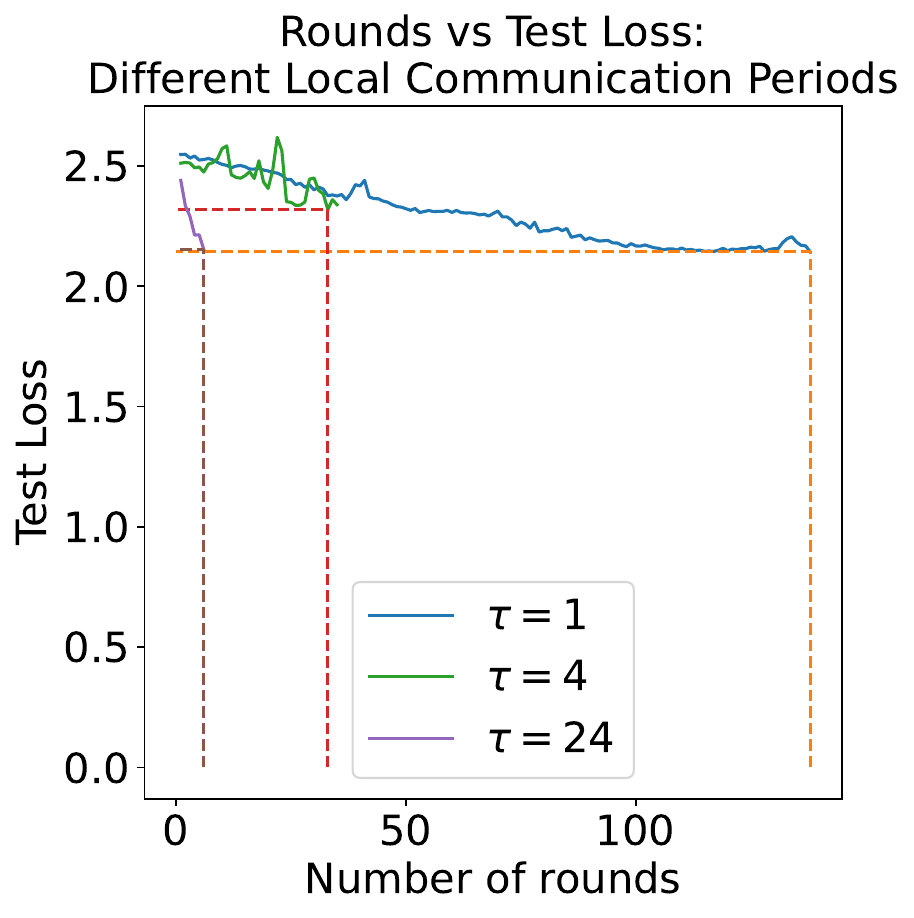}
        \caption{Non-IID Comparison of Communication Period}\label{fig:tau_niid_dir_fsync}
    \end{subfigure}\hfill
    \begin{subfigure}[t]{.3\textwidth}
        \includegraphics[width=\textwidth]{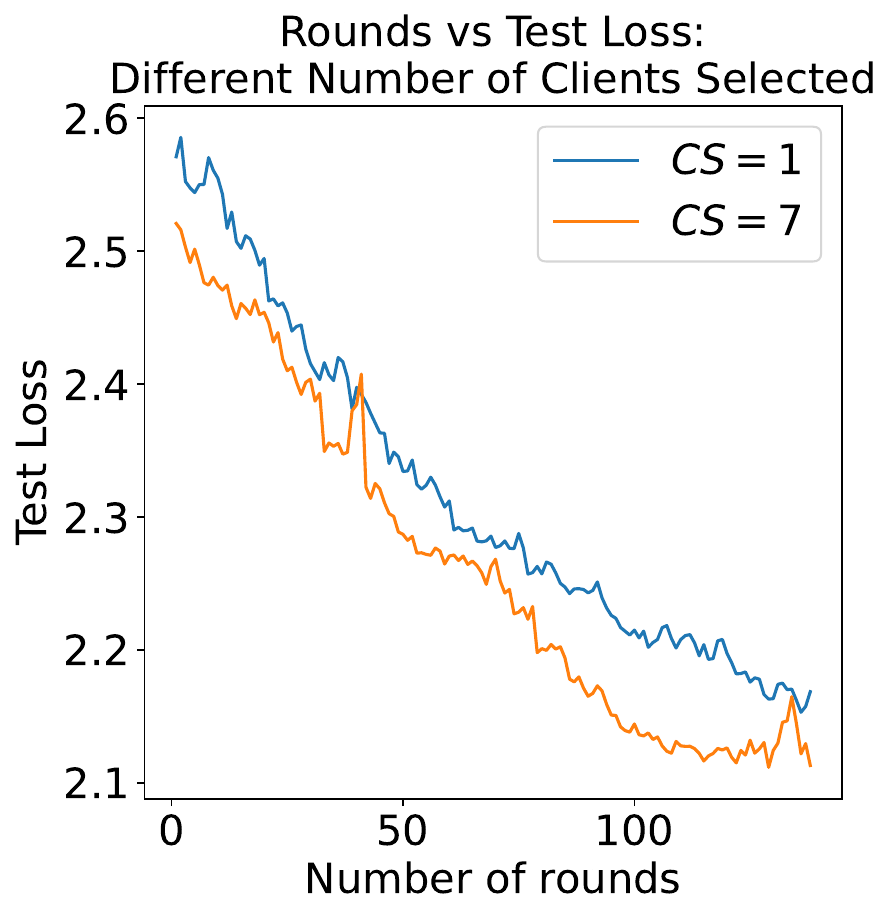}
        \caption{Non-IID Comparison of Clients Selected}\label{fig:cs_niid_dir_fsync}
    \end{subfigure}\hfill
    \begin{subfigure}[t]{.3\textwidth}
        \includegraphics[width=1.05\textwidth]{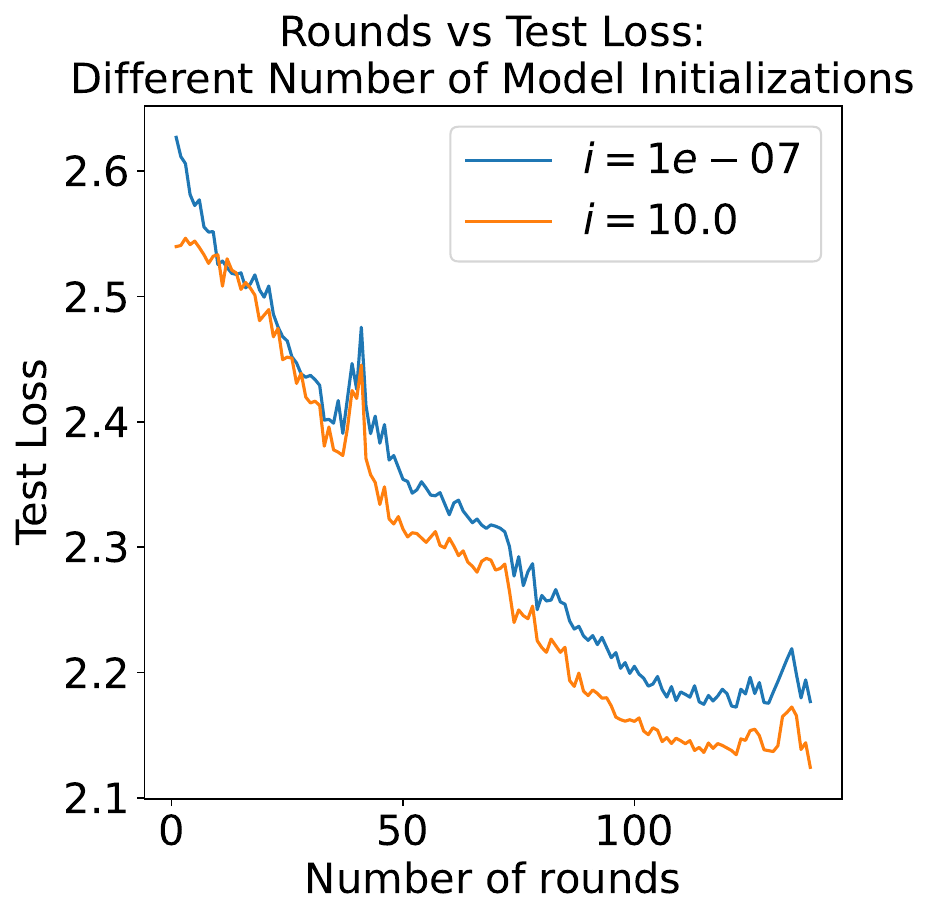}
        \caption{Non-IID Comparison of Different Weight Initializations}\label{fig:init_niid_dir_fsync}
    \end{subfigure}\hfill\vspace{12pt}
    \caption{Model convergence under different situations in both IID and non-IID scenario. The non-IID scenario is using a dirichlet distribution with $\alpha=0.6$. 7 clients were generally selected out of 8 (except Figure \ref{fig:cs_iid_fsync}, \ref{fig:cs_niid_dir_fsync}), and the communication period $\tau = 1$ (except Figure \ref{fig:tau_iid_fsync}, \ref{fig:tau_niid_dir_fsync}).} \label{fig:fsync}\vspace{20pt}
\end{figure}
We analyzed the default weights of VGG16 and found the following statistics:
\begin{itemize}
    \item Minimum parameter size = 8.573436231784637e-12
    \item Maximum parameter size = 1.4974256753921509
    \item Number of parameters below 1 = 134301501, Number of parameters above 1 = 13
    \item Number of parameters below 0.1 = 134281906, Number of parameters above 0.1 = 19608
    \item Number of parameters below 0.01 = 115544139, Number of parameters above 0.01 = 18757371
    \item Number of parameters below 0.001 = 19245245, Number of parameters above 0.001 = 115056268
    \item Number of parameters below 0.0001 = 1939822, Number of parameters above 0.0001 = 132361692
    \item Number of parameters below 1e-05 = 193665, param above 1e-05 = 134107849
    \item Total number of parameters = 134301514
\end{itemize}
We found that large deviation from this distribution resulted in poor convergence. To simulate the size of weights, we multiply each parameter of VGG16 with some initial value $i$. We check for convergence with value of $i = 0.7, 0.9, 1, 1.1, 1.3$.


\begin{figure}[t]
    \centering
    \begin{subfigure}[t]{.3\textwidth}
        \includegraphics[width=\textwidth]{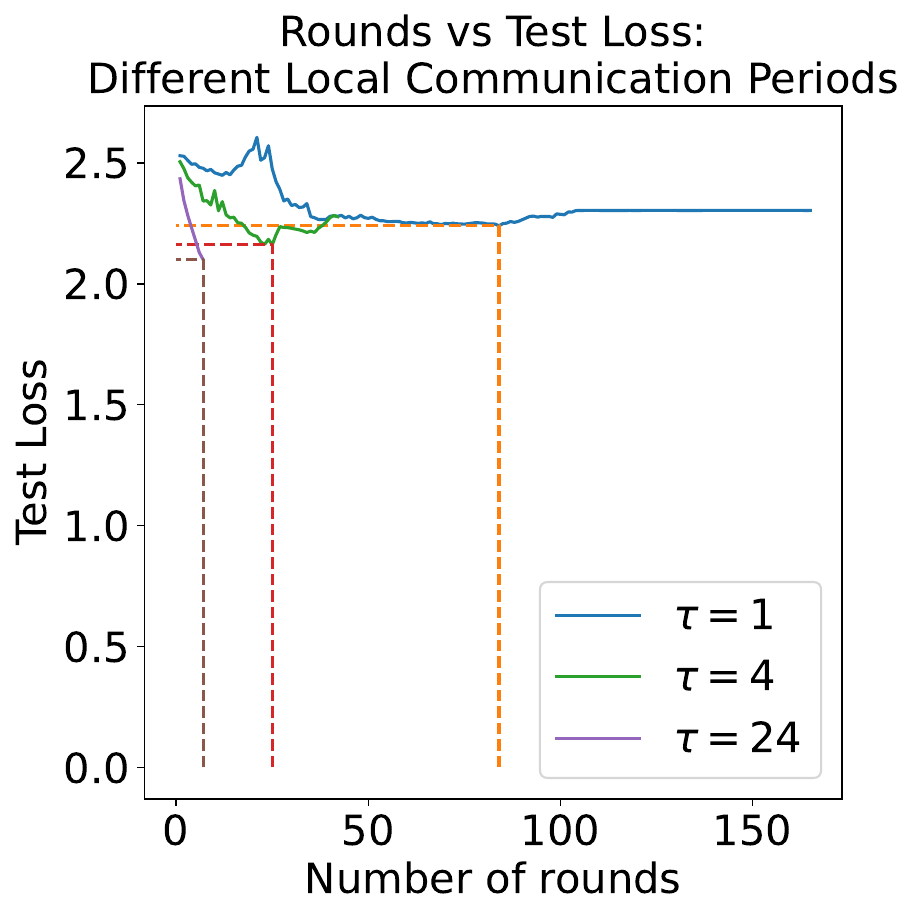}
        \caption{IID Comparison of Communication Period}\label{fig:tau_iid_dpsgd}
    \end{subfigure}\hfill
    \begin{subfigure}[t]{.3\textwidth}
        \includegraphics[width=\textwidth]{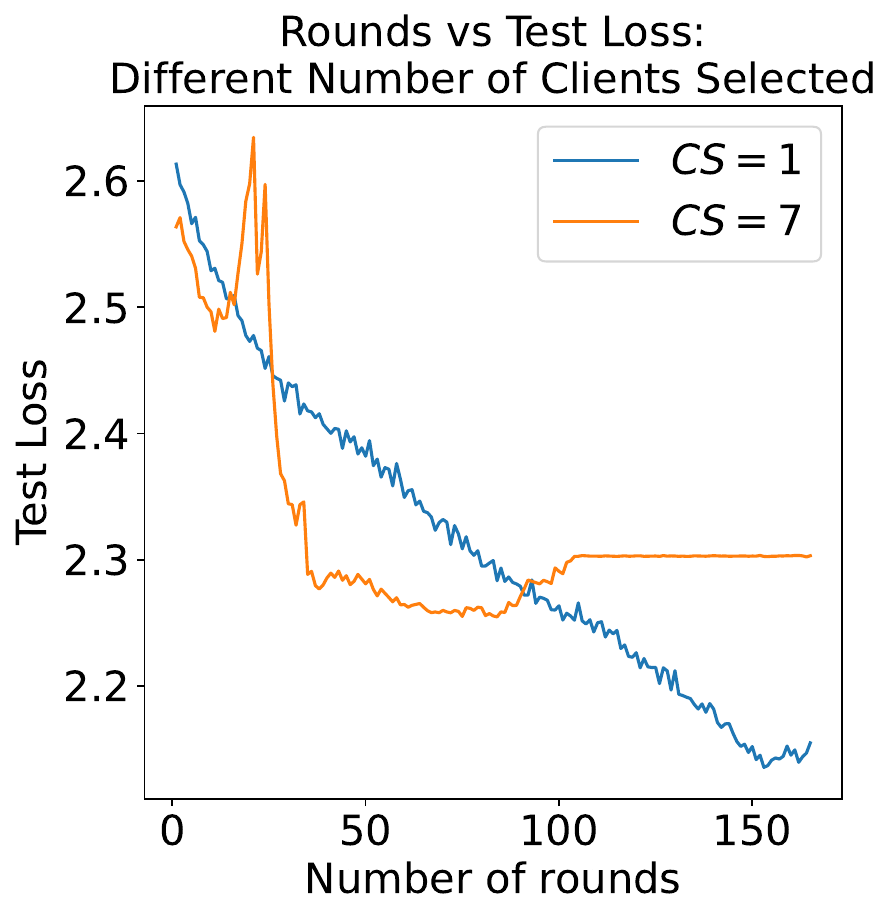}
        \caption{IID Comparison of Clients Selected}\label{fig:cs_iid_dpsgd}
    \end{subfigure}\hfill
    \begin{subfigure}[t]{.3\textwidth}
        \includegraphics[width=1.05\textwidth]{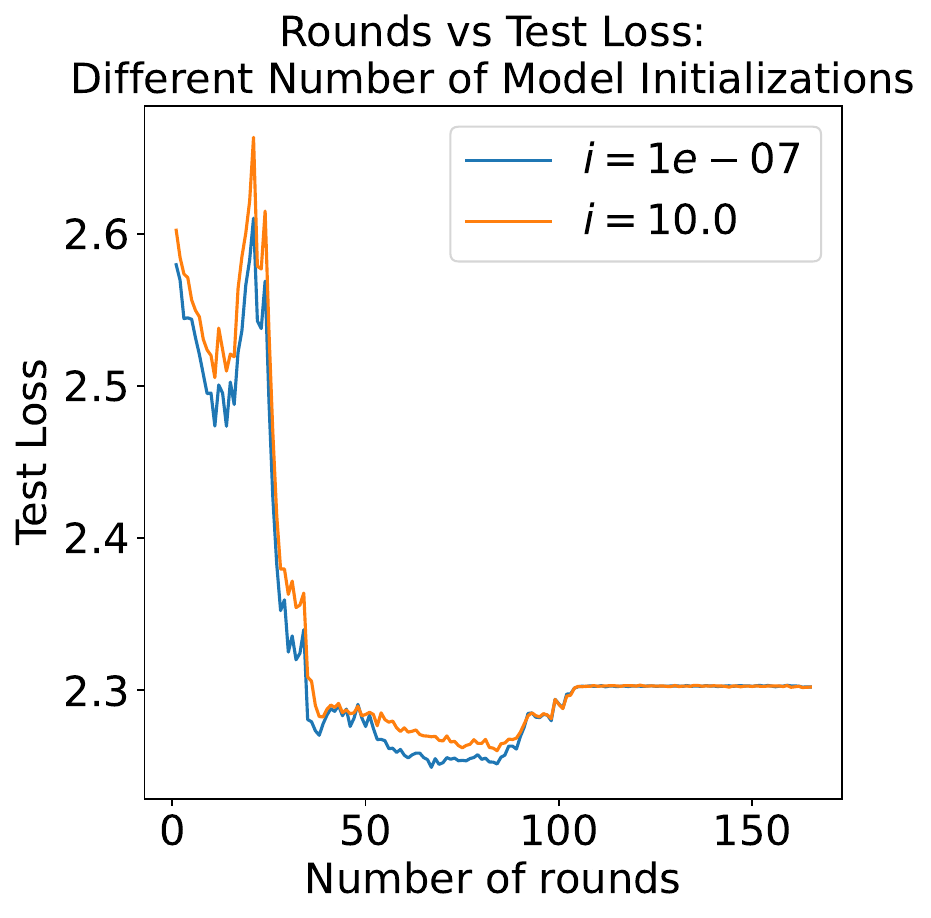}
        \caption{IID Comparison of Different Weight Initializations}\label{fig:init_iid_dpsgd}
    \end{subfigure}\hfill
    \vspace{7pt}
    \begin{subfigure}[t]{.3\textwidth}
        \includegraphics[width=\textwidth]{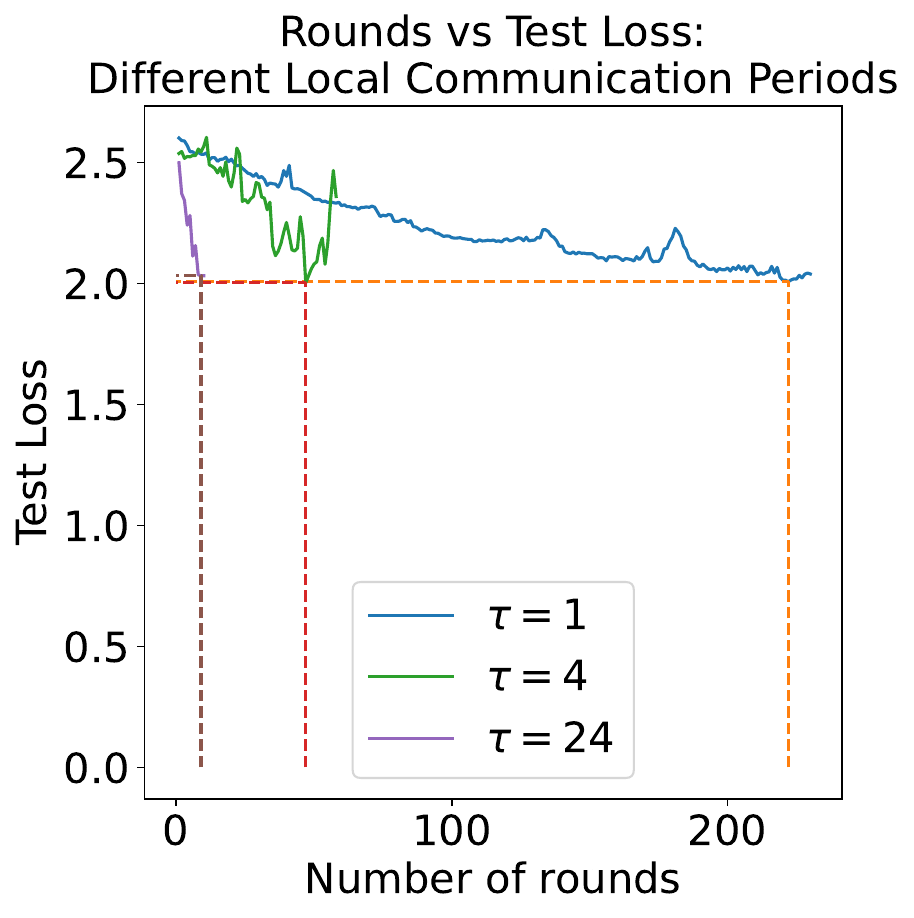}
        \caption{Non-IID Comparison of Communication Period}\label{fig:tau_niid_dir_dpsgd}
    \end{subfigure}\hfill
    \begin{subfigure}[t]{.3\textwidth}
        \includegraphics[width=\textwidth]{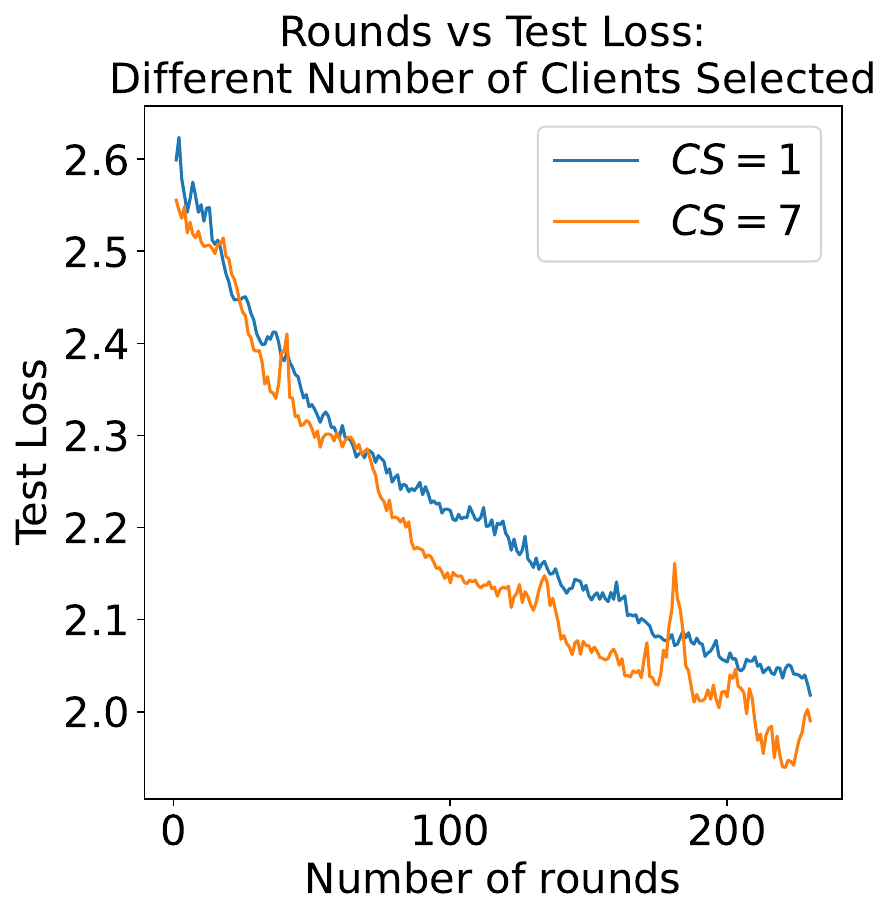}
        \caption{Non-IID Comparison of Clients Selected}\label{fig:cs_niid_dir_dpsgd}
    \end{subfigure}\hfill
    \begin{subfigure}[t]{.3\textwidth}
        \includegraphics[width=1.05\textwidth]{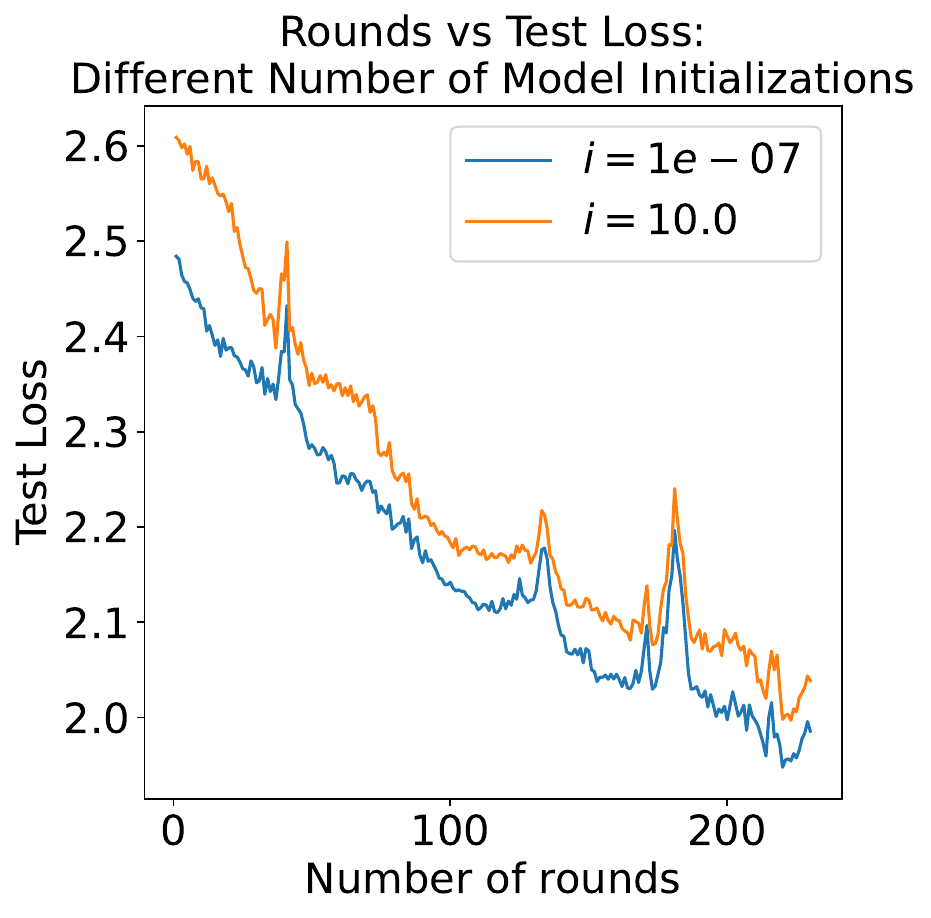}
        \caption{Non-IID Comparison of Different Weight Initializations}\label{fig:init_niid_dir_dpsgd}
    \end{subfigure}\hfill\vspace{12pt}
    \caption{Model convergence under different situations in both IID and non-IID scenario. The non-IID scenario is using a dirichlet distribution with $\alpha=0.6$. 7 clients were generally selected out of 8 (except Figure \ref{fig:cs_iid_dpsgd}, \ref{fig:cs_niid_dir_dpsgd}), and the communication period $\tau = 1$ (except Figure \ref{fig:tau_iid_dpsgd}, \ref{fig:tau_niid_dir_dpsgd}).} \label{fig:dpsgd}\vspace{20pt}
\end{figure}


\subsection{Loss Graphs for Fully Synchronous SGD} \label{exp:fsync}

We run studies on fully synchronous SGD and showcase the results in Figure \ref{fig:fsync}. We deviate from our previous experiments and run the above only for 3 epochs in the IID case and the Non-IID case. This was because we found that running it for more than 3 epochs led to a scenario where D-PSGD started overfitting for the data, leading to large loss value. We showcase for the IID and the Dirichlet non-IID case, with $\alpha = 0.6$.

\subsection{Graphs for D-PSGD} \label{exp:dpsgd}

We conduct experiments for D-PSGD and show our graphs here in Figure \ref{fig:dpsgd}. Unlike previous experiments, we run the ones above only for 3 epochs in the IID case and 5 epochs in the Non-IID case as running it for larger epoch values led to  overfitting for the data similarly to fully synchronous SGD. We showcase for the IID and the Dirichlet non-IID case, with $\alpha = 0.6$.



\end{document}